\documentclass[letterpaper]{article} 
\usepackage[preprint]{aaai2027}  
\usepackage[hyphens]{url}  
\usepackage{graphicx} 
\usepackage{natbib}  
\usepackage{caption} 
\usepackage{algorithm}
\usepackage{algorithmic}
\usepackage{scalerel}

\usepackage{newfloat}
\usepackage{listings}
\DeclareCaptionStyle{ruled}{labelfont=normalfont,labelsep=colon,strut=off} 
\floatstyle{ruled}
\newfloat{listing}{tb}{lst}{}
\floatname{listing}{Listing}

\usepackage{booktabs}

\nocopyright

\newcommand{\paperid}{9709}
\newcommand{\papertitle}{Recurrent Graph Neural Networks with Set-Based Aggregation}
\title{\papertitle}
\author{Blai Bonet}

\affiliations{
  Universidad Sim\'on Bol\'{\i}var \\
  Caracas, Venezuela \\
  \url{bonetblai@gmail.com}

}

\usepackage{amsmath,amssymb,amsthm,bm}
\usepackage{euscript}
\usepackage{mathrsfs}
\usepackage{mathtools}
\usepackage{booktabs}
\usepackage{enumitem}
\usepackage{xspace}
\usepackage{multirow}
\usepackage{relsize}
\usepackage{scalefnt}
\usepackage{makecell}

\usepackage[appendix=append, bibliography=common]{apxproof}

\makeatletter
\newcommand{\myappendixheader}[1]{%
  \par\vspace{2em}
  \noindent\textbf{\Large #1}
  \par\vspace{1em}
  \@afterheading%
}
\makeatother

\usepackage{tocloft}
\newcommand{\Omit}[1]{}
\newcommand{\tup}[1]{\ensuremath{\langle #1 \rangle}\xspace}
\newcommand{\set}[1]{\ensuremath{\{ #1 \}}\xspace}

\newcommand{\bracket}[1]{\ensuremath{[\![ #1 ]\!]}\xspace}

\newcommand{\powerset}{\raisebox{.15\baselineskip}{\ensuremath{\wp}}}

\newcommand{\R}{\ensuremath{\mathbb{R}}}

\newtheoremrep{theorem}{Theorem} 
\newtheoremrep{lemma}[theorem]{Lemma}
\newtheoremrep{corollary}[theorem]{Corollary}
\newtheoremrep{proposition}[theorem]{Proposition}
\newtheoremrep{definition}[theorem]{Definition}
\newtheoremrep{remark}[theorem]{Remark}
\newtheoremrep{example}[theorem]{Example}
\newtheoremrep{openquestion}[theorem]{Open Question}
\newtheoremrep{conjecture}[theorem]{Conjecture}

\newtheoremrep{innerlemma}{Lemma}

\newtheorem{apxdefinition}{Definition}
\newtheorem{apxtheorem}[apxdefinition]{Theorem}
\newtheorem{apxproposition}[apxdefinition]{Proposition}

\colorlet{darkred}{red!80!black}
\colorlet{olive}{green!60!blue}
\newcommand{\alert}[1]{\textcolor{darkred}{#1}}

\newcommand{\param}{\ensuremath{\theta}\xspace}

\newcommand{\ReLU}{\mathrm{ReLU}}
\newcommand{\CLIP}[2]{\mathrm{clip}_{[#1,#2]}}
\newcommand{\clip}{\CLIP{0}{1}}
\newcommand{\Net}{\mathrm{Net}}

\newcommand{\types}[1]{\scalebox{0.9}{$\mathsf{#1}$}}
\newcommand{\atoms}{\types{ATP}}
\newcommand{\atp}{\mathsf{atp}}
\newcommand{\TP}{\types{TP}}
\newcommand{\tp}[1]{\TP^{(#1)}}
\newcommand{\ftype}[1]{\tau^{(#1)}}

\newcommand{\Embeddings}{\ensuremath{\R^W}\xspace}
\newcommand{\EF}[1]{\mathbb{E}_{#1}}

\newcommand{\tpw}{\TP^{(\omega)}}
\newcommand{\itw}{\bar{\bm{t}}}
\newcommand{\itype}{\tau}

\newcommand{\initial}[1]{\bm{e}_{#1}}
\newcommand{\Eop}[1]{F_{#1}}

\newcommand{\Table}{\mathcal{T}}
\newcommand{\GEmb}[1]{\bm{E}^{#1}}
\newcommand{\GEmbu}[2]{\bm{E}^{#1}_{#2}}

\newcommand{\FO}{\mathrm{FO}}
\newcommand{\ifp}[2]{[ \mathrm{\bf ifp}_{#1} #2(#1) ]}
\newcommand{\Au}{\ensuremath{\mathfrak{A}}}

\newcommand{\maxnbr}[1]{\mathrm{maxnbr}_{#1}}
\newcommand{\curr}[1]{\mathrm{curr}_{#1}}

\renewcommand{\S}{\ensuremath{\mathcal{S}}\xspace}

\newcommand{\T}{\ensuremath{\mathcal{T}}\xspace}

\newcommand{\V}{\ensuremath{\mathcal{V}}\xspace}
\Omit{

  \newcommand{\T}{\ensuremath{\mathcal{T}}\xspace}

}

\newcommand{\bTheta}{\boldsymbol{\Theta}}

\newcommand{\Lmu}{\ensuremath{\text{L}\mu}\xspace}

\newcommand{\mybox}{\scalebox{0.80}{$\Box$}\xspace}
\newcommand{\mydiamond}{\raisebox{1pt}{\scalebox{0.80}{$\Diamond$}}\xspace}
\newcommand{\sd}{\ensuremath{\Sigma^{\diamond}_{1}}\xspace}   
\newcommand{\pb}{\ensuremath{\Pi^{\scalebox{0.50}{$\Box$}}_{1}}\xspace}          
\newcommand{\bsd}{\ensuremath{\mathrm{\bf B}\sd}\xspace}      
\newcommand{\ratchet}{\textsc{Ratchet}\xspace}

\begin{document}

\maketitle

\begin{abstract}
  Recurrent GNNs iterate message passing to convergence,
  and their logical characterizations to date rely on multiset aggregation, graded
  (counting) logics, and halting or acceptance conditions that cannot be verified
  from the network's parameters.
  We study recurrent GNNs with set-based aggregation and identify sufficient conditions
  checkable from the weights for networks to compile into formulas and formulas into networks.
  The main result is an effective, two-directional equivalence between a class of
  networks and the Boolean closure of reachability and safety properties, the
  fragment \bsd of the modal $\mu$-calculus.
  The fragment is not an artifact: it is the exact expressive level of
  stabilization over finite vocabulary, which supports fixed points of a single polarity
  and Boolean combinations thereof, but not the composition of fixed points of opposite
  polarities. 
  The correspondence needs no counting logic, no external halting signal, and no
  non-effective acceptance condition, yielding a verifiable path from weights to
  symbolic explanations for networks meeting the conditions.
\end{abstract}

\begin{toappendix}
  \section{Suppl.\ Material for Paper \#\paperid: \papertitle}
  \bigskip
\end{toappendix}

\section{Introduction}

Graph neural networks (GNNs) \cite{gori:gnn} admit a precise reading as logic: a trained network can
be compiled into formulas that describe what it computes, and formulas can be compiled
into networks that realize them.
For depth-bounded nets this correspondence is well understood, with first-order logic
with counting characterizing exactly what such networks compute
\cite{xu:gnn,morris:gnn,barcelo:gnn,grohe2024descriptive}.
For recurrent GNNs, which iterate message passing to convergence rather than for a fixed
number of layers, the picture is incomplete.
Existing characterizations \cite{pflueger2024:recurrent,Ahvonenetal2024,bollen2025:halting,rosenbluth2026repetition}
are built for multiset aggregation, and their target logics, graded modal logic and the
graded $\mu$-calculus, inherit its ability to count.
This power is not free: the correspondences rest on explicit halting classifiers
or on acceptance conditions that cannot be verified from the network's weights,
and for the graded $\mu$-calculus, the setting with genuine alternation,
the converse direction is known only conditionally
\cite{bollen2025:halting,bollen2026:halting-vs-converging}.

We study recurrent GNNs with set-based aggregation, such as max and min, 
and show that this restriction yields what the multiset
setting so far lacks: a complete, effective, two-directional correspondence with a fixed-point
logic, under conditions checkable from the weights.
The price is counting and alternation, and we prove the price is unavoidable. 
Concretely, we identify regularity, a semantic condition under which a converging net's
fixed-point embedding at a vertex depends only on its atomic type and its neighbors' fixed points,
not on the transient trajectory; the vocabulary, the set of embeddings reachable at convergence,
which can be finite even when the transients are infinite in number; and admissibility,
under which the net's run can be traded for a finite dynamical system (the game) iterating
directly over the vocabulary.
Converging regular nets with finite vocabulary whose game is admissible, ordered, and monotone
are called \ratchet nets.
Our contributions are the following:
\begin{enumerate}[leftmargin=*, label=$\bullet$]
  \item A checkable framework for compiling recurrent nets.
    We introduce regularity, vocabulary, and admissibility, and give sufficient conditions
    verifiable from the weights: contraction (Theorem~\ref{thm:contractive}) and monotonicity
    (Theorem~\ref{thm:monotone}).
    Unlike the lim-inf and ever-reached acceptance conditions of prior work, membership in the
    resulting class does not depend on non-effective properties of infinite runs.
  \item A general-purpose compiler into fixed-point logic.
    Every regular net with finite vocabulary and an admissible, ordered game compiles into a
    simultaneous inflationary fixed-point (IFP) formula of size linear in the net's quotient table,
    with no monotonicity required (Theorem~\ref{thm:rec-compiler}).
  \item A two-way correspondence.
    \ratchet nets compute exactly the vertex queries definable in \bsd, the Boolean closure of
    reachability (least fixed points over $\mydiamond$) and safety 
    (greatest fixed points over $\mybox$) in the modal $\mu$-calculus, with effective translations
    in both directions (Theorems~\ref{thm:net-to-logic} and \ref{thm:logic-to-net}).
    The net-to-logic direction answers, for this counting-free fragment, the converse question
    that \citet{bollen2025:halting} leave open in the graded setting; the logic-to-net direction
    is the counterpart of their Theorem 5.1, obtained without their halting classifier or counting
    apparatus.
  \item A barrier showing \bsd is necessary. 
    No converging regular net with finite vocabulary and admissible game,  ordering and monotonicity
    are not even assumed, computes $\nu Y.\mydiamond Y$, $\mu X.\mybox X$, or the alternation-free $\text{EF EG }p$
    (Proposition~\ref{prop:barrier}).
    Stabilization over finite vocabulary supports fixed points of a single polarity and their Boolean
    combinations, but cannot certify that an inner fixed point of opposite polarity has stabilized.
    Indeed, \bsd omits queries from the first level of the alternation hierarchy and is
    strictly contained in the alternation-free fragment (Corollary~\ref{cor:separation}).
  \item Two implemented compilers, net-to-logic and logic-to-net, that map \ratchet nets
    to \bsd, and back, together with an extensive experimental validation.
\end{enumerate}

The restriction to set-based aggregation is not arbitrary. Aggregators such as max and min are precisely
the empirically preferred choice in neural algorithmic reasoning: max-aggregation message passing was
found best suited for learning to execute classical graph algorithms and for size generalization
\cite{velickovic2020neural,morris:compute};
this preference has been explained  \cite{xu2020what,xu2021how,dudzik2022gnns},
and it is the aggregation adopted by the generalist algorithmic learner of
\citet{ibarz2022generalist}.
Our results give this architectural choice logical meaning.

The correspondence locates set-based recurrent GNNs within the landscape of stopping semantics.
A fixed point built on top of one of the opposite polarity requires a certificate that the inner
computation has stabilized: supplied externally by a halting classifier \cite{bollen2025:halting},
or manufactured internally by counters that grow with the graph \cite{bollen2026:halting-vs-converging}.
Proposition~\ref{prop:barrier} shows that with neither resource available, no certificate exists,
and \bsd is exactly what remains; details below. 

\Omit{
  The correspondence locates set-based recurrent GNNs precisely within the landscape of stopping
  semantics. A fixed point built on top of one of the opposite polarity requires a certificate that
  the inner computation has stabilized. \citet{bollen2025:halting} supply this certificate externally,
  through a halting classifier driven by a counting algorithm; \citet{bollen2026:halting-vs-converging}
  show that, over undirected graphs, a converging run can synthesize it internally, using counters that
  grow with the graph. Proposition~\ref{prop:barrier} shows that when neither resource is available
  (stabilization over a finite vocabulary and the regime of set-based aggregation) no certificate exists,
  and \bsd, where fixed points of a single polarity nest freely and combine Booleanly, is exactly
  what remains.
}

\Omit{
  The two-way correspondence sharpens the observation of \citet{bollen2025:halting}, who note that without
  a global halting condition, nested or alternating fixpoints are out of reach, and whose construction
  explicitly tracks the stability of fixed-point approximations during the run;
  a certificate becomes indispensable exactly at every composition of fixed points
  of opposite polarity, and \bsd is provably what remains without one (Proposition~\ref{prop:barrier}).
  Under stabilization semantics with set-based aggregation, fixed points of a single polarity may be
  nested freely and combined with Boolean connectors with no certificate at all.
}

\Omit{
  The correspondence explains the halting machinery of \cite{bollen2025:halting}:
  it supplies, during the run, a certificate that a fixed-point stage has stabilized.
  Such a certificate is needed at every composition of fixed points of opposite polarity,
  not only for genuine alternation, and \bsd is the exact expressive level available when no certificate exists.
  This places set-based recurrent GNNs at the counting-free, composition-free corner of the
  recurrent-GNN/fixed-point-logic landscape: a self-contained, elementary equivalence that complements,
  rather than competes with, results trading simplicity for the extra power of counting or alternation.
}

\Omit{ 
  The paper proceeds as follows. We first fix the GNN model, vertex types, and a net-to-logic translation
  for depth-bounded nets. We then develop the recurrent case, regularity, the vocabulary, and the game,
  with contractive and monotone sufficient conditions, the IFP compiler, and worked examples.
  The central section establishes the equivalence with \bsd and the barrier.
  We close with related work and conclusions.
}

Roadmap. After fixing the model and vertex types, 
we develop the recurrent case, \ratchet nets, and the IFP compiler. 
The central section establishes the equivalence with \bsd and the barrier.
We close with related work and conclusions.

\section{GNNs with Set-Based Aggregation}
\label{sec:gnns}

Graphs are denoted with the letters $x, y, \ldots$, and
vertices with $u, v, \ldots$.
The vertex and edge set for graph $x$ are denoted by $V(x)$ and $E(x)$
respectively, $E(x,u,v)$ is the edge relation (or $E(u,v)$ when
$x$ is clear from context), and $N(u) = N(x,u) = \{v \in V(x) \mid E(x,u,v)\}$
is vertex $u$'s neighborhood.
A finite and non-empty set $\atoms$ of \emph{atomic types} that
partitions the vertices on every graph is \textbf{fixed throughout:}
$\atp(u)$ is the atomic type of vertex $u$.
A graph $x$ is a relational structure with universe $V(x)$,
edge relation $E$, and unary relations $\set{ \alpha(\cdot) \mid \alpha\,{\in}\,\atoms}$.
A \emph{pointed} graph is a pair $(x,u)$ for 
graph $x$ and distinguished $u\,{\in}\,V(x)$.

A feature or \emph{embedding} is a vector in $\Embeddings$ whose
components are called \emph{registers}.
For atomic type $\alpha$ and layer index $\ell$, the GNN architecture fixes:
\begin{enumerate}[leftmargin=*, label=$\bullet$]
  \item an \emph{initial embedding} $\initial\alpha \in \Embeddings$,
  \item weight tensors $\bm{A}_{\alpha,\ell}, \bm{B}_{\alpha,\ell} \in \R^{W{\times}W}$
    and $\bm{C}_{\alpha,\ell} \in \Embeddings$,
  \item combination function $\oplus_\ell: \powerset(\Embeddings) \to \Embeddings$
    that maps \textbf{finite} subsets to embeddings, and
  \item component-wise \emph{activation functions} $\rho_\ell : \R \to \R$.
\end{enumerate}
For simplicity, we assume architectures with equal combination and activation
functions across layers, but this is not an essential distinction.
For activation functions, we consider $\ReLU(x)\,{=}\,\max(0,x)$
and the \emph{truncated ReLU} (or clipped linear unit) $\clip(x)\,{=}\,\min(1, \max(0,x))$.
These tensors and functions make up the parameter $\param$ of the net.

The central object is the one-step update at layer $\ell$:
\begin{equation}
  \label{eq:Net}
  \Net_\ell(\alpha, z, S) \;=\;
    \rho\!\left( \bm{A}_{\alpha,\ell}\, z \,+\, \bm{B}_{\alpha,\ell}\oplus\!S \,+\, \bm{C}_{\alpha,\ell} \right)
\end{equation}
where $\alpha\,{\in}\,\atoms$ is an atomic type, $z\,{\in}\,\Embeddings$ is the
current embedding of a vertex, $S\,{\subseteq}\,\Embeddings$ is the (finite) set of
neighbor embeddings, and $\oplus\!S$ is their aggregation,
with the convention $\oplus\emptyset = \mathbf{0}$ (the zero vector),
so that~\eqref{eq:Net} is well defined for isolated vertices.
This is a concrete architecture choice but others are also compatible with our results.

In the depth-bounded or non-recurrent case, the net is defined
for a fixed number $L$ of layers. The net iterates the updates synchronously
over all vertices.
Starting from $u^{(0)} = \initial{\atp(u)}$,
the embedding of vertex $u$ in graph $x$ at layer $\ell{+}1$ is
\[ u^{(\ell+1)} \;=\; \Net_\ell\bigl( \atp(u),\, u^{(\ell)},\, \{v^{(\ell)} \mid E(x,u,v)\} \bigr)\,. \]
We write $u^{(\ell)}_{x,\theta}$ when the dependence on the graph and
parameter must be made explicit.
In the recurrent case, we assume equal weight tensors across layers
and denote the one-step function simply as $\Net(\alpha,z,S)$.
The architecture is \emph{converging} if
$u^{(\ell)}_{x,\theta}$ converges in $\Embeddings$ as $\ell\to\infty$,
for every graph $x$ and vertex $u$, in which case the limit is
denoted as $u^*_{x,\theta}$.
Convergence of the iteration is non-trivial and depends on the
interaction between the weight matrices, the activation function, and
the graph structure.

For simplicity, the recurrent net iterates a single-layer block. 
Our results extend to $L$-layer recurrent blocks, where an entire iteration
of the block 
replaces a single-step update.

Notice that graphs are directed: $E$ is an arbitrary binary relation, $N(u)$
is the set of $E$-successors of $u$, and no symmetry is assumed; undirected
graphs are the special case of symmetric $E$, so all positive results below
apply to them.

\section{Finite Types and Their Embeddings}
\nosectionappendix
\begin{toappendix}
  \myappendixheader{Appendix for section: Finite Types and Their Embeddings}
\end{toappendix}

The vertex types correspond to the different colors obtained by 1-WL under
\emph{set aggregation} \cite{weisfeiler1968reduction,Immerman92:C2-WL,Grohe17:descriptive}.
Recall that $\atoms$ is the fixed set of atomic types.

\begin{definition}[Types]
  \label{def:types}
  \label{def:vertex-types}
  The sequence $\tup{ \tp\ell}_{\ell\geq0}$ of types is:
  $\tp0\,{=}\,\atoms$, and $\tp{\ell+1}\,{=}\,\set{ (t, S) \mid t\,{\in}\,\tp\ell, S\,{\subseteq}\,\tp\ell }$.
  \Omit{
  \begin{alignat*}{1}
    \tp0        &=  \atoms \,, \\
    \tp{\ell+1} &=  \set{ (t, S) \mid t \in \tp\ell, S \subseteq \tp\ell } \,.
  \end{alignat*}
}
  The \emph{atomic type} of type $t$ is $\atp(\alpha)=\alpha$ for $\alpha\in\atoms$,
  and $\atp(t)=\atp(t')$ for $t=(t',S')\in\tp\ell$ with $\ell\geq 1$.

  The (finite) type $\ftype{\ell}(x,u)$ for pointed graph $(x,u)$ and $\ell\geq0$ is
  given by $\ftype0(x,u)\,{=}\,\atp(u)$, and $\ftype{\ell+1}(x,u)\,{=}\,(t,S)$ iff
  $\ftype\ell(x,u)\,{=}\,t$ and $S\,{=}\,\set{ \ftype\ell(x,v) \mid E(u,v) }$.
\end{definition}

\Omit{
  \begin{definition}[Vertex types]
    \label{def:vertex-types}
    Let $(x,u)$ be a pointed graph.
    The vertex types $\ftype\ell(x,u)$, for $\ell\geq0$, are the following:
    $\ftype0(x,u)=\atp(u)$, and 
    $\ftype{\ell+1}(x,u)=(t,S)$ iff $\ftype\ell(x,u) = t$ and $S = \set{ \ftype\ell(x,v) \mid E(u,v) }$.
  \end{definition}
}

The types at layer $\ell$ correspond to the equivalence classes of an $\ell$-bisimulation
relation under set-based aggregation: two vertices of the same type obtain the same
embeddings regardless of the architecture.
Hence, the embeddings that a depth-bounded net with set-based aggregation
can generate are finite and determined by the types:

\begin{definition}
  \label{def:type-embedding}
  The embedding function $\EF{t}: \bTheta \rightarrow \Embeddings$, one per type $t$,
  maps parameters into embeddings:
  \begin{enumerate}[leftmargin=*, label=\arabic*.]
    \item for $\alpha\,{\in}\,\atoms$, $\EF{\alpha}(\param) = \initial\alpha$
      is a constant function of $\param$, and
    \item for $(t, S)$ in $\tp{\ell+1}$ where $t$ is of atomic type $\alpha$,
      $\EF{(t, S)}(\param)\,{=}\,\Net_{\ell}(\alpha, \EF{t}(\param), \set{ \EF{t'}(\param) \mid t' \in S } \bigr)$.
  \end{enumerate}
  When $\param$ is clear from context, we write $\EF{t}$ instead of $\EF{t}(\param)$.
\end{definition}

\begin{theoremrep}
  \label{thm:embedding-determination}
  Let $\param$ be a parameter,
  let $(x,u)$ be a pointed graph, and let $\ell\geq0$ be a layer index within the architecture.
  Then, $u^{(\ell)}_{x,\theta}\,{=}\,\EF{t}(\theta)$ for $t=\ftype\ell(x,u)$.
\end{theoremrep}
\begin{proof}
  By direct induction on the layer index $\ell\geq0$.
\end{proof}


\subsection{Compiling Depth-Bounded Nets}

A direct byproduct of the relation between types and embeddings is
a ``compiler'' that maps nets into logical formulas that characterize
the embeddings. 
%
The idea is to have logical formulas that decide the types, 
and use the parameter-to-embedding function to logically characterize the net.
As the embeddings are real vectors, we settle for a logical characterization
that test whether a register of the final embedding is greater than zero,
but other criteria are possible.

\begin{definition}
  \label{def:type-formula}
  For each type $t$, there is formula $\tau_t(z)$ that holds in pointed graph
  $(x,u)$ iff $\ftype\ell(x,u)\,{=}\,t$ where $\ell$ is the layer of $t$:
  \begin{enumerate}[leftmargin=*, label=$\bullet$]
    \item 
      $\tau_\alpha(z) = \alpha(z)$, and
    \item 
      $\tau_{(t, S)}(z) =
          \tau_t(z) \land
          \bigl[ \forall w \bigl( E(z,w) \,{\implies}\, \bigvee_{t'{\in}S} \tau_{t'}(w) \bigr) \bigr] \land
          \bigl[ \bigwedge_{t'{\in}S} \exists w \bigl( E(z,w) \land \tau_{t'}(w) \bigr) \bigr]$.
  \end{enumerate}
\end{definition}

Per definition, vertex $u$ satisfies $\tau_{(t,S)}$ iff its type is $t$, each neighbor
has a type in $S$, and each type in $S$ is witnessed by some neighbor.
A logical characterization of embeddings follows whose soundness is established by 
induction on $\ell$:

\begin{definition}
  Let $\param$ be a parameter for the architecture $(W,L,\atoms)$ of width $W$, depth $L$,
  and atomic types in $\atoms$. 
  The logical characterization of the embeddings at layer $\ell$ is the \textbf{vector}
  \[ \Psi^{(\ell)}(z)\,{=}\,\tup{\psi^{(\ell)}_1(z), \ldots, \psi^{(\ell)}_W(z)} \]
  where
  $\psi^{(\ell)}_i(z) \,{=}\,
      \bigwedge_{t \in \tp\ell} \bigl( \tau_t(z) \,{\implies}\, \bracket{ [ \mathbb{E}_t(\param) ]_i \,{>}\, 0 } \bigr)$.
  \Omit{
    Likewise, the \textbf{vector}
    \[ \Phi^{(\ell)} = \tup{\phi^{(\ell)}_1, \ldots, \phi^{(\ell)}_W} \]
    where
    $\phi^{(L)}_i \,{=}\,
        \bigwedge_{S \subseteq \tp\ell} ( \gamma_S \,{\Rightarrow}\, \bracket{ [ \mathbb{G}_S(\param) ]_i \,{>}\, 0 } )$
    characterizes the graph embeddings at layer $\ell$.
  }
\end{definition}

\begin{theoremrep}[Compiler]
  \label{thm:bdd-compiler}
  Let $\theta$ be a parameter of the architecture $(W,L,\atoms)$.
  Then, $[u^{(\ell)}_{x,\param}]_i\,{>}\,0$ iff $x \vDash \psi^{(\ell)}_i(u)$.
\end{theoremrep}
\begin{proof}
  By direct induction on the layer index $\ell\geq0$.
\end{proof}

The size of $\tp\ell$ is a tower of exponentials, 
related to the number of 
Hintikka-style normal forms of modal formulas of modal depth $\leq\ell$,
supporting the blow-up empirically reported \cite{tenacucala2024bridging,tena2022explainable}
for similar compilations.

\section{Recurrent Nets}
\nosectionappendix
\begin{toappendix}
  \myappendixheader{Appendix for section: Recurrent Nets}
\end{toappendix}

Getting a characterization in the recurrent case is harder:
the net converges in a number of steps that is not bounded, and the
set of types the net may see as well as the number of fixed-point
embeddings it may reach are also unbounded.

The embedding for pointed graph $(x,u)$ is the
limit $u^*_x$,  which by Theorem~\ref{thm:embedding-determination},
equals $\lim_{\ell\to\infty} \EF{\ftype\ell(x,u)}$.
Its \textbf{infinite type} $\itype(x,u)$ 
is the sequence $\tup{\ftype\ell(x,u)}_{\ell\geq0}$ of finite types,
while the function 
$\Phi(\cdot)$ maps infinite types 
into (limit) embeddings:
$\Phi(\itype(x,u)) \doteq \lim_{\ell\to\infty} \EF{\ftype\ell(x,u)}$.
An infinite type is an equivalence class for a bisimulation
that uses set-based aggregation of neighbor types.
The set $\tpw=\set{ \itype(x,u) \mid \text{$(x,u)$ is pointed graph} }$
is the set of infinite types. 
For converging nets, $\Phi$ is a well-defined and total function
on $\tpw$. The following is direct:

\begin{lemmarep}[Bisimulation invariance]
  \label{lemma}
  Let $(x,u)$ and $(x',u')$ be two pointed (finite) graphs.
  Then, $(x,u)$ and $(x',u')$ are bisimilar, written as $(x,u)\sim(x',u')$,
  iff $\itype(x,u)=\itype(x',u')$.
  Hence, if $(x,u)\sim(x',u')$, then $\Phi(\itype(x,u))=\Phi(\itype(x',u'))$.
\end{lemmarep}
\begin{proof}
  A bisimulation between $x$ and $x'$ is a relation $Z\subseteq V(x)\times V(x')$
  such that for every $(u, u') \in Z$:
  \begin{enumerate}[leftmargin=*, label=$\bullet$]
    \item (atoms) $\atp(u) = \atp(u')$;
    \item (forth) for every $v$ with $E(x, u, v)$ there is $v'$ with $E(x', u', v')$ and $(v, v') \in Z$; and
    \item (back) symmetrically.
  \end{enumerate}
  The pointed structues $(x,u)$ and $(x',u')$ are bisimilar, $(x,u)\sim(x',u')$, iff
  there is a bisimulation $Z\subseteq V(x)\times V(x')$ such that $(u,u')\in Z$.
  We use the fact that types refine: $\ftype{\ell+1}(x,u)$ determines $\ftype{\ell}(x,u)$,
  as the latter is the first component of the former.
  Hence agreement of types at level $\ell$ implies agreement at of types at every level below $\ell$.

  \medskip\noindent
  $(\Rightarrow)$. Let $Z$ be a bisimulation. 
  We show by induction on $\ell$ that $\ftype{\ell}(x,u) = \ftype{\ell}(x',u')$
  for every $(u,u') \in Z$.
  The base case $\ell=0$ is direct by the atom condition: $\ftype{0}(x,u)=\atp(u)=\atp(u')=\ftype{0}(x',u')$.
  For the inductive step, write $\ftype{\ell+1}(x,u) = (\ftype{\ell}(x,u), S)$
  and $\ftype{\ell+1}(x',u') = (\ftype{\ell}(x',u'), S')$.
  The first components agree by inductive hypothesis.
  For every $v$ with $E(x, u, v)$, the forth condition yields $v'$ with
  $E(x', u', v')$ and $(v, v') \in Z$, so $\ftype{\ell}(x,v) = \ftype{\ell}(x',v')$
  by the inductive hypothesis; hence $S \subseteq S'$.
  The back condition gives $S' \subseteq S$.
  Thus $\ftype{\ell+1}(x,u) = \ftype{\ell+1}(x',u')$.
  Therefore, if $(u,u') \in Z$, then $\itype(x, u) = \itype(x',u')$.

  \medskip\noindent
  $(\Leftarrow)$. Define $Z = \set{(u,u') \in V(x) \times V(x') \mid \itype(x,u) = \itype(x',u')}$.
  We verify $Z$ is a bisimulation.
  Let $(u, u') \in Z$; i.e., $\itype(x,u)=\itype(x',u')$.
  The (atoms) condition holds since $\atp(u)=\ftype{0}(x,u) = \ftype{0}(x',u')=\atp(u')$.
  For (forth), let $v$ such that $E(x,u,v)$.
  For each $\ell\geq0$, equality $\ftype{\ell+1}(x,u) = \ftype{\ell+1}(x',u')$
  gives equality of the neighbor-type sets at level $\ell$,
  so there is a neighbor $v'_\ell$ of $u'$ with $\ftype{\ell}(x',v'_\ell) = \ftype{\ell}(x,v)$.
  Since $v'$ has \emph{finitely many neighbors,} some fixed neighbor $v'$ occurs as $v'_\ell$
  for infinitely many $\ell$.
  By refinement, $\ftype{\ell}(x',v') = \ftype{\ell}(x,v)$ then holds for every $\ell\geq 0$;
  i.e., $\itype(x',v') = \itype(x,v)$ and thus $(v,v')\in Z$.
  The (back) condition is symmetric.

  \medskip
  Finally, $\Phi$ is by definition a function of the infinite type,
  $\Phi(\itype(x,u)) = \lim_{\ell\to\infty} \EF{\ftype{\ell}(x,u)}$.
  Thus, bisimilar (finite) pointed graphs, having equal infinite types, receive equal limit embeddings.
\end{proof}

At convergence, the net reaches an equilibrium where the vertices $u$
of the graph satisfy the system:
\[ u^* = \Net(\atp(u), u^*, \set{ v^* \mid E(x,u,v)}) \,. \] 
As the neighbors ``shield'' $u$ from other vertices in the graph,
it is natural to expect for $u^*$ to be \emph{determined} by its atomic type $\atp(u)$
and $S=\set{ v^* \mid E(x,u,v)}$. 
We interpret determination in this context as $u^* = \lim_{\ell\to\infty} z_\ell$
where the sequence $\tup{z_\ell}_{\ell\geq 0}$ satisfies $z_0 = \initial\alpha$
and $z_{\ell+1} = \Net(\alpha, z_\ell, S)$ for $\alpha=\atp(u)$.
Another notion asks for $u^*$ to be the \emph{unique fixed point} of
$z \mapsto \Net(\alpha, z, S)$, but this is too strong
and doesn't cover interesting cases. 
However, even this weaker notion is not always satisfied,
and thus motivates: 

\begin{definition}[Regular nets]
  \label{def:regular}
  Let $\Net(\cdot, \cdot, \cdot)$ be a \textbf{converging} recurrent architecture
  for the atomic types in $\atoms$.
  The net is \textbf{regular} iff for every pointed structure $(x,u)$ of type $\alpha$:
  \[ \Phi(\itype(x,u)) = \Eop\alpha(S) \doteq \lim_{\ell\to\infty} z_\ell \]
  where $S = \set{ \Phi(\itype(x,v)) \mid E(x,u,v) }$, and $\tup{z_\ell}_{\ell\geq0}$
  satisfies $z_0=\initial\alpha$ and $z_{\ell+1} = \Net(\alpha, z_\ell, S)$.
\end{definition}


\begin{theoremrep}
  \label{thm:bisimulation}
  Let $\Net(\cdot, \cdot, \cdot)$ be a regular net for the atomic types in $\atoms$,
  and let $(x,u)$ and $(x',u')$ be pointed graphs of the same atomic type $\alpha\in\atoms$.
  If $\set{\Phi(\itype(x,v)) \mid E(x,u,v)} = \set{\Phi(\itype(x',v')) \mid E(x',u',v')}$,
  then $\Phi(\itype(x,u)) = \Phi(\itype(x',u'))$.
\end{theoremrep}
\begin{proof}
  Direct by definition of regular nets.
\end{proof}

The \textbf{vocabulary} of the net is constructed by closing the initial
embeddings with the operators in $\set{ \Eop\alpha \mid \alpha \in \atoms }$.
For the sequence $\tup{\V_k}_{k\geq 0}$ where $\V_0=\set{\Eop\alpha(\emptyset) \mid \alpha\in\atoms}$
is the set of fixed-point embeddings for isolated vertices, and
$\V_{k+1} = \V_k \cup \set{ \Eop\alpha(S) \mid \alpha\in\atoms, S\subseteq\V_k}$,
the vocabulary $\V$ is the \textbf{closure} of $\bigcup_{k\geq 0} \V_k$, when all
these sets are defined; otherwise, $\V$ is undefined.
When defined, the set $\V$ is closed under $\Eop\alpha(S)$ for \textbf{finite}
subsets $S\subseteq\V$, and by taking limits since $\V$ is a (topologically) closed set.

Below there are conditions for a net to be regular with well-defined
vocabulary. In such a case, $\Phi$ factors the space of infinite types into
equivalence classes:
$\itw\sim\itw'$ iff $\Phi(\itw) = \Phi(\itw')$.
The quotient table $\Table: \atoms \times \powerset(\V) \to \V$
maps atomic types and \textbf{finite} subsets of embeddings into
embeddings as $(\alpha, S) \mapsto \Eop\alpha(S)$.
The embedding for pointed graph $(x,u)$ is
$\Phi(\itype(x,u))\,{=}\,\Table(\alpha, \Phi(S))$ for
$\alpha\,{=}\,\atp(u)$ and $S\,{=}\,\set{ \itype(x,v) \mid E(x,u,v) }$.
\emph{The table doesn't know about infinite types, only 
about atomic types and embeddings in the vocabulary.}

A crucial question is whether $\Phi(\itype(x,u))$ can be computed from
the table alone, without the need to construct types.

For fixed graph $x$ with $n$ vertices, we consider tensors
$\bm{E} \in \R^{nW}$ which contain embeddings $\bm{E}_u\in\Embeddings$
for each vertex $u$.
Let $\tup{\GEmb{k}}_{k\geq0}$ be the sequence
$\GEmbu0{u} \doteq \Eop\alpha(\emptyset)$ if $u$ is of atomic type $\alpha$, and
$\GEmbu{k+1}{u} \doteq \Table( \alpha, \set{ \GEmbu{k}{v} \mid E(x,u,v) } )$.
The sequence is well defined and $\GEmbu{k}{u} \in \V$
for each vertex $u$ and $k\geq 0$.
Does $\GEmbu{k}{u}\to\Phi(\itype(x,u))$ as $k\to\infty$?

If the vocabulary $\V$ is finite, the sequence $\tup{\GEmb{k}}_{k\geq0}$
is the \emph{trace of a cellular automaton} \cite{wolfram2002new} on
the graph $x$, where each vertex starts at a configuration determined by
its atomic type, and changes its configuration according to
the configuration of its neighbors as dictated by the table.

\begin{definition}[Dynamical systems]
  \label{def:game}
  Let $\Net(\cdot, \cdot, \cdot)$ be a regular net with vocabulary $\V$.
  \begin{enumerate}[leftmargin=*, label=$\bullet$] 
    \item 
      For graph $x$ with $n$ vertices, the \textbf{game} induced by $\Table$ on $x$ is the
      \emph{dynamical system} on $\V^n$:
      \[ \GEmbu0{u} = \Eop\alpha(\emptyset)\,, \  \GEmbu{k+1}{u} = \Table(\alpha, \set{ \GEmbu{k}{v} \mid E(x,u,v) }) \,. \]
      where $\alpha=\atp(u)$ is the atomic type for vertex $u$.
    \item 
      The game is \textbf{admissible for graph $x$} iff
      $\lim_{k\to\infty} \GEmbu{k}{u} = \Phi(\itype(x,u))$ for every vertex $u\in V(x)$,
      and it is \textbf{admissible} iff it is admissible for every graph.
    \item 
      The game is \textbf{ordered} iff there is an order $(\V,\preceq)$
      such that $\GEmbu{k}{u} \preceq \GEmbu{k+1}{u}$ for every graph $x$,
      vertex $u$ in $x$, and $k\geq 0$.
    \item 
      The game is \textbf{monotone} iff, writing $S \sqsubseteq S'$ for
      finite $S, S' \subseteq\V$ to mean every $\bm{e}\in S$ is
      $\preceq$-dominated by some $\bm{e}'\in S'$, the table satisfies
      $\T(\alpha,S) \preceq \T(\alpha,S')$ whenever $S \sqsubseteq S'$.
  \end{enumerate}
\end{definition}

For admissible tables over finite vocabulary, running the dynamical
system is equivalent to running the net. As the states of the dynamical
system correspond to the net's final embeddings, the dynamical system
may converge faster than the net.
For a graph of order $n$, an admissible dynamical system converges
within $|\V|^n$ steps, or $n(|\V|-1)$ steps if the game is ordered.
Contractive nets converge in the limit. 

A converging regular net with finite vocabulary whose game is
admissible, ordered and monotone is called a \ratchet net.
Corollary~\ref{corollary} shows that for
$\oplus\,{=}\,\max$, the induced game is monotone.
If the vocabulary is totally ordered by $\leq$, the size of
the table reduces to $\mathcal{O}(|\atoms|\cdot(|\V|+1))$.


\subsection{Contractive Nets}

The original recurrent GNN is a contractive net \cite{gori:gnn}, and the
regime persists in modern equilibrium models: implicit GNNs \cite{gu2020implicit}
enforce a weight-norm contraction bound by projection during training,
guaranteeing convergence by construction — a checkable condition of the same species as Theorem A1.
The architecture $\Net(\cdot, \cdot, \cdot)$ over the atomic types $\atoms$ is \textbf{contractive
for $\|\!\cdot\!\|$,} if there is $\lambda\in(0,1)$ such that for each 
$\alpha$,
embeddings $z,z' \in \Embeddings$, and finite subsets $S, S' \subseteq \Embeddings$:
\begin{alignat*}{1}
  \| \Net(\alpha, z, S) &- \Net(\alpha, z', S') \| \\
  &\qquad \leq \lambda \max ( \|z - z'\|, \|\oplus\! S - \oplus S'\|) \,,
\end{alignat*}
whereas it is \textbf{contractive in $z$} if is contractive when $S=S'$.
The aggregator $\oplus$ is \textbf{1-Lipschitz for $\|\!\cdot\!\|$,}
if for any two vertex-indexed sets $S\,{=}\,\set{ \bm{e}_v \mid v\,{\in}\,N }$
and $S'\,{=}\,\set{ \bm{e}'_v \mid v\,{\in}\,N }$:
$ \| \oplus\! S - \oplus S' \| \leq \max_{v \in N} \| \bm{e}_v - \bm{e}'_v \| $.
(Notice that $S$ and $S'$ may have different size even though both are indexed by $N$.)

\begin{theoremrep}[Contractive nets]
  \label{thm:contractive}
  Let $\Net(\cdot, \cdot, \cdot)$ be a recurrent net over the atomic types in $\atoms$,
  and let $\|\cdot\|$ be a norm in \Embeddings. Then:
  \begin{enumerate}[leftmargin=*, label=\arabic*.]
    \item If the net is contractive for $\|\cdot\|$ and the aggregator is 1-Lipschitz for
      $\|\cdot\|$, the net is converging.
    \item If the net is converging and \textbf{contractive in $z$} for $\|\cdot\|$,
      the net is regular and the vocabulary is well defined.
    \item If the net is contractive and the aggregator is 1-Lipschitz (both) for $\|\cdot\|$,
      the net is regular, the vocabulary is well defined, and the game is admissible.
  \end{enumerate}
\end{theoremrep}
\begin{proof}
  \textbf{Claim 1.}
  Fix a graph $x$ with $n$ vertices, consider tensors $\bm{z}\in\R^{nW}$ that assign
  embeddings $\bm{z}_u$ to vertices $u$ in the graph, and define the mapping
  $\Gamma: \R^{nW} \to \R^{nW}$ as $\Gamma(\bm{z})_u = \Net(\atp(u), \bm{z}, \set{ \bm{z}_v \mid E(x,u,v) })$.
  Lift the norm to such tensors as $|\!|\!|\bm{z}|\!|\!|=\max_{u\in V(x)}\|z_u\|$.

  \medskip
  We show that $\Gamma$ is a contraction for $|\!|\!|\cdot|\!|\!|$.
  Let $\bm{z}$ and $\bm{z}'$ be two tensors, and let
  $S_u = \set{ \bm{z}_v \mid v \in N(u) }$ and $S'_u = \set{ \bm{z}'_v \mid v \in N(u) }$ for
  vertex $u$ where $N(u) = \set{ v \mid E(x,u,v) }$ is $u$'s neighborhood. Then,
  \begin{alignat*}{1}
    |\!|\!| \Gamma(\bm{z}) - \Gamma(\bm{z}') |\!|\!|
      &=    \max_{u \in V(x)} \| \Gamma(\bm{z})_ u - \Gamma(\bm{z}')_ u \| \\
      &=    \max_{u \in V(x)} \| \Net(\atp(u), \bm{z}_u, S_u ) - \Net(\atp(u), \bm{z}'_u, S'_u ) \| \\
      &\leq \lambda \, \max_{u \in V(x)} \max ( \| \bm{z}_u - \bm{z}'_u \|, \|\oplus S_u - \oplus S'_u \| ) \\
      &\leq \lambda \, \max_{u \in V(x)} \max ( |\!|\!| \bm{z} - \bm{z}' |\!|\!|, \max_{v \in N(u)} \|\bm{z}_v - \bm{z}'_v \| ) \\
      &\leq \lambda \, \max_{u \in V(x)} \max ( |\!|\!| \bm{z} - \bm{z}' |\!|\!|, |\!|\!| \bm{z} - \bm{z}' |\!|\!| ) \\
      &\leq \lambda \, |\!|\!| \bm{z} - \bm{z}' |\!|\!|
  \end{alignat*}
  where the first inequality is by the contraction of the net, and the second since $\oplus$
  is 1-Lipschitz.
  By Banach's fixed-point theorem, there is a \textbf{unique fixed-point} $\bm{z}^*$
  satisfying $\Gamma(\bm{z}^*) = \bm{z}^*$, and the sequence $\bm{z}_{\ell+1}=\Gamma(\bm{z}_\ell)$
  converges to $\bm{z}^*$ from any starting point $\bm{z}_0$.
  Notice that we apply Banach's fixed-point theorem to fixed graph $x$, yet the contraction
  factor $\lambda$ is the same for every graph.

  \medskip\noindent\textbf{Claim 2.}
  We show two things: the net is regular and the vocabulary is well defined.
  Fix a pointed graph $(x,u)$ of atomic type $\alpha$, and let $\Phi_u$ denote $\Phi(\itype(x,u))$.

  \emph{The net is regular.}
  We need to show $\lim_{\ell\to\infty} z_\ell = \Phi_u$ where $z_0=\initial\alpha$ and $z_{\ell+1} = \Net(\alpha, z_\ell, S)$
  with $S = \set{ \Phi_v \mid E(x,u,v) }$:
  \begin{alignat*}{1}
    \| \Phi_u - z_{\ell+1} \|
      &=     \| \Net(\alpha, \Phi_u, S) - \Net(\alpha, z_{\ell}, S) \|
      \leq \lambda \, \| \Phi_u - z_{\ell} \|
       \leq \cdots \leq  \lambda^{\ell+1} \, \| \Phi_u - \initial\alpha \|
  \end{alignat*}
  which goes to $0$ as $\ell\to\infty$. Therefore, the limit exists and it is equal to $\Phi_u$.

  \emph{The vocabulary is well defined.}
  It is sufficient to show that $\Eop\alpha(S)$ exists for every finite subset $S$ of embeddings.
  We show that the sequence 
  $z_0=\initial\alpha$ and $z_{\ell+1} = \Net(\alpha, z_\ell, S)$ is Cauchy.
  This is direct since $z \mapsto  \Net(\alpha, z, S)$ is a contraction.

  \medskip\noindent
  \textbf{Claim 3.}
  By Claim 1 the net is converging. Contractivity implies Contractivity in $z$ (take $S=S'$),
  so by Claim 2 the net is regular and the vocabulary $\V$ is well defined.
  Hence, $\T$ and the game are defined.
  It remains to show admissibility; i.e., $\GEmbu{k}{u} \to \Phi_u$ as $k\to\infty$
  for every pointed graph $(x,u)$.

  \medskip
  Fix graph $x$ and let $\delta_k = \max_{u\in V(x)} \| \GEmbu{k}{u} - \Phi_u \|$, which is
  finite since $x$ is a finite graph.
  Let $\alpha$ be the atomic type of $u$, let $N=N(u)$ be $u$'s neighborhood, and
  set $S^k = \set{ \GEmbu{k}{v} \mid v \in N}$ and $S^* =\set{ \Phi_v \mid v \in N}$.
  By regularity, $\Phi_u = \Eop\alpha(S^*)$, and by the definition of
  the game $\GEmbu{k+1}{u} = \T(\alpha,S^k) = \Eop\alpha(S^k)$.

  If $N=\emptyset$ then $S^k=S^*=\emptyset$, so $\GEmbu{k+1}{u} = \Eop\alpha(\emptyset) = \Phi_u$
  and $\|\GEmbu{k+1}{u} - \Phi_u\| = 0 \leq \lambda\delta_k$.
  Otherwise, since $\Eop\alpha(S)$ is the unique fixed point of $z\mapsto\Net(\alpha,z,S)$ for
  every finite $S$:
  \begin{alignat*}{1}
    \|\Eop\alpha(S^k) - \Eop\alpha(S^*)\|
         =    \| \Net(\alpha,\Eop\alpha(S^k),S^k) - \Net(\alpha,\Eop\alpha(S^*),S^*)\|
         \leq \lambda \max \bigl( \| \Eop\alpha(S^k) - \Eop\alpha(S^*) \|, \|\oplus S^k - \oplus S^*\| \bigr) \,.
  \end{alignat*}
  If the maximum is attained by the first argument,
  $\|\Eop\alpha(S^k) - \Eop\alpha(S^*)\| \leq \lambda \|\Eop\alpha(S^k) - \Eop\alpha(S^*)\|$
  with $\lambda<1$, forcing $\|\GEmbu{k+1}{u} - \Eop\alpha(S^*)\|=0 \leq \lambda\delta_k$.
  If it is attained by the second argument, 1-Lipschitz gives
  \[ \|\GEmbu{k+1}{u} - \Phi_u\| \leq \lambda\,\|\oplus S^k - \oplus S^*\| \leq \lambda \max_{v \in N} \|\GEmbu{k}{v} - \Phi_v\| \leq \lambda\delta_k \,. \]
  In every case $\|\GEmbu{k+1}{u} - \Phi_u\| \leq \lambda \delta_k$, so $\delta_{k+1}\leq\lambda\delta_k$
  and therefore $\delta_k\leq\lambda^k\delta_0$ with $\delta_0=\max_{v\in V(x)}\|\Eop{\atp(v)}(\emptyset) - \Phi_v\|$.
  Since $\lambda<1$, $\delta_k\to 0$ as $k\to\infty$.
  That is, $\GEmbu{k}{u}\to\Phi_u$ for every vertex $u$, and the game is admissible.
  Moreover, since $\lambda$ does not depend on $x$, the convergence is uniform over all graphs.
\end{proof}

Nets that satisfy the conditions of Theorem~\ref{thm:contractive}
are those with max aggregator and tensors such that
$\|\bm{A}_\alpha\|_\infty < 1$ (contractive in $z$ for $L_\infty$), and
$\|\bm{A}_\alpha\|_\infty + \|\bm{B}_\alpha\|_\infty < 1$
(fully contractive for $L_\infty$) (see Theorem~\ref{thm:contractive:sufficient} in appendix).

\begin{toappendix}
  \begin{apxtheorem}[Sufficient Conditions for Contractive nets]
    \label{thm:contractive:sufficient}
    Let $\Net(\cdot, \cdot, \cdot)$ be a recurrent net for the atomic types in $\atoms$.
    \begin{enumerate}[leftmargin=*, label=\arabic*.]
      \item If the aggregator $\oplus$ is the point-wise \textbf{maximum}, the aggregator is 1-Lipschitz for $\|\cdot\|_\infty$.
      \item Let $\rho$ be a 1-Lipschitz for $\|\cdot\|_\infty$ activation function (e.g., tanh, sigmoid, ReLU or any clipping
        $x \mapsto \min(b, \max(a, x))$), and consider the \emph{maximum absolute row sum norm} $\|\cdot\|_{mars}$ over matrices.
        If $\|\bm{A}_\alpha\|_{mars} < 1$, the net is a contraction in $z$ for $\|\cdot\|_\infty$.
        If $\|\bm{A}_\alpha\|_{mars} + \|\bm{B}_\alpha\|_{mars} < 1$, the net is a contraction for $\|\cdot\|_\infty$.
      \Omit{
      \item If $\|\bm{A}_\alpha\|_\infty < 1$, the net is a contraction in $z$ for $\|\cdot\|_\infty$.
      \item If $\lambda = \|\bm{A}_\alpha\|_\infty + \|\bm{B}_\alpha\|_\infty < 1$ (where the norm here is
        the row-sum or operator norm) and the activation function is 1-Lipschitz (e.g., tanh, sigmoid, ReLU or any clipping
        $x \mapsto \min(b, \max(a, x))$), the net is contractive for $\|\cdot\|_\infty$.
      }
    \end{enumerate}
  \end{apxtheorem}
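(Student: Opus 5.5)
The two claims are checked separately. Claim 1 is a coordinatewise inequality for the pointwise maximum. Claim 2 combines the 1-Lipschitz property of $\rho$ with the fact that the maximum absolute row sum norm is the operator norm induced by $\|\cdot\|_\infty$.

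For Claim 1, fix vertex-indexed sets $S=\set{\bm{e}_v\mid v\in N}$ and $S'=\set{\bm{e}'_v\mid v\in N}$. If $N=\emptyset$, both aggregates equal $\mathbf{0}$ and the inequality is trivial. Otherwise, fix a register $i$. Because the maximum over a set equals the maximum over the indexing family (duplicates do not matter), we have $[\oplus S]_i=\max_{v\in N}[\bm{e}_v]_i$, and similarly for $S'$. I will use the elementary bound $|\max_v a_v-\max_v b_v|\le\max_v|a_v-b_v|$. To prove it, let $v_0$ attain $\max_v a_v$. Then $\max_v a_v-\max_v b_v\le a_{v_0}-b_{v_0}$, and the symmetric argument bounds the other direction. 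Applying this with $a_v=[\bm{e}_v]_i$ and $b_v=[\bm{e}'_v]_i$ gives $|[\oplus S]_i-[\oplus S']_i|\le\max_{v}\|\bm{e}_v-\bm{e}'_v\|_\infty$. Taking the maximum over $i$ yields the claim. The only subtlety is the remark that $S$ and $S'$ may have different cardinalities. This is handled by working with the index set $N$ throughout rather than with the sets themselves.

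For Claim 2, write $w=\bm{A}_\alpha z+\bm{B}_\alpha\oplus S+\bm{C}_\alpha$, and let $w'$ be the analogous expression for $(z',S')$. Since $\rho$ acts componentwise and is 1-Lipschitz on $\R$, it is 1-Lipschitz for $\|\cdot\|_\infty$. Hence $\|\Net(\alpha,z,S)-\Net(\alpha,z',S')\|_\infty\le\|w-w'\|_\infty$. Next, $\|M y\|_\infty\le\|M\|_{mars}\|y\|_\infty$, which follows by bounding each row sum, and the bias $\bm{C}_\alpha$ cancels in $w-w'$. This gives
\[ \|w-w'\|_\infty\le\|\bm{A}_\alpha\|_{mars}\|z-z'\|_\infty+\|\bm{B}_\alpha\|_{mars}\|\oplus S-\oplus S'\|_\infty \,. \]
When $S=S'$ the second term vanishes, so $\lambda=\max_\alpha\|\bm{A}_\alpha\|_{mars}<1$ gives contraction in $z$; the maximum is taken over the finitely many atomic types. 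In general, each norm difference on the right is at most $M=\max(\|z-z'\|_\infty,\|\oplus S-\oplus S'\|_\infty)$. This bounds the right-hand side by $(\|\bm{A}_\alpha\|_{mars}+\|\bm{B}_\alpha\|_{mars})\,M$. Taking $\lambda=\max_\alpha(\|\bm{A}_\alpha\|_{mars}+\|\bm{B}_\alpha\|_{mars})<1$, again using that $\atoms$ is finite, gives full contraction.

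There is no real obstacle. The steps needing the most care are the cardinality and indexing issue in Claim 1, and taking $\lambda$ uniformly over $\alpha$ in Claim 2.
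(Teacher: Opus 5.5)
Your proposal is correct and follows essentially the same route as the paper: Claim 1 uses a vertex $v^*$ attaining the maximum in each register, and Claim 2 chains the 1-Lipschitz activation, the triangle inequality, the induced operator-norm bound, and the final bound by $\lambda\max(\cdot,\cdot)$. Your additions are small refinements the paper leaves implicit: you treat $N=\emptyset$ separately, and you take $\lambda$ as the maximum over the finitely many atomic types in $\atoms$ so that one contraction factor works for every $\alpha$.
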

  \begin{proof}
    \textbf{Claim1.}
    Let $(x,u)$ be a pointed graph, let $N$ be a subset with $n$ vertices, and let $S$ and $S'$
    be two sets of embeddings indexed by the vertices in $N$: the embeddings in $S$ (resp.\ $S'$) are
    denoted by $\bm{e}_v$ (resp.\ $\bm{e}'_v$) for $v \in N$.
    We need to show that the max aggregator is 1-Lipschitz for $\|\cdot\|_\infty$.

    For register $i$, assume without loss of generality $(\oplus S)_i \geq (\oplus S')_i$:
    \[ | \max_{v \in N} \bm{e}_{v,i} - \max_{v \in N} \bm{e}'_{v,i} | = \max_{v \in N} \bm{e}_{v,i} - \max_{v \in N} \bm{e}'_{v,i} \leq \bm{e}_{v^*,i} - \bm{e}'_{v^*,i} \leq \max_{v \in N} |\bm{e}_{v,i} - \bm{e}'_{v,i}| \]
    where $v^* \in N$ is the index such that $\bm{e}_{v^*,i} = \max_v \bm{e}_{v,i}$. Hence,
    \[ \| \oplus\!S - \oplus S' \|_\infty = \max_i | \max_{v \in N} \bm{e}_{v,i} - \max_{v \in N} \bm{e}'_{v,i} | \leq \max_i \max_{v \in N} | \bm{e}_{v,i} - \bm{e}'_{v,i} | = \max_{v \in N} \| \bm{e}_v - \bm{e}'_v \|_\infty \,. \]

    \medskip\noindent
    \textbf{Claim 2.}
    Let $\alpha$ be an atomic type, let $z$ and $z'$ be two embeddings, and let $S$ and $S'$ be
    two vertex-indexed finite sets of embeddings. Then, for $\lambda=\|\bm{A}_\alpha\|_{mars} + \|\bm{B}_\alpha\|_{mars}$:
    \begin{alignat*}{1}
      \| \Net(\alpha, z, S) - \Net(\alpha, z', S') \|_\infty  \
      &=     \| \rho( \bm{A}_\alpha\,z + \bm{B}_\alpha(\oplus S) + \bm{C}_\alpha  ) - \rho( \bm{A}_\alpha\,z' + \bm{B}_\alpha(\oplus S') + \bm{C}_\alpha  ) \|_\infty \\
      &\leq  \| ( \bm{A}_\alpha\,z + \bm{B}_\alpha(\oplus S) + \bm{C}_\alpha  ) - ( \bm{A}_\alpha\,z' + \bm{B}_\alpha(\oplus S') + \bm{C}_\alpha  ) \|_\infty \\
      &=     \| \bm{A}_\alpha\,(z - z') + \bm{B}_\alpha(\oplus S - \oplus S') \|_\infty \\
      &\leq  \| \bm{A}_\alpha\,(z - z') \|_\infty + \| \bm{B}_\alpha(\oplus S - \oplus S') \|_\infty \\
      &\leq  \|\bm{A}_\alpha\|_{mars} \, \|z - z'\|_\infty + \|\bm{B}_\alpha\|_{mars} \, \|\oplus S - \oplus S'\|_\infty \\
      &\leq  \lambda \, \max( \|z - z'\|_\infty, \|\oplus S - \oplus S'\|_\infty )
    \end{alignat*}
    where the first inequality is from 1-Lipschitz of activation, the second by triangle inequality,
    the third by properties of the mars norm (it is the induced operator norm for $\|\cdot\|_\infty$),
    and the last by the definition of $\lambda$.
    If $S=S'$, the term $ \|\oplus S - \oplus S'\|_\infty = 0$.
    Hence, if $\|\bm{A}_\alpha\|_{mars} < 1$, the net is a contraction for $z$ for  $\|\cdot\|_\infty$,
    and the net is a (full) contraction for  $\|\cdot\|_\infty$ if $\lambda<1$.
  \end{proof}
\end{toappendix}

\subsection{Monotone Nets}

The architecture $\Net(\cdot, \cdot, \cdot)$ 
is \textbf{monotone} if for each $\alpha$, embeddings $z,z' \in \Embeddings$,
and finite subsets $S, S' \subseteq \Embeddings$:
\[ z \leq z' \ \land\  \oplus S \leq \oplus S' \implies \Net(\alpha, z, S) \leq \Net(\alpha, z', S') \,. \]
The aggregator $\oplus$ is \textbf{monotone} if for any two vertex-indexed embedding
sets $S=\set{ \bm{e}_v \mid v \in N }$ and $S'=\set{ \bm{e}'_v \mid v \in N }$:
$ \forall(v \in N)[ \bm{e}_v \leq \bm{e}'_v ] \implies \oplus S \leq \oplus S' $.

\begin{theoremrep}[Monotone nets]
  \label{thm:monotone}
  Let $\Net(\cdot, \cdot, \cdot)$ be a converging net over the atomic types in $\atoms$.
  Assume the initial embeddings are non-negative pre-fixed points of the isolated-vertex
  update: $0\leq\initial\alpha\leq\Net(\alpha,\initial\alpha,\emptyset)$ for $\alpha\,{\in}\,\atoms$.
  Then:
  \begin{enumerate}[leftmargin=*, label=\arabic*.]
    \item If the net is monotone, the vocabulary is well defined.
    \item If the net and the aggregator are both monotone, the net is regular, and the game is admissible and ordered.
  \end{enumerate}
\end{theoremrep}
\begin{proof}
  Throughout $\leq$ denotes the component-wise partial order on \Embeddings.
  Aggregator monotonicity is used in the indexed form: if $S = \set{\initial{v}  \mid v\in N}$
  and $S' = \set{ \initial{v}' \mid v\in N}$ with $\initial{v}\leq\initial{v}'$ for every $v\in N$,
  then $\oplus S \leq \oplus S'$.
  We also use $\oplus\emptyset = 0 \leq \oplus S$ for finite sets $S$ of non-negative embeddings.

  \medskip\noindent
  \textbf{Claim 1 (vocabulary).}
  We show by induction that for every $\alpha$ and finite set $S$ of non-negative
  embeddings, the canonical sequence $z_0=\initial{\alpha}$ and $z_{\ell+1} = \Net(\alpha, z_\ell, S)$
  is $\leq$-non-decreasing, its limit $\Eop\alpha(S)$ exists, and $\Eop\alpha(S)$
  is the least fixed point of $z \mapsto \Net(\alpha, z, S)$ above $\initial{\alpha}$.
  Hence, the vocabulary is well defined and consists of non-negative embeddings.

  First, we show by induction on $\ell$: $z_\ell \leq z_{\ell+1}$.
  The base case is
  \[ z_0 = \initial\alpha \leq \Net(\alpha, \initial\alpha, \emptyset) \leq \Net(\alpha, \initial\alpha, S) = z_1 \]
  since $\initial\alpha$ is pre-fixed point of $z \mapsto \Net(\alpha, z, \emptyset)$,
  and $\oplus\emptyset = 0 \leq \oplus S$.
  The inductive step:
  \[ z_{\ell} = \Net(\alpha, z_{\ell-1}, S) \leq \Net(\alpha, z_\ell, S) = z_{\ell+1} \]
  where the inequality follows by the inductive hypothesis $z_{\ell-1} \leq z_\ell$.
  Therefore, the limit $\Eop\alpha(S) = \lim_{\ell\to\infty} z_\ell$ exists.
  Indeed, $\Eop\alpha(S)$ is the \textbf{least fixed-point} of $z \mapsto \Net(\alpha, z, S)$
  that is above $\initial\alpha$.
  (Notice the third argument for the net is ``frozen'' at $S$ requiring no monotonicity
  of the aggregator).

  $\V_0$ is the set of initial embeddings which are all non-negative.
  Thus, $\V_1$ is well-defined and consists of non-negative embeddings as well.
  Inductively, the vocabulary $\V$ which is the closure of $\bigcup_{k\geq 0} \V_k$
  is well defined, and consists of non-negative embeddings.

  \begin{innerlemma}[Monotonicity of the table]
    \label{lemma:monotone}
    For every $\alpha\in\atoms$ and finite sets $S, S'$ of non-negative embeddings.
    If $\oplus S \leq \oplus S'$, then $\Eop\alpha(S) \leq \Eop\alpha(S')$.
  \end{innerlemma}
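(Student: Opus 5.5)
We compare the two canonical sequences directly, by induction on the iteration index. Let $z_0 = z'_0 = \initial\alpha$, $z_{\ell+1} = \Net(\alpha, z_\ell, S)$ and $z'_{\ell+1} = \Net(\alpha, z'_\ell, S')$. By Claim~1 both sequences are $\leq$-non-decreasing and converge, to $\Eop\alpha(S)$ and $\Eop\alpha(S')$ respectively. Claim~1 applies because $S$ and $S'$ consist of non-negative embeddings.

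\textbf{Induction claim.} We show $z_\ell \leq z'_\ell$ for every $\ell\geq 0$.

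The base case is trivial, since $z_0 = z'_0$.

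For the inductive step, assume $z_\ell \leq z'_\ell$. We also have the hypothesis $\oplus S \leq \oplus S'$. Net monotonicity then yields
\[ z_{\ell+1} = \Net(\alpha, z_\ell, S) \leq \Net(\alpha, z'_\ell, S') = z'_{\ell+1}. \]
This step uses only monotonicity of the net. Aggregator monotonicity is not needed here, because the hypothesis is stated at the level of aggregated values $\oplus S \leq \oplus S'$.

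\textbf{Passing to the limit.} The component-wise order $\leq$ is closed in $\Embeddings$: the set $\set{(a,b) \mid a \leq b}$ is a finite intersection of closed half-spaces. So $z_\ell \leq z'_\ell$ for all $\ell$ gives $\lim_\ell z_\ell \leq \lim_\ell z'_\ell$, that is, $\Eop\alpha(S) \leq \Eop\alpha(S')$.

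An alternative route uses the least-fixed-point characterization from Claim~1. Since $\Eop\alpha(S')$ is a fixed point of $z\mapsto\Net(\alpha,z,S')$ above $\initial\alpha$, monotonicity gives $\Net(\alpha, \Eop\alpha(S'), S) \leq \Eop\alpha(S')$. By induction, every $z_\ell$ then stays below $\Eop\alpha(S')$. The direct comparison above is simpler, so we use it.

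\textbf{Main obstacle.} The only subtle point is ensuring that both limits exist, so that the inequality can be passed to the limit. This is exactly what Claim~1 provides for finite sets of non-negative embeddings. Non-negativity matters because it is what makes $\oplus\emptyset = 0 \leq \oplus S$ valid in Claim~1's base case. Once Claim~1 is invoked for both $S$ and $S'$, the rest is routine.
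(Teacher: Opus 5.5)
Your proposal is correct and follows the paper's own proof essentially verbatim: induction on $\ell$ showing $z_\ell \leq z'_\ell$ from the common start $\initial\alpha$, using only net monotonicity together with $\oplus S \leq \oplus S'$, and then passing to the limit, with existence of both limits supplied by Claim~1. Your extra remarks, that the componentwise order is closed and that aggregator monotonicity is not needed, are accurate and make explicit what the paper leaves implicit.
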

  \noindent\emph{Proof of Lemma:}
    Let $\tup{z_\ell}_\ell$ and $\tup{z'_\ell}_\ell$ be the canonical sequences for
    $(\alpha,S)$ and $(\alpha,S')$, respectively.
    First, we show by induction on $\ell$: $z_\ell \leq z'_\ell$ for every $\ell\geq 0$.
    The base case is the equality $z_0 = z'_0 = \initial\alpha$.
    By inductive hypothesis $z_\ell \leq z'_\ell$.
    Therefore, $z_{\ell+1} = \Net(\alpha, z_\ell, S) \leq \Net(\alpha, z'_\ell, S') = z'_{\ell+1}$
    by the monotonicity of the net since $\oplus S \leq \oplus S'$.
    Second, pass to the limit to get $\Eop\alpha(S) \leq \Eop\alpha(S')$.
    In particular, $\initial\alpha \leq \Eop\alpha(\emptyset) \leq \Eop\alpha(S)$ for every such $S$.
    \hfill\ensuremath{\scaleobj{.75}{\blacksquare}}    

  \medskip\noindent
  \textbf{Claim 2 (regularity, admissibility, order).}
  Fix a graph $x$, let $u\in V(x)$ be a vertex of atomic type $\alpha=\atp(u)$, and write
  \begin{enumerate}[leftmargin=*, label=--]
    \item $z_{u,\ell}$ for the net's synchronous run: $z_{u,0} = \initial\alpha$,
      and $z_{u,\ell+1} = \Net(\alpha, z_{u,\ell}, \set{z_{v,\ell} \mid E(x,u,v)})$,
    \item $\Phi_u = \lim_{\ell\to\infty} z_{u,\ell}$ for its limit,
    \item $S^*_u = \set{ \Phi_v \mid E(x,u,v)}$ for the final embeddings of $u$'s neighbors, and
    \item $\GEmbu{k}{u}$ for the game's run: $\GEmbu{0}{u} = \Eop\alpha(\emptyset)$,
      and $\GEmbu{k+1}{u} = \T(\alpha, S^k_u) = \Eop\alpha(S^k_u)$ with
      $S^k_u = \set{ \GEmbu{k}{v} \mid E(x,u,v)}$.
  \end{enumerate}

  \smallskip\noindent
  \textbf{Step \emph{2a} (regularity).}
  By definition, the net is regular iff $\Phi_u = \Eop\alpha(S^*_u)$.
  We show $z_{u,\ell} \leq \Eop\alpha(S^*_u)$ for every vertex $u$ and layer $\ell$.
  Indeed, suppose that this is not true, and let $\ell\geq0$ be a \textbf{minimum}
  layer index and $u$ a vertex such that $z_{u,\ell} \not\leq \Eop\alpha(S^*_u)$.
  Clearly, $\ell>0$ since $z_{u,0}=\initial\alpha \leq \Eop\alpha(S^*_u)$ by Lemma~\ref{lemma:monotone}.
  Since $\ell>0$ is minimum, $z_{u,\ell-1} \leq \Eop\alpha(S^*_u)$ and
  $z_{v,\ell-1} \leq \Eop{\atp(v)}(S^*_v)$ for neighbor $v$ of $u$.
  The monotonicity of the aggregator and inductive hypothesis gives
  $\oplus S^{\ell-1}_u \leq \oplus S^*_u$.
  Apply the monotonicity of the net to get
  \begin{alignat*}{1}
    z_{u,\ell} =    \Net( \alpha, z_{u,\ell-1}, S^{\ell-1}_u )
               \leq \Net( \alpha, \Eop\alpha(S^*_u), S^*_u )
               =    \Eop\alpha(S^*_u) \,,
  \end{alignat*}
  where the last equality is because $\Eop\alpha(S^*_u)$ is a fixed point of
  $z\mapsto\Net(\alpha,z,S^*_u)$.
  This contradicts the supposition.
  Hence, $z_{u,\ell} \leq \Eop\alpha(S^*_u)$ for every vertex $u$ and layer $\ell$.
  Passing to the limit (as the net is a continuous function)
  yields $\Phi_u \leq \Eop\alpha(S^*_u)$.

  Since $0\leq\initial\alpha$ for $\alpha\in\atoms$, $0 \leq \oplus S^0_u$ for each vertex $u$.
  The pre-fixed point hypothesis and monotonicity of the net give
  $z_{u,0} = \initial\alpha \leq \Net(\alpha,\initial\alpha,\emptyset) \leq \Net(\alpha,\initial\alpha,S^0_u) = z_{u,1}$,
  and inductively $z_{u,\ell} \leq z_{u,\ell+1}$.
  Hence $\initial\alpha = z_{u,0} \leq z_{u,1} \leq \cdots \leq \Phi_u \leq \Eop\alpha(S^*_u)$.
  Since $\Phi_u$ is also a fixed point of $z \mapsto \Net( \alpha, z, S^*_u)$ above $\initial\alpha$,
  and $\Eop\alpha(S^*_u)$ is the least fixed point above $\initial\alpha$, then
  $\Phi_u = \Eop\alpha(S^*_u)$.
  Hence, \emph{the net is regular.}

  \medskip\noindent
  \textbf{Step \emph{2b}: the game's run is $\leq$-non-decreasing.}
  We show by induction on $k$, $\GEmbu{k}{u} \leq \GEmbu{k+1}{u}$ for every vertex $u$.
  Base case: $\GEmbu{0}{u} = \Eop\alpha(\emptyset) \leq \Eop\alpha(S^0_u) = \GEmbu{1}{u}$
  by Lemma~\ref{lemma:monotone}, since $\oplus\emptyset = 0 \leq \oplus S^0_u$.
  Inductive step: if $\GEmbu{k-1}{v} \leq \GEmbu{k}{v}$ for every vertex $v$,
  then $\oplus S^{k-1}_u \leq \oplus S^k_u$ by the monotonicity of the aggregator
  (both sets indexed by $N(u)$).
  Hence, $\GEmbu{k}{u} = \Eop\alpha(S^{k-1}_u) \leq \Eop\alpha(S^k_u) = \GEmbu{k+1}{u}$
  by Lemma~\ref{lemma:monotone}.

  \medskip\noindent
  \textbf{Step \emph{2c}: admissibility.}
  Two bounds sandwich the run of the game.
  \begin{enumerate}[leftmargin=*, label=$(\roman*)$]
    \item $\GEmbu{k}{u} \leq \Phi_u$ for every $k$, by induction on $k$.
      The base is $\GEmbu{0}{u} = \Eop\alpha(\emptyset) \leq \Eop\alpha(S^*_u) = \Phi_u$
      directly by Lemma~\ref{lemma:monotone} and regularity, since $\oplus\emptyset = 0 \leq \oplus S^*_u$.
      If $\GEmbu{k}{v} \leq \Phi_v$ for every $v$, then $\oplus S^k_u \leq \oplus S^*_u$,
      so $\GEmbu{k+1}{u} = \Eop\alpha(S^k_u) \leq \Eop\alpha(S^*_u) = \Phi_u$ by
      Lemma~\ref{lemma:monotone} and regularity.
    \item $z_{u,\ell} \leq \GEmbu{k}{u}$ for every $\ell$, by induction on $k$.
      The base is $z_{u,0}=\initial\alpha\leq\Eop\alpha(\emptyset)=\GEmbu{0}{u}$ since the canonical sequence for $(\alpha,\emptyset)$ is non-decreasing from $\initial\alpha$.
      For the inductive step, note that $\GEmbu{\ell}{u} \leq \Eop\alpha(S^{\ell}_u)$ for every $\ell$
      (for $\ell=0$ by Lemma~\ref{lemma:monotone}'s final remark; for $\ell\geq1$ because
      $\GEmbu{\ell}{u} = \Eop\alpha(S^{\ell-1}_u) \leq \Eop\alpha(S^{\ell}_u)$, as in Step 2b).
      Then,
      $z_{u,\ell+1} = \Net(\alpha, z_{u,\ell}, \set{z_{v,\ell} \mid E(u,v)}) \leq
      \Net(\alpha, \Eop\alpha(S^{\ell}_u), S^{\ell}_u) = \Eop\alpha(S^\ell_u) = \GEmbu{\ell+1}{u}$,
      using, in order: the inductive hypothesis with net and aggregator monotonicity;
      $\GEmbu{\ell}{u} \leq \Eop\alpha(S^{\ell}_u)$ with net monotonicity; and the
      fixed-point property of $\Eop\alpha(S^{\ell}_u)$.
  \end{enumerate}

  \smallskip\noindent
  By Steps 2b and $(i)$, $\tup{\GEmbu{k}{u}}_k$ is component-wise non-decreasing and bounded
  above by $\Phi_u$, hence converges to some $\GEmbu{*}{u} \leq \Phi_u$.
  By $(ii)$, passing to the limit, $\Phi_u \leq \GEmbu{*}{u}$.
  Hence $\lim_{k\to\infty} \GEmbu{k}{u} = \Phi_u$ for every vertex of every graph;
  that is, the game is admissible.

  \medskip\noindent
  \textbf{Step \emph{2d}: order.}
  By Step 2b, the restriction of the component-wise order $\leq$ to $\V$ witnesses
  Definition~\ref{def:game}'s condition: take $\leq$ equal to the restriction of $\leq$
  to $\V$. Definition~\ref{def:game} requires an order, not a total one, so no extension
  is needed.
\end{proof}

Any fixed point is a pre-fixed point, and the all-zero initialization satisfies the hypothesis
automatically for ReLU or clipped activations,
since $\Net(\alpha, 0, \emptyset) = \rho(\bm{C}_\alpha) \geq 0$.

\begin{corollaryrep}
  \label{corollary}
  Under the hypotheses of Theorem~\ref{thm:monotone} with $\oplus\,{=}\,\max$,
  the game is monotone.
  Hence $\T(\alpha,S) = \Eop\alpha(S) \leq \Eop\alpha(S') = \T(\alpha,S')$
  whenever $S \sqsubseteq S'$.
\end{corollaryrep}
\begin{proof}
  By Theorem~\ref{thm:monotone}, the vocabulary $\V$ consists of non-negative
  embeddings and the game order $\preceq$ is the restriction of the component-wise
  order $\leq$ to $\V$ (Step 2d in proof of the theorem).
  Let $S \sqsubseteq S'$; i.e., every $\bm{e} \in S$ is $\leq$-dominated by some $\bm{e}' \in S'$.
  If $S = \emptyset$ then $\oplus S=0 \leq \oplus S'$, since all embeddings in $\V$ are non-negative.
  Otherwise, fix a register $i$. For every $\bm{e}\in S$ there is $\bm{e}' \in S'$ with $\bm{e}\leq\bm{e}'$,
  hence $\bm{e}_i \leq \bm{e}'_i \leq \max\set{\bm{e}''_i \mid \bm{e}'' \in S'} = (\oplus S')_i$.
  Taking the maximum over $\bm{e} \in S$ gives $(\oplus S)_i \leq (\oplus S')_i$, and since $i$
  is arbitrary, $\oplus S \leq \oplus S'$.
  By Lemma~\ref{lemma:monotone}, $\T(\alpha,S) = \Eop\alpha(S) \leq \Eop\alpha(S') = \T(\alpha, S')$.
\end{proof}

Nets that satisfy conditions of Theorem~\ref{thm:monotone}
are those with max aggregator and matrices $\bm{A}$ and $\bm{B}$
with non-negative weights (Theorem~\ref{thm:monotone:sufficient} in appendix).
If all weights are non-negative integers, the activation function
is clipped, and the initial embeddings are integer valued, the vocabulary is finite.

\begin{toappendix}
  \begin{apxtheorem}[Sufficient conditions for monotone nets]
    \label{thm:monotone:sufficient}
    Let $\Net(\cdot,\cdot,\cdot)$ be a recurrent architecture over the atomic types in $\atoms$
    with an aggregator that is 1-Lipschitz.
    \begin{enumerate}[leftmargin=*, label=\arabic*.]
      \item If $\bm{A}_\alpha,\bm{B}_\alpha \geq 0$ (i.e., non-negative weights), the net is monotone for $\|\!\cdot\!\|_\infty$.
      \item If $\bm{A}_\alpha,\bm{B}_\alpha$ have non-negative integer weights, the net has clipped activation, and the initial embeddings are integer valued, the vocabulary is finite.
    \end{enumerate}
  \end{apxtheorem}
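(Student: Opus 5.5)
For Claim 1, I will check the definition of monotonicity directly and in two stages. Fix $\alpha$, and take $z\le z'$ and finite $S,S'$ with $\oplus S\le\oplus S'$.

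Because $\bm{A}_\alpha\ge 0$ entrywise, each row satisfies $(\bm{A}_\alpha z)_i=\sum_j a_{ij}z_j\le\sum_j a_{ij}z'_j$. The same argument with $\bm{B}_\alpha\ge0$ gives $\bm{B}_\alpha\oplus S\le\bm{B}_\alpha\oplus S'$. Adding the common bias $\bm{C}_\alpha$ keeps these inequalities, so the pre-activation vectors are ordered component-wise.

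Both $\ReLU$ and $\clip$ are non-decreasing scalar functions, and $\rho$ acts component-wise. Hence $\Net(\alpha,z,S)\le\Net(\alpha,z',S')$.

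I read the phrase ``for $\|\cdot\|_\infty$'' in the statement as inert, since monotonicity refers only to the order $\le$. The 1-Lipschitz hypothesis on the aggregator is not needed here. If monotonicity of the aggregator is also wanted (for use with Theorem~\ref{thm:monotone}), the point-wise max is monotone by the coordinate argument in Corollary~\ref{corollary}.

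For Claim 2, the plan is to show that every embedding appearing in the construction of $\V$ lies in the finite set $\{0,1\}^W$. I would first remark that finiteness presupposes that $\V$ is defined. I will therefore either assume the net converges and meets the hypotheses of Theorem~\ref{thm:monotone}, or argue directly that each canonical sequence is eventually constant.

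The key invariant is that $\Net(\alpha,z,S)\in\{0,1\}^W$ whenever $z$ and all elements of $S$ are integer-valued. This holds for three reasons:
\begin{itemize}
  \item $\oplus S$ is integer-valued, since max and the convention $\oplus\emptyset=\mathbf{0}$ preserve integrality;
  \item the pre-activation $\bm{A}_\alpha z+\bm{B}_\alpha\oplus S+\bm{C}_\alpha$ is then an integer vector, which requires $\bm{C}_\alpha$ to be integer too;
  \item $\clip$ maps integers into $\{0,1\}$.
\end{itemize}
Starting from the integer $\initial\alpha$, every iterate $z_\ell$ with $\ell\ge1$ lies in $\{0,1\}^W$.

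The sequence $z_\ell$ is also deterministic in $z_{\ell-1}$ once $S$ is fixed, so it is eventually periodic. Under the pre-fixed-point hypothesis of Theorem~\ref{thm:monotone}, together with the monotonicity from Claim 1, it is non-decreasing. A non-decreasing sequence in a finite set is eventually constant, so $\Eop\alpha(S)\in\{0,1\}^W$.

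By induction on $k$, each $\V_k\subseteq\{0,1\}^W$. This set is finite and therefore topologically closed, so the closure $\V$ is contained in it and $|\V|\le 2^W$.

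I expect the main obstacle to be the bias $\bm{C}_\alpha$ and the limit. The statement constrains only $\bm{A}_\alpha$, $\bm{B}_\alpha$ and $\initial\alpha$, and a non-integer $\bm{C}_\alpha$ could produce fractional clipped values. I would first try to make the integrality of $\bm{C}_\alpha$ an explicit, implicit-in-``weights'' assumption. Without monotone convergence, the limit defining $\Eop\alpha$ might fail to exist because of oscillation. I would handle this by appealing to the convergence assumption, together with the fact that a convergent sequence in a finite discrete set is eventually constant.
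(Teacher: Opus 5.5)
Your proposal is correct and follows the paper's argument, made more precise: Claim~1 is the same entrywise comparison of pre-activations followed by a non-decreasing $\rho$, and Claim~2 is the paper's one-line ``integer valued and bounded, hence finitely many,'' which you sharpen to every frozen iterate from step $1$ on lying in $\{0,1\}^W$. The extra hypotheses you surface are used silently by the paper and are genuinely needed by the statement: a condition on $\bm{C}_\alpha$ (integrality suffices); an aggregator that preserves integer vectors rather than merely a 1-Lipschitz one (you tacitly take $\oplus=\max$ and should state it, since the 1-Lipschitz midrange $\tfrac12(\max S+\min S)$ already yields an infinite dyadic vocabulary); and existence of the frozen limits, which must come from $\V$ being defined or from the hypotheses of Theorem~\ref{thm:monotone}, not from convergence of the net on graphs.
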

  \begin{proof}
    \textbf{Claim 1.}
    Let $z$ and $z'$ be embeddings, and $S$ and $S'$ be finite sets of embeddings.
    If $z \leq z'$ and $\oplus S \leq \oplus S'$, for every $\alpha$:
    \begin{alignat*}{1}
      \bm{A}_\alpha \, z + \bm{B}_\alpha(\oplus S) + \bm{C}_\alpha \ &\leq \ \bm{A}_\alpha \, z' + \bm{B}_\alpha(\oplus S') + \bm{C}_\alpha \\
      \rho\bigl( \bm{A}_\alpha \, z + \bm{B}_\alpha(\oplus S) + \bm{C}_\alpha \bigr) \ &\leq \ \rho\bigl( \bm{A}_\alpha \, z' + \bm{B}_\alpha(\oplus S') + \bm{C}_\alpha \bigr) \\
      \Net( \alpha, z, S ) \ &\leq \ \Net( \alpha, z', S' ) \,.
    \end{alignat*}

    \medskip\noindent
    \textbf{Claim 2.}
    Under the assumptions, the embeddings in the vocabulary are integer valued and bounded. Therefore, there is a finite number of them.
  \end{proof}

  \Omit{%
  }
\end{toappendix}

The order 
is what turns the dynamical system from
an unconstrained automaton into a stable, terminating one: forcing each vertex
forward through a fixed ladder of states rules out cycles and oscillation.
\Omit{
  The argument is in the spirit of self-stabilization in distributed systems
  \cite{dijkstra1974:self,altisen2019:book} and of well-quasi-ordering arguments
  certifying termination of monotone processes in verification \cite{finkel2001};
  the order plays the role of a Lyapunov function \cite{khalil2002nonlinear,chang2019}.
}
For finite vocabulary, the system is the trace of a cellular
automaton over the graph, with local rule given by table $\T$ \cite{grattarolaetal2021}.

Why compile the game rather than the run itself? Evaluating a fixed-point formula
on a finite graph is an iteration over tuples of vertex sets that stabilizes in
$\mathcal{O}(|V(x)|)$ rounds; the net's run instead traverses a possibly infinite
set of transient embeddings, possibly over infinite time.
A finitary formula can therefore track the net only through a finite quotient of
its state space, and the game is exactly that quotient with its induced dynamics.
The  lim-inf acceptance of \citet{pflueger2024:recurrent} and the
$\omega$GML characterization of \cite{Ahvonenetal2024} are trajectory-dependent
semantics, but they are infinitary or non-effective. 
At convergence the limits satisfy $\Phi_u = \T(\alpha, \set{\Phi_v \mid E(u,v)})$,
a system that in general has several solutions, of which the transients select one;
fixed-point logic can only name the extremal, iteratively generated solutions.
Admissibility says the dynamics selects the solution iteration does.
\Omit{
  , and
  Proposition~\ref{prop:barrier} shows its failure is essential: a self-loop supports
  $\nu Y.\mydiamond Y$ only through circular justification that no iteration from
  below produces.
}

\Omit{
  \alert{(** Simplify entire paragraph **)}
  One may ask why the operational semantics, the run through transient embeddings,
  is traded for the dynamical system over the vocabulary, and whether a $\mu$-calculus
  description of the run itself would not be more faithful.
  The obstacle is one of resolution: evaluating a fixed-point formula on a finite graph
  is itself an iterative process, but its states are tuples of vertex sets and it
  terminates within $\mathcal{O}(|V(x)|)$ rounds, whereas the net's run moves through
  a possibly infinite set of transient embeddings over possibly infinite time
  (the contractive example exhibits both).
  A finitary formula can therefore simulate the net only through a finite quotient
  of its state space, and the game is that quotient together with its induced dynamics.
  The transients can be described stepwise, Theorem 7 provides one formula per layer,
  but their limit is then an $\omega$-indexed family, and semantics that consult the
  trajectory directly are characterized only by infinitary or non-effective means:
  the lim-inf acceptance of \cite{pflueger2024:recurrent} and the $\omega$GML characterization of
  \cite{Ahvonenetal2024} are exactly such trajectory-sensitive readings, and are strictly
  more covering at exactly that cost.
  More pointedly, at convergence the limits satisfy the table's equations $\Phi_u = \T(\alpha, \set{\Phi_v \mid E(u,v)})$,
  which in general admit several solutions, selected by the transients.
  Fixed-point logics can denote only the extremal, iteratively generated solutions.
  Admissibility is precisely the condition that the solution selected by the dynamics
  is the one selected by iteration, the only one a fixed-point formula can name,
  and Proposition~\ref{prop:barrier} shows its failure is not a technical inconvenience:
  the self-loop supports $\nu Y.\mydiamond Y$ only by circular, unfounded justification,
  which no iteration from below generates.
}

\Omit{
  The requirement of an ordering on the vocabulary is what guarantees that the induced dynamical
  system is stable and terminating, rather than chaotic, property the next section relies on.
  By forcing each vertex to move strictly forward through a fixed ladder of states, the order
  rules out cycles and unbounded oscillation in the game's evolution.
  The argument is in the spirit of self-stabilization in distributed systems, where convergence
  to a fixed configuration without central coordination is established via a similar monotone-progress
    argument \cite{dijkstra1974:self,altisen2019:book}, and of well-quasi-ordering arguments used throughout
  verification and infinite-state model checking to certify termination of monotone processes
  \cite{finkel2001}.
  The order on the vocabulary plays the role of a Lyapunov function for the system
  \cite{khalil2002nonlinear,chang2019};
  a quantity that moves monotonically toward a fixed point and thereby certifies convergence, a
  connection made explicit in recent work on synthesizing and verifying Lyapunov-style stability
  certificates with neural networks.
  This view also clarifies the relation to Definition 9: when the vocabulary is finite, the dynamical
  system it induces is exactly the trace of a cellular automaton over the graph, with each vertex's
  local update rule given by the table T \cite{grattarolaetal2021} -- and it is precisely the existence
  of this monotone order that turns an otherwise unconstrained automaton into one guaranteed to settle.
}

\newcommand{\row}{\mathrm{row}}

\section{Translating Recurrent Nets into IFP}
\nosectionappendix
\begin{toappendix}
  \myappendixheader{Appendix for section: Translating Recurrent Nets into IFP}
\end{toappendix}

We work with \emph{inflationary fixed-point logic} (IFP) over a relational
signature $\sigma$. Over finite structures (our target), IFP is
equivalent to least fixed-point logic (LFP) 
\cite{GurevichShelah1986:lfp-ifp}, but it has a simpler syntax and semantics.

For formula $\psi(R; \bar z)$ in $\FO[\sigma\cup\set{R}]$, where $k=|\bar{z}|$
and $R$ is a \emph{new} relation of arity $k$, the IFP constructor is
$\ifp{R,\bar{x}}{\psi}(\bar{t})$
where $\bar{t}$ is a tuple of terms of size $k$.
Such a formula holds iff $\bar{t}\in R^*$ where $R^*$ is
the inflationary fixed point of the one-step formula $\psi(R; \bar z)$;
see appendix.
IFP is extended to fixed points of multiple relations, simultaneously,
by considering vectors $\bm{R}\,{=}\,\tup{R_1,\ldots,R_m}$ of relations
defined by a vector $\psi(\bm(R); \bar z)\,{=}\,\tup{\psi_1,\ldots,\psi_m}$
of one-step relations.
Then,$\ifp{\bm{R}, \bar{x}}{\psi}_j(\bar t)$ iff $\bar{t}$ belongs to $R^*_j$.

\begin{theorem}[IFP Compiler]
  \label{thm:rec-compiler}
  Let $\Net(\cdot,\cdot,\cdot)$ be a regular net for the atomic
  types in $\atoms$ with finite vocabulary $\V\,{=}\,\set{\bm{e}_1,\ldots,\bm{e}_m}$,
  and whose game is admissible
  and ordered, but no game monotonicity required.
  There is a vector $\psi(\bm{R}; z)\,{=}\,\tup{\psi_1,\ldots,\psi_m}$
  of 1-step unary relations such that
  \[ \tup{\Psi_1,\ldots,\Psi_m}(u) \ \equiv \ \ifp{\bm{R}; z}{\psi}(u) \]
  characterizes the net; i.e.,
  for pointed graph $(x,u)$ and index $i$: 
  $\Phi(\tau(x,u))\,{=}\,\initial{i}$ iff $x\,{\vDash}\,\Psi_i(u) \land \bigwedge_{\initial{i}\,{\prec}\,\initial{j}}\! \neg\Psi_j(u)$.
\end{theorem}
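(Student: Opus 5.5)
The plan is to simulate the game of Definition~\ref{def:game} with a simultaneous inflationary fixed point whose relations record \emph{upward closures} of the current state. Intuitively, $R_j$ will collect the vertices that have at some stage reached a state $\succeq\initial{j}$. Because the game is ordered, each vertex moves forward along $\preceq$: $\GEmbu{k}{u}\preceq\GEmbu{k+1}{u}$. Hence the sets $U^k_j=\set{u \mid \initial{j}\preceq\GEmbu{k}{u}}$ grow with $k$. This is exactly the inflationary behaviour IFP provides for free, so the ordered hypothesis is what replaces monotonicity of the one-step formula. Since $\V$ is finite and the run is $\preceq$-non-decreasing, the run is eventually constant, at the latest after $n(m-1)$ steps. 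Admissibility then says this eventual state equals $\Phi(\itype(x,u))$.

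\textbf{Step 1 (decoding the current state).} From $\bm{R}=\tup{R_1,\ldots,R_m}$, I define the first-order formula
\[ \mathrm{st}_i(w)\;\equiv\; R_i(w)\land\textstyle\bigwedge_{\initial{i}\prec\initial{j}}\neg R_j(w). \]
If $R_j=U^k_j$ for all $j$, then $\mathrm{st}_i(w)$ holds iff $\GEmbu{k}{w}=\initial{i}$. The reason is that $\GEmbu{k}{w}$ is the unique $\preceq$-maximal element of $\set{\initial{j} \mid w\in U^k_j}$. To handle stage $0$ uniformly, I either add a flag relation or insert the initial states $\Eop\alpha(\emptyset)$ directly as a disjunct. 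In the latter version, each $\psi_j$ contains $\bigvee_{\alpha:\,\initial{j}\preceq\Eop\alpha(\emptyset)}\alpha(z)$.

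\textbf{Step 2 (one-step formula).} For each $\alpha\in\atoms$ and each $S\subseteq\V$ (finitely many), I use the neighbourhood formula
\[ \mathrm{nb}_S(z)\equiv \forall w\bigl(E(z,w)\Rightarrow\textstyle\bigvee_{\initial{i}\in S}\mathrm{st}_i(w)\bigr)\land\bigwedge_{\initial{i}\in S}\exists w\bigl(E(z,w)\land \mathrm{st}_i(w)\bigr), \]
modelled on Definition~\ref{def:type-formula}. I then set
\[ \psi_j(\bm{R};z)\equiv\text{(initial disjunct)}\lor\textstyle\bigvee_{\alpha}\bigvee_{S:\,\initial{j}\preceq\T(\alpha,S)}\bigl(\alpha(z)\land\mathrm{nb}_S(z)\bigr). \]
The size is linear in the table, since there is one disjunct per table entry. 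Negations of $R$ occur in $\psi_j$, which is allowed in IFP but not in LFP, so no monotonicity is needed.

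\textbf{Step 3 (induction on stages, the main obstacle).} I must show that the $k$-th inflationary stage $R^{k}_j$ equals $U^k_j$. The delicate point is that IFP takes $R^{k+1}_j=R^k_j\cup\psi_j(\bm{R}^k)$, and the union might differ from the true next state. The ordering resolves this. Assume $R^k_j=U^k_j$ for all $j$. Then, by Step 1, $\psi_j$ evaluated at $\bm{R}^k$ gives exactly $\set{u \mid \initial{j}\preceq\GEmbu{k+1}{u}}$. Moreover, $U^k_j\subseteq U^{k+1}_j$ because $\GEmbu{k}{u}\preceq\GEmbu{k+1}{u}$ and $\preceq$ is transitive, so the union changes nothing. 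The initial disjunct stays harmless for the same reason. Finally, I check that the fixed point is reached exactly when the game stabilises, and that stationary stages of the formula coincide with stationary states of the game.

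\textbf{Step 4 (conclusion).} Let $\Psi_j=\ifp{\bm{R};z}{\psi}_j$. The fixed point is $R^*_j=\set{u \mid \initial{j}\preceq\GEmbu{\infty}{u}}$, and by admissibility $\GEmbu{\infty}{u}=\Phi(\itype(x,u))$. Applying Step 1 to $\bm{R}^*$ yields the stated characterization $\Phi(\itype(x,u))=\initial{i}$ iff $x\vDash\Psi_i(u)\land\bigwedge_{\initial{i}\prec\initial{j}}\neg\Psi_j(u)$.
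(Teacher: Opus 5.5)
Your proof is correct, and its overall approach is the paper's. You simulate the ordered game by a simultaneous IFP over $m$ unary relations. You decode the current state with $R_i\land\bigwedge_{\initial{i}\prec\initial{j}}\neg R_j$, which is the paper's $\curr{i}$. You use exact neighbourhood rows with both the universal and the existential half, so no game monotonicity is needed. You then induct on stages and conclude by finiteness of $\V$ and admissibility.

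Where you differ is the invariant carried by the relations.
\begin{itemize}
\item In the paper, $R_i$ collects the vertices that have \emph{visited exactly} $\initial{i}$: $u\in R^{k+1}_i$ iff $\GEmbu{l}{u}=\initial{i}$ for some $l\le k$. Correspondingly, $\psi_i$ disjoins only the rows with $\T(\alpha,S)=\initial{i}$. The inflationary union is then harmless by definition. The ordering is spent on decoding, since the visited states form a $\preceq$-chain whose maximum is the current state.
\item In your version, $R_j$ is the up-set $\set{u \mid \initial{j}\preceq\GEmbu{k}{u}}$, and $\psi_j$ disjoins all rows with $\T(\alpha,S)\succeq\initial{j}$. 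Decoding then works in any partial order with no chain argument, because the current state is the maximum of its own down-set. The ordering is spent instead on making the stages increasing, so that the union and the initial disjunct add nothing.
\end{itemize}

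A pleasant by-product of your encoding is that your fixed point is exactly the tuple of up-sets $T_j$ from the proof of Theorem~\ref{thm:net-to-logic}. This makes the step to the $\mu$-calculus compiler transparent: drop the universal half and the negations, and pay with game monotonicity.

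What the paper's encoding buys is size. There, each row occurs in exactly one $\psi_i$. In yours, a row with value $\initial{i}$ is repeated in every $\psi_j$ with $\initial{j}\preceq\initial{i}$. So your ``one disjunct per table entry'' claim needs rows to be defined once and referenced by value class, rather than written out separately in each $\psi_j$.

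Finally, fix an index slip. Since $\bm{R}^0=\emptyset$, the invariant $R^k_j=U^k_j$ fails at $k=0$. State it as $R^{k+1}_j=U^k_j$. The base case $R^1_j=\psi_j(\emptyset)=U^0_j$ holds because at $\bm{R}=\emptyset$ every $\mathrm{st}_i$ is false, so only the initial disjunct and the sink rows $S=\emptyset$ fire. The paper makes the same shift.
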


\begin{toappendix}
  Let us be more precise about the syntax and semantics of IFP.
  For formula $\psi(R; \bar z)$ in $\FO[\sigma\cup\set{R}]$, where $k=|\bar{z}|$
  and $R$ is a \emph{new} relation of arity $k$, the IFP \emph{constructor} is
  \[ \ifp{R,\bar{x}}{\psi}(\bar{t}) \]
  where $\bar{t}$ is a tuple of terms of size $k$.
  The semantics is in terms of the \emph{inflationary fixed-point} $R^* = R^0 \cup R^1 \cup \cdots$
  where $R^0 \doteq \emptyset$ and $R^{i+1} \doteq R^i \cup \set{ \bar{z} \in U^k \mid \Au \vDash \psi(R_i; \bar{z}) }$,
  for a $\sigma$-structure $\Au$ with universe $U$.
  That is, at stage $i$, when constructing the set $R^{i+1}$, the relation $R$ inside $\psi$ is interpreted as $R^i$.
  Since $R^0 \subseteq R^1 \subseteq \cdots$, the sequence $\tup{R^i}_i$ converges in at most $U^k$ steps
  for any finite structure $\Au$.
  Then $\Au \vDash \ifp{R,\bar{x}}{\psi}(\bar{t})$ iff $\bar{t}\in R^*$.
  The syntax for IFP is simpler than the syntax for LFP as no requirements on the
  ``one-step'' formula $\psi$ are imposed.

  IFP is extended to fixed-points of multiple relations, simultaneously,
  by considering vectors $\bm{R}=\tup{R_1,R_2,\ldots,R_m}$ of relations
  instead of the single relation $R$.
  The denotation of each relation is initially empty, and they are extended (in parallel)
  with formulas $\psi_j(\bm{R}, \bar z)$ that may use the relations $R_1, R_2, \ldots, R_m$.
  The syntax for the inflationary fixed-point is $\ifp{\bm{R}, \bar{x}}{\psi}_j(\bar t)$
  and such a formula holds when the term $\bar{t}$ belongs to $R^*_j$.

  We construct an IFP formula to capture the convergence $\GEmb{\ell} \to \bm{E}^*$
  of an admissible and ordered game over a finite vocabulary $\V$.
  The idea is to construct unary relations for each index $i=1,2,\ldots,m$,
  where $m=|\V|$, to capture $\bm{E}^\ell_u=\initial{i}$.
  We use the order $\preceq$ on $\V$ obeyed by the game.

  Let $\Table : \atoms \times \powerset(\V) \to \V$ be the finite quotient table that
  describes the net, and consider the vector $\bm{R}=\tup{R_i}_{i=1,2,\ldots,m}$ of
  unary relations.
  The one-step formula $\psi(\bm{R}; u)=(\psi_1,\ldots,\psi_m)$ consists of $m$
  formulas $\psi^{start}_i \lor \psi^{next}_i$ that define the initial embeddings
  and their evolution according to $\Table$:
  \begin{alignat*}{1}
    \psi^{start}_i( u ) &\equiv \textstyle\bigvee_{\alpha{\in}\atoms, \Table(\alpha,\emptyset){=}\initial{i}} \alpha(u) \,, \\
    \psi^{next}_i(\bm{R}; u )  &\equiv
        \textstyle\bigvee_{\alpha{\in}\atoms, \emptyset{\subsetneq}S{\subseteq}\V, \Table(\alpha,S){=}\initial{i}} \row_{\alpha,S}(\bm{R}; u) \,,
  \end{alignat*}
  where the auxiliary formulas are:
  \begin{alignat*}{1}
    \row_{\alpha,S}(\bm{R} ; u)
        &\equiv \alpha(u) \land \forall v \biggl[ E(u,v) \Rightarrow \bigvee_{\initial{i}{\in}S} \curr{i}(\bm{R}; v) \biggr] 
        \land \bigwedge_{\initial{i}{\in}S} \exists v \bigl[ E(u,v) \land \curr{i}(\bm{R}; v) \bigr] \,, \\
    \curr{i}(\bm{R} ; v) &\equiv R_i(v) \land \textstyle\bigwedge_{\initial{i}{\prec}\initial{j}} \neg R_j(v) \,.
  \end{alignat*}

  \bigskip\noindent
  Theorem~\ref{thm:rec-compiler} in main text becomes the following
  Definition~\ref{def:rec-compiler:apx} and Theorem~\ref{thm:rec-compiler:apx}:

  \begin{apxdefinition}[IFP Compiler]
    \label{def:rec-compiler:apx}
    Let $\Net(\cdot,\cdot,\cdot)$ be a regular architecture for the atomic types in $\atoms$
    that have finite vocabulary $\V$, and admissible and ordered games.
    The \textbf{characterization} of the recurrent net is the vector
    $\tup{\Psi_1, \Psi_2, \ldots, \Psi_m}$ such that
    \[ \tup{\Psi_1, \Psi_2, \ldots, \Psi_m}(u) \ \equiv \ \ifp{\bm{R}; z}{\psi}(u) \,. \]
  \end{apxdefinition}

  \begin{apxtheorem}
    \label{thm:rec-compiler:apx}
    Let $\Net(\cdot,\cdot,\cdot)$ be a regular net for the atomic types in $\atoms$
    that has finite vocabulary $\V$ and whose game is admissible and ordered, and
    let $\tup{\Psi_1, \ldots, \Psi_m}$ be its characterization.
    For every pointed graph $(x,u)$ and $i=1,\ldots,m$,
    $\Phi(\tau(x,u)) = \initial{i}$ iff
    $x \vDash \Psi_i(u) \land \bigwedge_{\initial{i} \prec \initial{j}} \neg \Psi_j(u)$.
  \end{apxtheorem}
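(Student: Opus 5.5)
The plan is to show that the stages of the simultaneous inflationary iteration record, for each vertex, the \emph{history} of the game's run $\tup{\GEmb{k}}_{k\geq0}$, and that the guard $\curr{i}$ picks out the present state. I would then pass to the limit using admissibility.

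I would prove the following invariant by induction on $k\geq1$. For every graph $x$, vertex $u$ and index $i$,
\[ u\in R^k_i \iff \initial{i}\in H^{k-1}_u \doteq \set{\GEmbu{0}{u},\ldots,\GEmbu{k-1}{u}} \,. \]
A corollary of the invariant carries the main weight. Since the game is ordered, $H^{k-1}_u$ is a $\preceq$-chain whose $\preceq$-maximum is $\GEmbu{k-1}{u}$. Hence $x\vDash\curr{i}(\bm{R}^k;v)$ iff $\initial{i}=\GEmbu{k-1}{v}$, because $\curr{i}$ demands $R_i(v)$ and no $R_j(v)$ with $\initial{i}\prec\initial{j}$.

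This step is where I expect the main care is needed. Three points must be checked:
\begin{itemize}
  \item Repeated values $\GEmbu{k}{u}=\GEmbu{k+1}{u}$ are harmless.
  \item The order is only partial, but the visited set is a chain, so a unique maximal element exists.
  \item Antisymmetry of $\preceq$ is what guarantees that the current value is not dominated strictly by some earlier one.
\end{itemize}
If $\preceq$ were merely a preorder, $\curr{i}$ could fail. So I would first note that $\prec$ is read as the strict part of a genuine partial order on $\V$.

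The base case is stage $R^0=\emptyset$. Every $\curr{i}$ is false, so $\row_{\alpha,S}$ with $S\neq\emptyset$ is false, since its existential conjunct fails. Only $\psi^{start}_i$ fires, giving $R^1_i=\set{u\mid \GEmbu0u=\initial{i}}$, as $\GEmbu0u=\Eop\alpha(\emptyset)=\Table(\alpha,\emptyset)$.

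For the inductive step from $k$ to $k+1$, inflation keeps $R^k$, and $\psi^{start}_i$ re-adds $\GEmbu0u$. By the corollary, the set $\set{\initial{j}\mid \exists v\,(E(u,v)\land\curr{j}(\bm{R}^k;v))}$ equals $S^{k-1}_u=\set{\GEmbu{k-1}{v}\mid E(u,v)}$. The two quantified conjuncts of $\row_{\alpha,S}$ are exactly the condition $S=S^{k-1}_u$ (as in Definition~\ref{def:type-formula}). Hence $\psi^{next}_i$ holds at $u$ iff $\Table(\atp(u),S^{k-1}_u)=\initial{i}$, i.e.\ iff $\initial{i}=\GEmbu{k}{u}$, whenever $u$ has neighbours. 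For an isolated vertex $\GEmbu{k}{u}=\Table(\alpha,\emptyset)=\GEmbu0u$, which is already supplied by $\psi^{start}_i$. This yields $H^k_u$.

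To conclude, fix a finite graph $x$. Since $\V$ is finite and each vertex's run is $\preceq$-non-decreasing, it can strictly increase only finitely often, so the run is eventually constant, equal to some $\bm{E}^*_u$. Admissibility gives $\bm{E}^*_u=\Phi(\tau(x,u))$. The inflationary fixed point is therefore $R^*_i=\set{u\mid \initial{i}\in H^\infty_u}$, where $H^\infty_u$ is the (finite) chain of all visited states with maximum $\Phi(\tau(x,u))$.

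Thus $x\vDash\Psi_i(u)\land\bigwedge_{\initial{i}\prec\initial{j}}\neg\Psi_j(u)$ iff $\initial{i}$ is visited and no strictly larger state is visited. This holds iff $\initial{i}$ is the maximum of the chain, i.e.\ iff $\initial{i}=\Phi(\tau(x,u))$.

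Linear size in the table is immediate, since there is one $\row_{\alpha,S}$ disjunct per table entry. Monotonicity of the game is never used.
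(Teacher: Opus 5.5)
Your proposal is correct and follows the paper's proof: your history invariant $u\in R^k_i\iff\initial{i}\in H^{k-1}_u$ and its corollary identifying $\curr{i}(\bm{R}^k;v)$ with $\GEmbu{k-1}{v}=\initial{i}$ are the paper's two inner lemmas up to an index shift, and the conclusion uses the same appeal to admissibility at the fixed point. You are slightly more careful than the paper in three places: you send successor-less vertices through $\psi^{start}_i$ (the paper's inductive step tacitly assumes a row with $S=S^{k-1}_u\neq\emptyset$ exists), you make the reliance on antisymmetry of $\preceq$ explicit, and you derive eventual constancy of each run from the order together with finiteness of $\V$.
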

  \begin{proof}
    Let $x$ be a graph, let $\tup{\GEmb{k}}_{k\geq0}$ be the game run for graph $x$, and
    let $\tup{(R^k_1, R^k_2, \ldots, R^k_m)}_{k\geq 0}$ be the sequence of interpretations computed
    by the IFP semantics of $\ifp{\bm{R}; z}{\psi}$.

    \begin{innerlemma}
      \label{lemma:ifp}
      For $k\geq 0$, $u\in R^{k+1}_i$ iff $\GEmbu{l}{u} = \initial{i}$ for some $l\leq k$.
      (The shift on the index $k$ is due to the fact that $R^0_i=\emptyset$ whereas $\GEmbu{0}{u}$ is already an embedding.)
    \end{innerlemma}
    \noindent\emph{Proof of Lemma:}
    By induction of $k\geq0$:
    \begin{enumerate}[leftmargin=*, label=--]
      \item \textbf{Base case $k=0$.}

        \smallskip
        $(\Rightarrow)$. Assume $u\in R^1_i$. Hence, since $R^0_i$ is empty, $x \vDash \psi^{start}_i(u)$.
        Then, there is $\alpha$ such that $\atp(u)=\alpha$ and $\T(\alpha,\emptyset)=\initial{i}$
        implying $\GEmbu{0}{u} = \T(\atp(u),\emptyset) = \initial{i}$.

        \smallskip
        $(\Leftarrow)$. Assume $\GEmbu{0}{u} = \T(\atp(u),\emptyset) = \initial{i}$.
        Then, $x\vDash \psi^{start}_i(u) \implies u\in R^1_i$.

      \medskip
      \item \textbf{Inductive step $k>0$.}

        \smallskip
        $(\Rightarrow)$. Assume $u\in R^{k+1}_i$. Two cases: $u\in R^k_i$ or $x \vDash \psi^{next}_i(\bm{R}^k; u)$.
        In the first case, by inductive hypothesis, $\GEmbu{l}{u}=\initial{i}$ for some $l<k$.
        In the second case, by definition, there is $(\alpha,S)$ such that $\T(\alpha,S)=\initial{i}$ and $x \vDash \row_{\alpha,S}(\bm{R}^k; u)$.
        That is,
        \[ x \vDash \alpha(u) \land \forall v\biggl[ E(u,v) \implies \bigvee_{\initial{j}\in S} \curr{j}(\bm{R}^k; v) \biggr] \land \bigwedge_{\initial{j}\in S} \exists v \left[ E(u,v) \land \curr{j}(\bm{R}^k; v) \right] \]
        By definition, $\curr{j}(\bm{R}^k; v)$ holds iff $v \in R^k_j$ and $ v\notin R^k_{j'}$ for $\initial{j} \prec \initial{j'}$.
        Therefore, by inductive hypothesis, $\curr{j}(\bm{R}^k; v)$ holds iff $\GEmbu{l}{v}=\initial{j}$ for some $l<k$, and $\GEmbu{l}{v}\neq\initial{j'}$
        for all $l<k$ and $\initial{j}\prec\initial{j'}$. Since the game is ordered, $\curr{j}(\bm{R}^k; v)$ holds iff $\GEmbu{k-1}{v}=\initial{j}$.
        Hence, the vertex $u$ satisfies $\atp(u)=\alpha$, and $S = \set{ \GEmbu{k-1}{v} \mid E(x,u,v) }$ implying $\GEmbu{k}{u} = \T(\alpha,S) = \initial{i}$.

        \smallskip
        $(\Leftarrow)$. Assume $\GEmbu{l}{u} = \initial{i}$ for some $l\leq k$. We need to show $u\in R^{k+1}_i$.
        Two cases: $l < k$ or $l=k$.
        In the first case $l<k$, by inductive hypothesis, $u \in R^k_i$ which implies $u \in R^{k+1}_i$ since $R^{k+1}_i \supseteq R^k_i$.

        \smallskip
        Second case $l=k$.
        Since the game is ordered, $\GEmbu{k-1}{u} \prec \initial{i}$.
        Therefore, there is $(\alpha,S)$ such that $\atp(u)=\alpha$, $\T(\alpha,S)=\initial{i}$, and $S=\set{\GEmbu{k-1}{v} \mid E(x,u,v)}$.

        \smallskip
        Let $v$ be a vertex such that $E(x,u,v)$. By inductive hypothesis, $v\in R^k_j$ where $\GEmbu{k-1}{v}=\initial{j}$.
        On the other hand, since the game is ordered, $\GEmbu{l}{v}\neq\initial{j'}$ for $l<k-1$ and $\initial{j}\prec\initial{j'}$.
        Hence, $\curr{j}(\bm{R}^{k}; v)$ holds.
        Likewise, if $\initial{j}\in S$, there is vertex $v$ such that $E(x,u,v)$ and $\curr{j}(\bm{R}^{k}; v)$.
        Therefore, $x \vDash \psi^{next}_i(\bm{R}^{k}; u)$ which implies $u\in R^{k+1}_i$.
        \hfill\ensuremath{\scaleobj{.75}{\blacksquare}} 
    \end{enumerate}

    \begin{innerlemma}
      \label{lemma:ifp2}
      For $k\geq 0$, $\GEmbu{k}{u}=\initial{i} \iff x \vDash \curr{i}(\bm{R}^{k+1}; u)$.
    \end{innerlemma}
    \noindent\emph{Proof of Lemma:}
      If $\GEmbu{k}{u}=\initial{i}$, then $u \in R^{k+1}_i$ and $u \notin R^{k+1}_j$ for $\initial{i}\prec\initial{j}$
      since the game is ordered. Hence, $x \vDash \curr{i}(\bm{R}^{k+1}; u)$.
      Conversely, assume $x \vDash \curr{i}(\bm{R}^{k+1}; u)$.
      By definition, $u \in R^{k+1}_i$ and $u\notin R^{k+1}_j$ for $\initial{i}\prec\initial{j}$.
      By Lemma~\ref{lemma:ifp} and ordered game, $\GEmbu{k}{u}=\initial{i}$.
      \hfill\ensuremath{\scaleobj{.75}{\blacksquare}} 

    \bigskip\noindent
    At the fixed point, $x\vDash \Psi_i(u)$ iff $u \in R^*_i$.
    Hence, $x \vDash \Psi_i(u) \land \bigwedge_{\initial{i} \prec \initial{j}} \neg\Psi_j(u)$
    iff $\GEmbu{k}{u}$ has converged to $\initial{i}$ iff $\Phi(\itype(x,u)) = \initial{i}$
    (since the game is admissible).
  \end{proof}

  Definition~\ref{def:rec-compiler:apx} and Theorem~\ref{thm:rec-compiler:apx} require no monotonicity:
  the one-step formulas contain negation of the quantified predicates.
  This is the precise sense in which the IFP compiler is more general than the \Lmu compiler below
  that require the game to be monotone, but output \Lmu formulas.
  In exchange, the size of the IFP formula is linear in $|\T|$.
\end{toappendix}

\Omit{
  \subsection{Cascading Architectures}

  A cascading architecture is a sequence $\tup{\Net_k}_{k=0}^K$ of $K$ nets, each one defining an
  admissible and ordered game of finite vocabulary, and correspondences between the vocabulary of
  the net at depth $k$ and the atomic types at depth $k+1$.
  For an input graph, the first net $\Net_1$ yields embeddings for the graph vertices.
  The correspondence maps such embeddings into initial embeddings for the following net,
  which is run to yield new embeddings for the vertices. The process continues until the
  last net yields final embeddings for the vertices.

  Under the stated conditions, the compiler yields an IFP formula for each net in the sequence.
  These formulas are then composed to define an overall formula for the cascade. Such a formula
  is a $K$-nesting of IFP formulas: fixed points that are defined in terms of fixed points
  of lesser complexities. Since increasing the nesting depth of IFP formulas results in
  increased expressivity \cite{refs}, cascading architectures do increase expressive power of nets.
}

\subsection{Examples}

\Omit{
  We illustrate the IFP translation with the task of computing reachability
  of red vertices in graphs with black and red vertices.
  A second example considers a contractive net that generates an infinite number
  of transient embeddings that oscillates throughout convergence, and yet it has
  an admissible game over a finite and ordered vocabulary.
}

\newcommand{\red}{\mathrm{r}}
\newcommand{\green}{\mathrm{g}}
\newcommand{\black}{\mathrm{b}}

Reachability of red vertices in graphs with black and red vertices
can be solved with a net that implements a Boolean propagator:
vertex $u$ gets a mark if it is already marked, or some neighbor is marked.
The net has atomic types $\set{\black,\red}$, width $W\,{=}\,1$,
clipped activation, and initial embeddings, $\initial\black\,{=}\,0$
and $\initial\red\,{=}\,1$, that are fixed points of isolated vertices.
\Omit{
  The parameters $\bm{A}_\alpha = \bm{B}_\alpha = 1$ and $\bm{C}_\alpha=0$
  for $\alpha \in \set{\black,\red}$ yields:
  \begin{alignat*}{1}
    u^{(0)}        \; &= \; \bracket{\atp(u) = \red} \,, \\
    u^{(\ell+1)}   \; &= \; \clip\!\bigl( u^{(\ell)} + \max\set{ v^{(\ell)} \mid E(u,v) } \bigr) \,.
  \end{alignat*}
  which corresponds to the Boolean propagator:
  \[ u^{(\ell+1)}_x = 1 \;\iff\; u^{(\ell)}_x = 1 \;\text{ or }\; \exists v ( E(x,u,v) \land v^{(\ell)}_x = 1 ) \,. \]
}
By Theorem~\ref{thm:monotone} (and \ref{thm:monotone:sufficient} in appendix),
the net is regular, the game is admissible and ordered over 
$\V=\set{\initial\black, \initial\red}$, and the table is:
\begin{center}
  \begin{tabular}{lcc}
    \toprule
    Atomic type $\alpha$ & Neighborhood $S$ & $\Table(\alpha, S)$ \\
    \midrule
    $\mathrm{black}$ & $S\subseteq\set{0}$ & $0$ \\
    $\mathrm{black}$ & $1\in S$ & $1$ \\
    $\mathrm{red}$   & any & $1$ \\
  \end{tabular}
\end{center}
The compiler yields the one-step IFP formulas: 
\begin{alignat*}{1}
  R_\black(u) \ &\equiv\  \black(u) \land \forall v \bigl[ E(u,v) \implies R_\black(v) \land \neg R_\red(v) \bigr] \,, \\
  R_\red(u)   \ &\equiv\  \red(u) \lor \bigl( \black(u) \land \exists v \bigl[ E(u,v) \land R_\red(v) \bigr] \bigr)
\end{alignat*}
that match the standard one-step formulas for reachability.

\Omit{
  \subsection{A Subtler Example}
  We consider two atomic types, $\black$ and $\red$, the parameters
  $\bm{A}_\alpha=0$, $\bm{B}_\alpha=1$, $\bm{C}_\black=0$, $\bm{C}_\red=1$,
  $\initial\black=0$, and $\initial\red=1$, and the activation function
  $\CLIP{0}{3}$. The resulting net is:
  \[ u^{(\ell+1)} = \CLIP{0}{3} \left( \max\set{ v^{(\ell)} \mid E(x,u,v) } + \bracket{\alpha = \red} \right ) \,. \]
  If is not difficult to show $\V=\set{\initial1=0,\initial2=1,\initial3=2,\initial4=3}$ which in contrast with the
  previous example contains non-initial embeddings.
  As before, the net is regular and the game is admissible and ordered.

  Unlike the reachability example, the network does not admit an obvious semantic
  description. Nevertheless, the compiler applies directly: given the weights, one
  computes $\V=\set{0,1,2,3}$, the table $\Table$, and then the IFP formula
  --- mechanically and without human insight into what the network computes.
  The formula is the description. This illustrates that the compiler's value
  lies not in reformulating known results but in providing a universal translation
  from weights to logic, independent of whether the computed function has a natural
  name.

  In this example, the table reduces to $\Table(\black, S) = \min(3, \max S)$ and
  $\Table(\red, S) = \min(3, 1 + \max S)$ for non-isolated vertices, and
  $\Table(\black,\emptyset)=0=\initial\black$ and $\Table(\red,\emptyset)=1=\initial\red$
  for isolated vertices. The table is then:

  \begin{center}
    \begin{tabular}{ccc}
      \toprule
      $\max S$ & $\Table(\black, S)$ & $\Table(\red, S)$ \\
      \midrule
      0 & $0 = \initial1$                  & $1 = \initial2$ \\
      1 & $1 = \initial2$                  & $2 = \initial3$ \\
      2 & $2 = \initial3$                  & $3 = \initial4$ \\
      3 & $3 = \initial4$                  & $3 = \initial4$ \\
    \end{tabular}
  \end{center}

  The one-step IFP formulas obtained from the compiler for $\tup{R_1, R_2, R_3, R_4}$ are:
  \begin{alignat*}{1}
    \psi_1 ( \bm{R}; u ) \ &\equiv\        \black(u) \lor \bigl(\black(u) \land \maxnbr0( \bm{R}; u )\bigr) \,, \\
    \psi_2 ( \bm{R}; u ) \ &\equiv\        \red(u)   \lor \bigl(\black(u) \land \maxnbr1( \bm{R}; u )\bigr) \lor \bigl(\red(u) \land \maxnbr0( \bm{R}; u)\bigr) \,, \\
    \psi_3 ( \bm{R}; u ) \ &\equiv\  \bigl(\black(u) \land \maxnbr2( \bm{R}; u )\bigr) \lor \bigl(\red(u) \land \maxnbr1( \bm{R}; u)\bigr) \,, \\
    \psi_4 ( \bm{R}; u ) \ &\equiv\  \bigl(\black(u) \land \maxnbr3( \bm{R}; u )\bigr) \lor \bigl(\red(u) \land \bigl(\maxnbr2( \bm{R}; u) \lor \maxnbr3( \bm{R}; u)\bigr)\bigr) \,, \\[.5em]
    \maxnbr0( \bm{R}; u)   \ &\equiv\  \forall v\bigl( E(u,v) \implies \neg R_1(v) \land \neg R_2(v) \land \neg R_3(v) \bigr) \,, \\
    \maxnbr{k}( \bm{R}; u) \ &\equiv\  \exists v\bigl( E(u,v) \land \curr{k}( \bm{R}; v ) \bigr) \land \textstyle\bigwedge_{k' > k} \neg\exists v\bigl( E(u,v) \land R_{k'}(v) \bigr) \qquad\text{(for $k>0$)} \,, \\
    \curr{k}( \bm{R}; u)   \ &\equiv\  R_k(u) \land \textstyle\bigwedge_{\initial{k}\prec\initial{j}} \neg R_j(u) \,.
  \end{alignat*}
}

\subsubsection{Contractive net.}
The same atomic types, width, and initial embeddings, but with parameters
$\bm{A}_\alpha\,{=}\,-\tfrac{1}{2}$ and $\bm{B}_\alpha\,{=}\,2$ for $\alpha\,{\in}\,\set{\black,\red}$,
$\bm{C}_\black\,{=}\,\tfrac{1}{2}$, and $\bm{C}_\red\,{=}\,1$.
The fixed-point embeddings for isolated black and red
vertices $\tfrac{1}{3}$ and $\tfrac{2}{3}$, resp., are
reached through infinite oscillating transient sequences.
%
%
The emergent vocabulary
$\V\,{=}\,\set{\initial1\,{=}\,\tfrac{1}{3}, \initial2\,{=}\,\tfrac{2}{3}, \initial3\,{=}\,\tfrac{7}{9}, \initial4\,{=}\,1}$
is finite and ordered, $\initial1\,{\prec}\,\initial2\,{\prec}\,\initial3\,{\prec}\,\initial4$,
with table:
\begin{center}
  \begin{tabular}{ccc}
    \toprule
    $\max S$       & $\Table(\black, S)$ & $\Table(\red, S)$ \\
    \midrule
    $0$            & $\initial1$ &  $\initial2$ \\
    $\initial1$    & $\initial3$ &  $\initial4$ \\
    $\initial2$    & $\initial4$ &  $\initial4$ \\
    $\initial3$    & $\initial4$ &  $\initial4$ \\
    $\initial4$    & $\initial4$ &  $\initial4$ \\
  \end{tabular}
\end{center}
The net meets the required properties (Proposition~\ref{prop:example} in appendix),
but neither Theorem~\ref{thm:contractive} 
nor Theorem~\ref{thm:monotone}; 
illustrating that the class extends beyond the two sufficient conditions.
\Omit{
  The one-step IFP formulas are:
  \begin{alignat*}{1}
    \psi_1 ( \bm{R}; u ) &\equiv       \black(u) \land \forall v( \neg E(u,v) ) \,, \\
    \psi_2 ( \bm{R}; u ) &\equiv       \red(u) \land \forall v( \neg E(u,v) ) \,, \\
    \psi_3 ( \bm{R}; u ) &\equiv       \black(u) \land \exists v( E(u,v) \land \curr1( \bm{R}; u) ) \\
                         &\qquad\quad\ \land \forall v( E(u,v) \implies \curr1( \bm{R}; u ) \,, \\
    \psi_4 ( \bm{R}; u ) &\equiv  \bigl(\red(u) \land \exists v( E(u,v))) \lor \\
                         &\bigl(\black(u) \land \exists v \bigl( E(u,v) \land \textstyle \bigvee_{k=2,3,4} \curr{k}( \bm{R}; v \bigr) \bigr) \,, \\
    \curr{k}( \bm{R}; u) &\equiv       R_k(u) \land \textstyle\bigwedge_{\initial{k}\prec\initial{j}} \neg R_j(u) \,.
  \end{alignat*}
Unlike reachability, the network does not admit an obvious description.
Nevertheless, the compiler applies and outputs
an IFP formula, mechanically and without human insight.
}

\begin{toappendix}

  \medskip\noindent
  The next proposition shows that the contractive net of the second example satisfies
  the conditions for applying the compiler. Namely, it is a converging regular net
  with finite vocabulary and whose game is admissible, ordered and monotone.

  \begin{apxproposition}[Second example]
    \label{prop:example}
    The net of the second example  
    is converging and regular; its vocabulary is 
    $\V=\set{\initial1\,{=}\,\tfrac{1}{3}, \initial2\,{=}\,\tfrac{2}{3}, \initial3\,{=}\,\tfrac{7}{9}, \initial4\,{=}\,1}$
    with the table of the main text; and its game is admissible, ordered by $\leq$, and monotone.
  \end{apxproposition}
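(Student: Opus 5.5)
The plan is to analyse the one-dimensional update directly:
\[ z' = \clip\bigl(-\tfrac12 z + 2m + c_\alpha\bigr), \qquad c_\black=\tfrac12,\ c_\red=1, \]
where $m$ is the maximum of the neighbours' values, or $0$ if there are none. I will first compute the operators $\Eop\alpha$, and then use them to predict and verify the limits of the run.

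\textbf{Regularity, vocabulary and table.} Since $|\bm{A}_\alpha|=\tfrac12<1$, the net is contractive in $z$ for $|\cdot|$ (Theorem~\ref{thm:contractive:sufficient}). So once convergence is established, Claim~2 of Theorem~\ref{thm:contractive} gives regularity and a well-defined vocabulary. For a fixed value of $m=\max S$, the map $z\mapsto \clip(-z/2+2m+c_\alpha)$ is a $\tfrac12$-contraction, and $\Eop\alpha(S)$ is its unique fixed point.
\begin{itemize}[leftmargin=*, label=--]
  \item When the unclipped fixed point lies in $[0,1]$, it is $z=\tfrac23(2m+c_\alpha)$.
  \item When that value exceeds $1$, the fixed point is $1$; one checks that $z=1$ is fixed because $-\tfrac12+2m+c_\alpha\geq 1$.
\end{itemize}
This gives $\Eop\black(\emptyset)=\tfrac13$, $\Eop\red(\emptyset)=\tfrac23$, $\Eop\black(\{\tfrac13\})=\tfrac79$, and $\Eop\alpha(S)=1$ in every other case with $\max S\geq\tfrac13$. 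Hence $\V_0=\{\tfrac13,\tfrac23\}$, $\V_1=\V_0\cup\{\tfrac79,1\}$, and $\V_2=\V_1$. This reproduces the table of the main text. Because every non-empty neighbourhood in $\V$ has $\max S\geq\tfrac13$, the table depends only on $\max S$, and it is non-decreasing in $\max S$. Since $S\sqsubseteq S'$ implies $\max S\leq\max S'$, and the empty set gives the smallest entries, the game is monotone for $\leq$.

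\textbf{Convergence of the run.} This is the main obstacle, because $|\bm{A}|+|\bm{B}|>1$ and Theorem~\ref{thm:contractive} does not apply. My tool will be a perturbation lemma for the scalar iteration: if $z_{\ell+1}=\clip(-z_\ell/2+k_\ell)$ and eventually $k_\ell\in[k-\epsilon,k+\epsilon]$, then eventually $z_\ell$ lies within $\tfrac43\epsilon$ of the fixed point for $k$. This holds because the map is a $\tfrac12$-contraction and $\clip$ is $1$-Lipschitz. A one-sided version also holds: if eventually $k_\ell\geq k$, then $\liminf z_\ell\geq$ the fixed point for $k$. I will prove the one-sided version by sandwiching between iterations with constant inputs.

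With this tool I will classify the vertices into four layers.
\begin{enumerate}[leftmargin=*, label=(\roman*)]
  \item Vertices without successors converge to $\tfrac13$ (black) or $\tfrac23$ (red).
  \item Since $m\geq0$, every vertex satisfies $\liminf z\geq\tfrac13$.
  \item A red vertex with a successor eventually has $k_\ell\geq 2(\tfrac13-\epsilon)+1>\tfrac32$, so it reaches $1$ and stays there. The same holds for a black vertex that has a successor other than an isolated black vertex: that successor has $\liminf\geq\min(\tfrac23,\tfrac79)$, because it is red-isolated, red with a successor, or black with a successor (the last with $\liminf\geq\tfrac79$ by the one-sided bound applied with $m\geq\tfrac13$). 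This gives $k\geq\tfrac43+\tfrac12-\epsilon$, so again the vertex is pinned at $1$.
  \item A black vertex all of whose successors are isolated black vertices has $m_\ell\to\tfrac13$, so it converges to $\tfrac79$.
\end{enumerate}
Thus every vertex converges, and the limit coincides with the value of the table on its neighbours' limits.

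\textbf{Game properties.} Starting from $\Eop{\atp(u)}(\emptyset)\in\{\tfrac13,\tfrac23\}$, an induction on the (monotone) table shows that the game run is non-decreasing in $\leq$, so the game is ordered. Its limit follows the same four-layer classification as in (i)--(iv): isolated vertices stay put, red vertices with successors jump to $1$, and black vertices reach $\tfrac79$ or $1$ within two steps according to their successors. Therefore the game limit equals $\Phi(\itype(x,u))$, and the game is admissible.
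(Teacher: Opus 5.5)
\textbf{Gap in the convergence step.} Your computation of $\Eop\alpha$ (the closed-form fixed point $\tfrac23(2m+c_\alpha)$, clipped at $1$), the vocabulary, the table, and the game part are correct. The game is ordered by induction using monotonicity of the table, and admissible because it reaches its limit within two steps. Deriving regularity from Claim~2 of Theorem~\ref{thm:contractive} once convergence is known is also a legitimate shortcut.

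The gap is at the step you yourself call the main obstacle: the one-sided perturbation lemma is false. The map $z\mapsto\clip(-z/2+k)$ is \emph{decreasing} in $z$. A lower bound on $z_{\ell+1}$ therefore needs an \emph{upper} bound on $z_\ell$, hence on $k_{\ell-1}$, so sandwiching against constant-input runs goes the wrong way. Concretely, take $k=\tfrac12$ (fixed point $\tfrac13$) and inputs alternating $\tfrac52,\tfrac12,\tfrac52,\ldots$. This is a black vertex whose $m_\ell$ alternates between $1$ and $0$. From $z_0=0$ the run alternates $1,0,1,0,\ldots$, so $\liminf z_\ell=0<\tfrac13$ even though $k_\ell\geq k$ throughout. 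This is exactly the pattern that ``since $m\geq0$'' in (ii) fails to exclude. Consequently three steps are unsupported: (ii) itself, the red case of (iii) that relies on it, and the black-with-a-successor subcase of (iii), which applies the one-sided bound with $m\geq\tfrac13$.

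Two smaller slips. First, the two-sided constant should be $2\epsilon$, not $\tfrac43\epsilon$: inputs alternating $k\pm\epsilon$ produce the 2-cycle $\tfrac23k\pm2\epsilon$ in the unclipped range. This is harmless, since you only use $\epsilon\to0$ there. Second, your claim that $\Eop\alpha(S)=1$ in the remaining cases with $\max S\geq\tfrac13$ holds for $S\subseteq\V$, which is all you need, but not in general: black needs $\max S\geq\tfrac12$.

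\textbf{Repair.} The conclusions are true. Replace the $\liminf$ reasoning by bounds valid at every step, as the paper does.
\begin{itemize}
\item After one step every value is $\geq\tfrac12$.
\item From then on, isolated black runs stay in $[\tfrac14,\tfrac12]$ and isolated red runs stay in $[\tfrac12,\tfrac34]$.
\item Since $z\leq1$, a vertex with a successor gets $z_{\ell+1}\geq\clip(-\tfrac12+2\cdot\tfrac14+\tfrac12)=\tfrac12$. By induction, for $\ell\geq1$ all values are $\geq\tfrac14$ and non-isolated values are $\geq\tfrac12$.
\item A red vertex with a successor, and a black vertex with a successor that is not an isolated black vertex (so $m_\ell\geq\tfrac12$), both satisfy $-\tfrac12 z_\ell+2m_\ell+c_\alpha\geq1$. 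Both are pinned at $1$ from step $2$.
\item A black vertex all of whose successors are isolated black vertices has $m_\ell$ equal to the common isolated-black trajectory, which tends to $\tfrac13$. By the two-sided lemma it converges to $\tfrac79$.
\end{itemize}

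\textbf{Comparison with the paper.} With this fix your four-way partition is complete and more direct than the paper's route. The paper uses the same step-1 and step-2 bounds, but then inducts on the longest-path length $h(u)$, with a separate case for vertices reaching a cycle. It applies its drift lemma only once the successors are known to converge. Your version shows outright that the limit depends only on the 2-hop neighbourhood, consistent with the depth-2 formulas $\gamma_1,\ldots,\gamma_4$ the paper extracts later.
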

  \begin{proof}
    Write $m_\ell(u) = \max\set{z_{v,\ell} \mid E(x,u,v)}$ (with value 0 for vertices without successors)
    so that $z_{u,\ell+1} = \clip(-z_{u,\ell}/2 + 2m_\ell(u) + \bm{C}_{\atp(u)})$.

    \medskip\noindent\textbf{Drift Lemma.} If $z_{\ell+1} = \clip(-z_\ell/2 + c_\ell)$
    with $c_\ell \to c^*$, then $z_\ell \to z^*$, the unique fixed point of
    $z \mapsto \clip(-z/2 + c^*)$: since $\clip$ is 1-Lipschitz,
    $|z_{\ell+1} - z^*| \leq \tfrac{1}{2}|z_\ell - z^*| + |c_\ell - c^*| \to 0$.

    \medskip\noindent\textbf{Convergence and limits.}
    After one step every vertex satisfies $z \geq \tfrac{1}{2}$:
    for black, $z_1 = \clip(-0/2 + 2m_0 + 1/2) = \clip(2m_0 + 1/2) \geq 1/2$, and
    for red, $z_1 = \clip(-1/2 + 2m_0 + 1) = \clip(2m_0 + 1/2) \geq 1/2$.
    Consequently, at step 2 every vertex with at least one successor has
    $z_2 = \clip(-z_1/2 + 2m_1 + \bm{C}) \geq \clip(-1/2 + 1 + 1/2) = 1$, i.e., exactly 1.
    Now classify vertices by $h(u) = \text{length of the longest path from $u$}$:
    \begin{enumerate}[leftmargin=*, label=(\alph*)]
      \item $h(u)=0$ (no successors): $z_{\ell+1} = \clip(-z_\ell/2 + \bm{C}_\alpha)$,
        a contraction with fixed point 1/3 (black) or 2/3 (red); the run converges there, oscillating.
      \item $h(u) = \infty$ ($u$ reaches a cycle): every cycle vertex has a successor, so it equals
         1 at step 2, and its aggregate from step 2 on includes its cycle successor, keeping
         it at 1 thereafter; by induction on the distance to the cycle, every vertex
         reaching a cycle is 1 from some step on, so $\Phi_u = 1$.
       \item $0 < h(u) < \infty$: by induction on $h(u)$, each successor's run converges, so
         $m_\ell(u) \to m^* = \text{max of the successors' limits}$, and the drift lemma
         gives $\Phi_u$ equals the unique fixed point of $z \mapsto \clip(-z/2 + 2m^* + \bm{C}_\alpha)$.
         Evaluating the fixed points:
           $m^* = 0$ gives 1/3 (black) and 2/3 (red);
           $m^* = 1/3$ gives 7/9 (black: $-z/2 + 7/6$ has fixed point 7/9) and 1 (red: $\clip(-1/2 + 5/3) = 1$); and
           $m^* \geq 2/3$ gives 1 for both types.
    \end{enumerate}
    This reproduces the table row by row, shows every reachable limit lies in \set{1/3, 2/3, 7/9, 1},
    and closing $\V_0 = \set{1/3, 2/3}$ under the rows yields exactly $\V$.
         
    \medskip\noindent\textbf{Regularity.}
    By (a)–(c), $\Phi_u$ equals the fixed point of $z \mapsto \clip(-z/2 + 2m^* + \bm{C}_\alpha)$
    with $m^*$ equal to the max of the successors' limits; since the map is a contraction in $z$,
    the frozen run from $\initial\alpha$ converges to that same fixed point, i.e., $\Phi_u = \Eop\alpha(S^*_u)$.

    \medskip\noindent\textbf{Game.}
    $\GEmbu{0}{u} = \T(\alpha,\emptyset) \in \set{\initial1, \initial2}$.
    Vertices without successors stay constant.
    A vertex whose successors all lack successors reaches its final row value at $k = 1$
    and stays (its successors are constant): runs $\initial1 \preceq \initial2 \preceq \initial3 \cdots$ or jumps to $\initial4$.
    Any other vertex has a successor whose value is $\geq\initial3$ from $k=1$ on,
    hence equals $\T(\alpha, \max \geq 7/9) = \initial4$ from $k=2$ on.
    All runs are $\preceq$-chains (the table rows are non-decreasing in max $S$),
    the limits coincide with $\Phi$ case by case, and  monotonicity holds since
    $S \sqsubseteq S'$ implies $\max S \leq \max S'$ and the rows are non-decreasing
    in $\max S$ (this last because if every $\bm{e} \in S$ is dominated by some
    $\bm{e}' \in S'$, then $\max S \leq \max S'$ since the vocabulary is non-negative
    and totally ordered, and the rows are non-decreasing in $\max S$ by inspection
    of the table; cf.\ Corollary~\ref{corollary}).
  \end{proof}
\end{toappendix}

\newcommand{\muf}{\ensuremath{\gamma}}
\newcommand{\IDX}{\mathrm{idx}}
\newcommand{\idx}[1]{\IDX_{#1}}
\newcommand{\Init}{\mathrm{Init}}
\newcommand{\FP}{\mathrm{FP}}
\newcommand{\lessp}[2]{#1 \sqsubseteq #2}
\newcommand{\sem}{\mathrm{sem}}
\newcommand{\denot}[2]{\ensuremath{(\!( #1 )\!)_{#2}}\xspace}

\section{Equivalence with \bsd}
\nosectionappendix
\begin{toappendix}
  \myappendixheader{Appendix for section: Equivalence with \bsd}
\end{toappendix}

We now establish the paper's central result: the vertex
queries implementable by \ratchet nets are exactly those
definable in the fragment \bsd of modal $\mu$-calculus.
\Omit{
  denoted by \Lmu, a propositional modal logic.\footnote{Both formalisms define
    unary queries: \set{u\in V(x) \mid x \vDash \Phi(u)} for
    IFP formula $\Phi(z)$, and \set{u\in V(x) \mid (x,u) \vDash \Psi}
    for \Lmu formula $\Psi$.
    Equivalence means these subsets coincide on every finite graph;
    the explicit free variable is replaced by the pointed-structure
    convention, a change of syntax, not of what is computed.
  }
}
The formulas in modal $\mu$-calculus, denoted as \Lmu, over a set
of atomic propositions $P$ and propositional variables $X$ are:
\[ \varphi ::= p \mid \neg p \mid \varphi\land\varphi \mid \varphi\lor\varphi \mid \mydiamond\varphi \mid \mybox\varphi \mid X \mid \mu X.\varphi \mid \nu X.\varphi \]
For the evaluation of formulas $\varphi$ at pointed graphs $(x,u)$ (semantics)
see elsewhere, e.g., \cite{arnold2001rudiments}.

A closed formula is in \sd if it is built from literals ($p, \neg p$),
conjunction, disjunction, $\mydiamond$, and $\mu$ only; it is in \pb if it is
built from literals, conjunction, disjunction, $\mybox$, and $\nu$ only.
The two classes are dual: negating a \sd formula and pushing negation to the atoms
yield a \pb formula and vice versa.
\bsd is the closure of $\sd\cup\pb$ under Boolean connectives; equivalently,
the Boolean closure of \sd alone.
Every \bsd formula is alternation-free, but \bsd does not contain the first level
$\Sigma^\mu_1 \cup \Pi^\mu_1$: it excludes least fixed points that use $\mybox$ and
greatest fixed points that use $\mydiamond$, such as $\mu X.\mybox X$ (well-foundedness)
and $\nu Y.\mydiamond Y$ (existence of an infinite path).
Proposition~\ref{prop:barrier} shows this exclusion is forced: those
formulas are not computable by any \ratchet net, so \bsd, not the first level
and not the alternation-free fragment, is the exact target for equivalence.

\bsd has a semantic reading that matches our dynamics. A \sd formula true at
$u$ is true on some finite truncation of the unfolding and remains
true on every extension. 
Dually, a \pb formula that fails is refuted on a finite truncation.
\bsd is the Boolean closure of finitely verifiable and finitely refutable
properties, the branching-time analogue of the obligation class of
\citet{MannaPnueli1990}.

\Omit{
  \alert{(** Simplify this paragraph **)}
  Furthermore, \bsd is not an ad-hoc class; it has a semantic characterization
  that matches the dynamics of our nets. Call the $k$-truncation of a pointed
  graph the $k$-hop unravelling in which the vertices at depth $k$ are made childless.
  A \sd formula that is true at $u$ is true already on some finite truncation,
  and remains true on every extension — its truth carries a finite witness; dually,
  a \pb formula that is false is falsified on a finite truncation.
  \bsd therefore consists of Boolean combinations of finitely-verifiable and
  finitely-refutable properties.
  This is the branching-time analogue of the obligation class in the safety–progress
  classification of \citet{MannaPnueli1990} -- Boolean combinations of guarantee (finitely verifiable)
  and safety (finitely refutable) — and, over $\omega$-words, of the languages of
  deterministic Staiger–Wagner automata, whose acceptance depends only on the set
  of visited states.
  The match with our model is structural, not nominal: round $k$ of an admissible game
  evaluates, in effect, the $k$-truncation with dead-end boundary (each vertex has seen
  $k$ hops of its neighborhood and nothing beyond), so admissibility forces every
  computable query to equal the limit of its truncation values; and in an ordered game
  each vertex ratchets up a fixed finite ladder, so its limit is determined by which
  rungs the run visits -- an occurrence-set semantics, precisely the Staiger–Wagner shape.
}


\begin{theoremrep}[Net-to-logic]
  \label{thm:net-to-logic}
  Let $\Net(\cdot,\cdot,\cdot)$ be a \ratchet net with vocabulary
  $\V=\set{\initial{1},\ldots,\initial{m}}$ of size $m$.
  There are \bsd formulas $\muf_1,\ldots,\muf_m$
  such that for any pointed graph $(x,u)$ and $i\,{\in}\,\set{1,\ldots,m}$:
  $\Phi(\itype(x,u))=\initial{i}$ iff $(x,u)\vDash\muf_i$.
\end{theoremrep}
\begin{proof}
  Let $x$ be a finite graph. Define the subsets
  $T_i = \set{ u \in V(x) \mid \Phi(\itype(x,u)) \succeq \initial{i} }$ for $i=1,\ldots,m$.
  For subsets $X_1,\ldots,X_m\subseteq V(x)$, define $\Theta_i(v; X_1, \ldots, X_m)$ as
  \[ \bigvee_{(\alpha,S):\T(\alpha,S)\succeq\initial{i}} \biggl( \alpha(v) \land \bigwedge_{j\in S} \exists w \bigl( E(v,w) \land X_j(w) \bigr) \biggr) \,. \]
  These are guarded formulas jointly positive in $X_i$ by construction (no negation appears).
  Note also that $\Theta_i$ contains no universal quantification over neighbors,
  unlike Definition~\ref{def:type-formula}, only the witnessing half of the
  neighborhood description is asserted.
  This is sound because $T_i$ is an up-set and the game is monotone (additional
  neighbor values can only move $\T(\alpha,\cdot)$ up the order; this is the content
  of the ($\Leftarrow$) direction of Lemma~\ref{lemma:soundness}), and it is what
  makes the compiled system $\mydiamond$-only.

  The formula $\Theta_i$ defines a vertex subset $F_i(X_1,\ldots,X_m) = \set{ v \in V(x) \mid x \vDash \Theta_i(v; X_1, \ldots, X_m) }$.
  Putting all of them together, we get the operator $\bar F: \powerset(V(x))^m \to \powerset(V(x))^m$
  that maps $(X_1,\ldots,X_m) \mapsto \bar F(X_1,\ldots,X_m) = (F_i(X_1,\ldots,X_m))_{i}$.
  This operator is monotone and it has a least fixed point ($\subseteq$-component wise)
  given by $(T_1,\ldots,T_m)$. Indeed,

  \begin{innerlemmarep}[Soundness]
    \label{lemma:soundness}
    For every $i=1,\ldots,m$ and vertex $v$:
    $v\in T_i \iff x \vDash \Theta_i(v; T_1, \ldots, T_m)$.
  \end{innerlemmarep}
  \noindent\emph{Proof of Lemma:}
    Let $S^*(v) = \set{ \initial{w} \mid E(v,w) }$ where $\initial{w}$ denotes $\Phi(\itype(x,w))$.
    By regularity and admissibility, $\Phi(\itype(x,v))=\T(\alpha, S^*(v))$ where $\alpha=\atp(v)$.
    \begin{enumerate}[leftmargin=*, label=$\bullet$]
      \item ($\Rightarrow$) If $\Phi(\itype(x,v)) \succeq \initial{i}$, the row
        $(\alpha,S)$ for $S=S^*(v)$ satisfies $\T(\alpha, S) \succeq \initial{i}$,
        and every $\initial{j}\in S$ is witnessed by an actual neighbor $w$
        (i.e., $w\in T_j$). So, $x \vDash \Theta_i(v; T_1, \ldots, T_m)$.
      \item ($\Leftarrow$) Suppose $(\alpha,S)$ is a row such that $T(\alpha,S)\succeq\initial{i}$
        and for each $\initial{j}\in S$, there is a neighbor $w$ such that
        $\initial{w}\succeq\initial{j}$. Then, $S \sqsubseteq S^*(v)$.
        By monotonicity of the game, $\T(\alpha, S) \preceq \T(\alpha, S^*(v))$.
        That is, $\initial{i} \preceq \T(\alpha, S) \preceq \T(\alpha, S^*(v)) = \Phi(\itype(x,v))$
        which implies $v\in T_i$.
        This is the only place in the proof where monotonicity of the game is needed.
        \hfill\ensuremath{\scaleobj{.75}{\blacksquare}} 
    \end{enumerate}
 
  \begin{innerlemmarep}[Minimality]
    \label{lemma:minimality}
    If $\bar T' = (T'_1,\ldots,T'_m)$ is any tuple of vertex subsets such that
    $x\vDash\Theta_i(v;\bar T') \implies v\in T'_i$ for every $i$, then
    $T_i \subseteq T'_i$ for every $i$.
  \end{innerlemmarep}
  \noindent\emph{Proof of Lemma:}
    We show by induction on $k$ the claim: $\GEmbu{k}{u} \succeq \initial{i} \implies u \in T'_i$
    for every $k$, $i$ and $u$.
    \begin{enumerate}[leftmargin=*, label=$\bullet$]
      \item \emph{Base case.} $\GEmbu{0}{u} = \T(\alpha,\emptyset)$.
        If $\T(\alpha,\emptyset) \succeq \initial{i}$, the row $(\alpha,\emptyset)$
        satisfies $\T(\alpha,\emptyset)\succeq\initial{i}$ giving $x \vDash \Theta_i(u; \bar T')$.
        Hence, by assumption, $u \in T'_i$.
      \item \emph{Inductive step.}
        Suppose the claim holds at $k$, and let $\GEmbu{k+1}{u}=\T(\alpha,S^k_u)$
        where $S^k_u=\set{\GEmbu{k}{v} \mid E(u,v)}$.
        If $\GEmbu{k+1}{u}\succeq\initial{i}$, the row $(\alpha,S^{k}_u)$ satisfies
        $\T(\alpha,S^k_u)\succeq\initial{i}$.
        For each $\initial{j}\in S^k_u$, the witnessing neighbor $v$ has
        $\GEmbu{k}{v}=\initial{j} \succeq \initial{j}$.
        By the inductive hypothesis, $v\in T'_j$.
        Then, $x \vDash \Theta_i(u; \bar T')$ via this row, hence $u \in T'_i$.
    \end{enumerate}
    Since $\V$ is finite and $\GEmb{k}{u}\in\V$ for all $k$,
    $\GEmbu{k}{u} \to \Phi(\itype(x,u))$ forces eventual equality:
    there is $K$ such that $\GEmbu{k}{u}=\Phi(\itype(x,u))$ for $k\geq K$
    (a convergent sequence in \Embeddings taking values in a finite set is eventually constant).
    Let $u$ be a vertex in $T_i$ (i.e., $\Phi(\itype(x,u))\succeq\initial{i}$).
    Therefore $\GEmbu{K}{u}\succeq\initial{i}$, and by the claim $u \in T'_i$.
    Hence, $T_i\subseteq T'_i$. \hfill\ensuremath{\scaleobj{.75}{\blacksquare}}

  \bigskip\noindent
  \textbf{Construction of the formulas $\muf_i$}

  \medskip\noindent
  First, translate the first-order formulas $\Theta_i$ into \Lmu formulas $\zeta_i$,
  syntactically, using
  \[ \alpha(v) \mapsto \alpha\,; \quad \exists w(E(v,w)\land X_j(w)) \mapsto \mydiamond X_j \]
  while preserving conjunctions and disjunctions.

  \begin{innerlemmarep}
    \label{lemma:translation}
    For every subsets $A_1,\ldots,A_m \subseteq V(x)$ and $i=1,\ldots,m$:
    $x \vDash \Theta_i(v; A_1, \ldots, A_m)) \iff
       (x,v) \vDash \zeta_i[ X_1, \ldots, X_m := A_1, \ldots, A_m]$.
  \end{innerlemmarep}
  \noindent\emph{Proof of Lemma:}
    Direct. \hfill\ensuremath{\scaleobj{.75}{\blacksquare}}

  \bigskip\noindent
  Apply now variable elimination (or \emph{Beki\v{c}'s principle}) \cite{arnold2001rudiments}
  to get formulas $\muf_i$ that characterize the least fixed-point $(T_1,\ldots,T_m)$ of $\bar F$.
  That is, define $\eta_1(X_2,\ldots,X_m) = \mu X_1.\zeta_1(X_1,\ldots,X_m)$ and define
  for $j=2,\ldots,m$, $\zeta^{(1)}_j = \zeta_j[X_1 := \eta_1(X_2,\ldots,X_m)]$.
  Next, define $\eta_2(X_3,\ldots,X_m) = \mu X_2.\zeta^{(1)}_2(X_2,\ldots,X_m)$
  and $\zeta^{(2)}_j = \zeta^{(1)}[X_2 := \eta_2(X_3,\ldots,X_m)]$ for $j=3,\ldots,m$.
  Continue until $\eta_m = \mu X_m.\zeta^{(m-1)}_m(X_m)$ has no variables.

  Construct $\theta_m\,{=}\,\eta_m$, and for $j=m-1,\ldots,1$:
  \[ \theta_j = \eta_j[X_m, \ldots, X_{j+1} := \theta_m, \ldots, \theta_{j+1}] \,. \]
  These are \Lmu formulas whose only operator is $\mu$.
  Finally, set $\theta_{m+1}=\bot$ and
  $\muf_i = \theta_i \land \bigwedge_{\text{$\initial{j}$ covers $\initial{i}$}} \neg \theta_j$
  where the cover of $\initial{i}$ is the set of minimal embeddings that are strictly above $\initial{i}$,
  and $\neg\theta_j$ is obtained syntactically by pushing the negation down to
  propositions, resulting in a formula with only $\nu$ operators.
  $\muf_i \in \bsd$: each $\theta_i$ is a closed \sd formula, and each
  $\neg\theta_i$, obtained by dualization, is a closed \pb formula.

  \bigskip\noindent
  \textbf{Correctness}

  \medskip\noindent
  We need to show $\Phi(\itype(x,u))=\initial{i} \iff (x,u) \vDash \muf_i$.
  Let $\bar T = (T_1, \ldots, T_m)$. By Lemma~\ref{lemma:soundness}, $\bar T$
  is a fixed point of operator $\bar F$, and Lemma~\ref{lemma:minimality}
  shows $\bar T$ is the least fixed point.

  By Lemma~\ref{lemma:translation}, the operator induced by $\zeta_1,\ldots,\zeta_m$
  under ordinary modal semantics is the same operator $\bar F$.
  Thus, the least-fixed point of the $\zeta$-system is also $\bar T$.
  By Beki\v{c} elimination, the constructed $\theta_1,\dots,\theta_m$ satisfy
  $x \vDash \theta_i(u)$ iff $u \in (\mathrm{lfp}(\bar F))_i = T_i$, for every $i$.

  Finally, $\Phi(\itype(x,u))=\initial{i}$ iff $u\in T_i$ and $u \notin T_j$ for
  $\initial{j}\succ\initial{i}$ iff $(x,u)\vDash\theta_i$ and $(x,u)\nvDash\theta_j$
  iff $(x,u) \vDash \muf_i$.
\end{proof}

Theorem~\ref{thm:net-to-logic} answers, for the counting-free, \bsd fragment,
the converse question that \citet{bollen2025:halting} leave open in the graded setting.
Its proof compiles the table into an intermediate system
of mutually recursive, guarded, negation-free fixed-point definitions (one per vocabulary
element, with one disjunct per row of $\T$) and then flattens this system into closed
\Lmu formulas with Beki\v{c}'s elimination \cite{arnold2001rudiments}.
This two-stage process determines the size of the output, which we measure against the
quotient table $\T$. 
The size of the intermediate system, equivalently, the simultaneous IFP formula in
Theorem~\ref{thm:rec-compiler}, is linear in $|\T|$ when measured as a system with
shared definitions (each row and each $\curr{i}$ defined once and referenced);
written out as a tree it incurs an additional factor $|V|$.
The closed formulas in \Lmu may be of exponential size because of
Beki\v{c}'s elimination. 
Theorem~\ref{thm:net-to-logic} provides two guarantees of different strength:
the closed terms $\gamma_i$ always exist and are computable from the net, 
while the linear size bound holds for the system of definitions from which they are flattened.
Considering the system as the primary output of the Theorem is consistent with practice
in verification, where the linear-time model-checking algorithms for the alternation-free
$\mu$-calculus evaluate such equation systems directly, without constructing
a flattened closed formula \cite{andersen1994,cleaveland1991linear,vergauwen1994efficient}.

The other direction maps \bsd formulas over propositions that capture atomic
types into \ratchet nets. Propositions that do not correspond to atomic types
can be accounted for by considering the different Boolean valuations.

\begin{theoremrep}[Logic-to-net]
  \label{thm:logic-to-net}
  For every \bsd formula $\gamma$,
  there is a \ratchet architecture $\Net(\cdot,\cdot,\cdot)$ and register
  index $m$ such that for any pointed graph $(x,u)$: $(x,u)\vDash\muf$
  iff $[\Phi(\itype(x,u))]_m=1$.
\end{theoremrep}
\begin{proof}
  By definition, we assume $\gamma$ is equivalent to
  $\mathrm{\bf B}(\beta_1,\ldots,\beta_q,\neg\beta_{q+1},\ldots,\neg\beta_r)$,
  where $\mathrm{\bf B}$ is a Boolean combination $(\land,\lor)$ and
  each $\beta_j$ is a closed \sd formula.
  Further, we assume the bound variables are uniquely named.

  \medskip\noindent\textbf{Registers.}
  Enumerate the subformulas of the blocks bottom-up as $\gamma_1,\ldots,\gamma_m$
  (repetitions preserved) and let $b_i$ index the bodies such that
  $\gamma_m=\gamma$ and $\gamma_i = \mu X_i.\gamma_{b_i}$ when $\gamma_i$ is a fixed-point formula.
  Since blocks are \sd, every bound variable is a $\mu$-variable.
  The width of the net is $W=m+M$, with one register $r_i$ per subformula
  and one register $v_i$ per bound variable.
  Registers are partitioned into \textbf{core} registers (those for subformulas
  $\gamma_i$ lying inside a block and those for bound variables),
  and \textbf{derived} registers (the rest).
  Registers $v_i$ are also called ``latch registers'', and the others ``memoryless''.

  \medskip\noindent\textbf{Clusters.}
  For each block $\beta$, consider the directed graph over the bound variables
  in $\beta$, with edges $X\to Y$ if $Y$ appears within the scope of $X$, and
  edges $Y\to X$ if $X$ appears as a free variable in the body for $Y$.
  A cluster $C$ is a strongly connected component in this graph, and the
  variable $X$ is the head of $C$ iff every other variable $Y$ in $C$ is
  within the scope of $X$.

  \medskip\noindent\textbf{Atomic types.}
  As discussed above, we take the propositions in $\gamma$ to be the atomic types
  of the graphs; $\alpha \vDash p$ abbreviates that type $\alpha$ makes $p$ true.
  When the propositions are not disjoint or exhaustive, we shall consider their
  jointly Boolean valuations as the set of atomic types.

  \medskip\noindent\textbf{Initial embeddings and rules.}
  The initial embedding $\initial\alpha$ assigns $\bracket{ \alpha \vDash p }$ to
  registers of literal subformulas $p$ and $\neg p$ (their value under $\alpha$),
  and 0 to every other register.
  The update rules implemented by the parameters, ordered by register type, are:
  \begin{center}
    \begin{tabular}{cl}
      \toprule
      Form of $\gamma_k$                  & \multicolumn{1}{l}{Posted rule for register $r_k$} \\
      \midrule
      $p$                                 & $r_k := \bracket{ \atp(u) \vDash p }$ \\[3pt]
      $\neg p$                            & $r_k := 1 - \bracket{ \atp(u) \vDash p }$ \\[3pt]
      $\gamma_i \land \gamma_j$           & \makecell[l]{$r_k := \clip(r_i + r_j - 1)= \min(r_i, r_j)$} \\[3pt]
      $\gamma_i \lor  \gamma_j$           & \makecell[l]{$r_k := \clip(r_i + r_j) = \max(r_i, r_j)$} \\[3pt]
      $\mydiamond\gamma_i$                & $r_k := \bigoplus_S^{\max} r_i[v] = \max(\set{ r_i[v] \mid v \in S })$ \\[3pt]
      $\neg\gamma_i$                      & $r_k := 1 - r_i$ \\[3pt]
      $X_i$                               & $r_k := v_i$ \\[3pt]
      \makecell[c]{$\gamma_k = \mu X_k.\gamma_{b_k}$ \\ ($X_k$ is head of cluster $C$)} &
      $\begin{cases}
        r_j := v_j                                        & \qquad \text{(for $X_j \in C$)} \\
        v_j := \max(v_j, r_{b_j}) = \clip(v_j + r_{b_j})  & \qquad \text{(for $X_j \in C$)}
      \end{cases}$ \\[3pt]
      \makecell[c]{$\gamma_k = \mu X_k.\gamma_{b_k}$ \\ ($X_k$ isn't head)} & (Nothing; handled when solving the cluster's head) \\
      \bottomrule
    \end{tabular}
  \end{center}
 
  All rules fire synchronously, reading the previous step. The only aggregation
  is max; under the global convention $\oplus\emptyset = 0$ this evaluates $\mydiamond$
  correctly at vertices without successors ($\mydiamond\varphi$ is false there).
  All rules are compositions of clip with affine maps of the current embedding
  and the aggregate, so the construction is an instance of the architecture
  with $\oplus$ the pointwise maximum.
  All registers take values in $\set{0,1}$ (each rule maps Booleans to Booleans),
  so the reachable embeddings lie in $\set{0,1}^W$.

  \medskip\noindent\textbf{Semantic states.}
  Fix a graph $x$ and let $\rho_x$ be the environment assigning to each bound variable
  $X_i$ of $\gamma$ its denotation $\denot{X_i}{x} \subseteq V(x)$ in the simultaneous
  least solution of its block's equation system (equivalently, by Beki\v{c}'s principle,
  the nested least-fixed-point semantics).
  The \emph{semantic state} $z^\sem_u \in \set{0,1}^W$ at vertex $u$ is:
  \begin{enumerate}[leftmargin=*, label=--]
    \item $z^\sem_u[r_i] = \bracket{ (x,u,\rho_x) \vDash \gamma_i }$ for every subformula register, and
    \item $z^\sem_u[v_i] = \bracket{ u \in \denot{X_i}{x} }$ for every variable register.
  \end{enumerate}
  By definition, $(x,u,\rho_x) \vDash \gamma_i$ iff $z^\sem_u[r_i] = 1$. In particular,
  if $\gamma_i=\beta$ is a closed block, $(x,u) \vDash \beta$ iff $z^\sem_u[r_i] = 1$.

  \medskip

  \begin{innerlemma}[Iteration]
    \label{lemma:kleene}
    Let $g$ be a monotone self-map of $\set{0,1}^D$ for a finite set $D$, and let $z_0$
    satisfy $z_0 \leq g(z_0)$ and $z_0 \leq z^*$ for every fixed point $z^*$ of $g$.
    Then the iteration $z_{\ell+1} = g(z_\ell)$ is non-decreasing, stabilizes within
    $|D|$ steps, and its limit is the \emph{least fixed point} of $g$.
  \end{innerlemma}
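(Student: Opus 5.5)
The argument is a Kleene iteration on the finite lattice $(\set{0,1}^D,\leq)$ with componentwise order, in three steps: the iteration is non-decreasing, it is bounded by every fixed point, and it stabilizes quickly.

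\emph{Monotonicity of the iteration.} We show $z_\ell \leq z_{\ell+1}$ by induction on $\ell$. The base case $z_0 \leq g(z_0) = z_1$ is the hypothesis. For the inductive step, if $z_{\ell-1}\leq z_\ell$, then monotonicity of $g$ gives $z_\ell = g(z_{\ell-1}) \leq g(z_\ell) = z_{\ell+1}$.

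\emph{Bound by every fixed point.} Fix any fixed point $z^*$ of $g$; one exists, since $\set{0,1}^D$ is a complete lattice and Knaster--Tarski applies. We show $z_\ell \leq z^*$ by induction on $\ell$. The base case $z_0\leq z^*$ is the hypothesis. For the inductive step, $z_{\ell+1} = g(z_\ell)\leq g(z^*) = z^*$, again by monotonicity.

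\emph{Stabilization and leastness.} A non-decreasing chain in $\set{0,1}^D$ can strictly increase only by switching at least one coordinate from $0$ to $1$. Such a coordinate never returns to $0$, so there are at most $|D|$ strict increases. Hence there is some $\ell\leq |D|$ with $z_\ell = z_{\ell+1} = g(z_\ell)$, and the chain is constant from there on. This is exactly where finiteness of $D$ is used. The limit $\hat z = z_\ell$ is therefore a fixed point of $g$. By the previous step, $\hat z\leq z^*$ for every fixed point $z^*$, so $\hat z$ is the least fixed point.

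No step is a genuine obstacle. The only points needing care are the exact step count, where the pigeonhole argument on coordinates gives the bound $|D|$, and checking that the hypothesis ``$z_0\leq z^*$ for every fixed point'' is what replaces the usual requirement $z_0=\bot$. That hypothesis will matter when the lemma is applied to the latch registers $v_j$ of a cluster, which are initialized at $0$ but read inner registers already stabilized.
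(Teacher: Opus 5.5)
Your proof is correct and follows the paper's sketch: induction gives $z_0\leq z_1\leq\cdots\leq z^*$ for every fixed point $z^*$, and since each Boolean coordinate flips at most once, the run stabilizes within $|D|$ steps at the least fixed point. On your closing remark, the paper needs the hypothesis replacing $z_0=\bot$ for a different reason: literal registers start at their per-type constant values, which every fixed point shares, while all other registers start at $0$, and the lemma is applied to the whole block map at once rather than to latches reading already-stabilized registers.
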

  \noindent\emph{Proof sketch:}
    By induction: $z_0 \leq z_1 \leq \cdots \leq z^*$ for every fixed point $z^*$ of $g$.
    Each coordinate is Boolean and flips at most once.
    Hence, the run converges within $|D|$ steps to the least fixed point of $g$.
    \hfill\ensuremath{\scaleobj{.75}{\blacksquare}}    


  \begin{innerlemma}[Fixed points of the block map]
    \label{lemma:fp}
    Fix a graph $x$ and a block $\beta$ with register set $R_\beta$, and let $f$
    be the synchronous one-step map of the net restricted to $R_\beta$, a self-map
    of $\set{0,1}^{R_\beta \times V(x)}$. Then,
    \begin{enumerate}[leftmargin=*, label=$(\alph*)$]
      \item $f$ is monotone;
      \item a state $z$ is a fixed point of $f$ iff every memoryless register satisfies
        its rule with equality and every latch satisfies $v_j \geq r_{b_j}$,
      \item the variable part of every fixed point of $f$ is a pre-fixed point of the body
        operator $P$, where $P(w)_{j,u}$ is the value of $\gamma_{b_j}$ at $u$ under variable
        assignment $w$, while $z^\sem$ is a fixed point of $f$ whose variable part is $\mathrm{lfp}(P)$.
    \end{enumerate}
    Consequently, $\mathrm{lfp}(f) = z^\sem$ on $R_\beta$.
  \end{innerlemma}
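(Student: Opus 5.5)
The plan is to verify (a)–(c) directly from the rule table. We then obtain the consequence from Knaster–Tarski applied to the body operator $P$, combined with monotonicity of the memoryless part.

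For (a), we inspect the rules that can occur inside a block $\beta$. Since $\beta$ is a closed \sd formula, its subformulas are literals, $\land$, $\lor$, $\mydiamond$, variables $X_i$ and $\mu$-formulas. Derived registers, in particular the $\neg\gamma_i$ rule, live outside blocks and are not in $R_\beta$.
\begin{itemize}
\item Literal registers are constants, because they depend only on $\atp(u)$.
\item The $\land$ and $\lor$ rules compute $\min$ and $\max$.
\item The $\mydiamond$ rule is a max over neighbours.
\item Variable and cluster registers copy a latch ($r_j := v_j$).
\item Latches update by $v_j := \max(v_j, r_{b_j})$.
\end{itemize}
Each of these is a monotone function of the previous state, so the synchronous map $f$ on $\set{0,1}^{R_\beta\times V(x)}$ is monotone.

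For (b), a memoryless register has the form $r := g(z)$, so at a fixed point its value must equal $g(z)$. A latch has the rule $v_j := \max(v_j, r_{b_j})$, so it is stable iff $v_j = \max(v_j, r_{b_j})$, that is, iff $v_j \geq r_{b_j}$.

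For (c), let $z$ be a fixed point and let $w$ be its variable part. We argue by structural induction over the subformulas of $\beta$, ordered bottom-up as in the enumeration $\gamma_1,\ldots,\gamma_m$. The equalities from (b) force every memoryless register $r_i$ at every vertex to equal the truth value of $\gamma_i$ under the assignment $X_j \mapsto w_j$. The cases are as follows.
\begin{itemize}
\item Literals, $\land$, $\lor$ and $\mydiamond$ follow from the induction hypothesis and the semantics. The convention $\oplus\emptyset=0$ handles the $\mydiamond$ case at vertices without successors.
\item Variables and $\mu$-subformulas $\gamma_k=\mu X_k.\gamma_{b_k}$ have registers equal to $v_k$. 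Within the simultaneous system this is exactly the value of $X_k$ under $w$, which is what $\gamma_k$ denotes relative to the system's variables.
\end{itemize}
Hence $r_{b_j} = P(w)_j$, and the latch inequality gives $w \geq P(w)$, so $w$ is a pre-fixed point of $P$.

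For $z^\sem$ we use the definition of $\rho_x$. Its variable part is the simultaneous least solution, which is $\mathrm{lfp}(P)$. By the same induction, its subformula registers equal the semantic values, so every memoryless equality holds. Moreover $r_{b_j} = P(\mathrm{lfp}(P))_j = v_j$, so the latch inequality holds with equality, and by (b) $z^\sem$ is a fixed point of $f$.

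For the consequence, let $z$ be any fixed point of $f$. By Knaster–Tarski its variable part satisfies $w \geq \mathrm{lfp}(P)$, since every pre-fixed point lies above the least fixed point. By (c), every memoryless register of $z$ is a monotone function of $w$: it is a composition of $\min$, $\max$, max-aggregation and copies. It therefore dominates the corresponding register of $z^\sem$. So $z \geq z^\sem$, and since $z^\sem$ is itself a fixed point, $\mathrm{lfp}(f)=z^\sem$ on $R_\beta$.

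The main obstacle is making the identification in (c) precise for nested $\mu$-subformulas whose clusters differ. A register $r_k$ for an inner $\mu X_k$ may depend on variables of enclosing clusters. We must check that the simultaneous equation system over all bound variables of $\beta$, which defines $P$, really has the nested least-fixed-point semantics as its least solution. I would settle this first by invoking Beki\v{c}'s principle exactly as it is used in the definition of $\rho_x$. Only after that would I run the structural induction.
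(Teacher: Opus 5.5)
Your proposal is correct and follows the paper's proof essentially step for step. Both argue monotonicity by inspecting the in-block rules, characterize fixed points by memoryless equalities plus latch inequalities $v_j \geq r_{b_j}$, and compose those equalities bottom-up to get $P(w)\leq w$. Both then use Beki\v{c}'s principle to show $z^\sem$ is a fixed point with variable part $\mathrm{lfp}(P)$, and Knaster--Tarski together with monotonicity of the memoryless registers to show it is least; the Beki\v{c} identification you single out as the main obstacle ($\mu X_k.\gamma_{b_k}$ under $\rho_x$ denotes exactly $\rho_x(X_k)$) is what the paper covers with ``copies match by definition of $\rho_x$'' and ``by Beki\v{c}'s principle.''
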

  \noindent\emph{Proof:}
    \begin{enumerate}[leftmargin=*, label=$(\alph*)$]
      \item literal registers are per-type constants; every other core rule is a composition of
        min, max, copies, and max-aggregates of neighbor registers, each monotone. 
      \item For memoryless rules, fixedness is the equality; for a latch, $v_j = \max(v_j, r_{b_j})$
        iff $v_j \geq r_{b_j}$.
      \item Forward: composing the exact memoryless equations of a fixed point $z^*$ from each body
        down to variables and constants, the latch inequalities read $w \geq P(w)$ on the variable part.
        Backward: the rules mirror the semantic clauses (min/max implement $\land\,/\,\lor$; the max-aggregate
        implements $\mydiamond$, with $\max\emptyset = 0$ matching falsity at successor-less vertices;
        copies match by definition of $\rho_x$), and at each latch $z^\sem[v_j]$ equals the body's value,
        so the inequality holds with equality; the variable part of $z^\sem$ is $\mathrm{lfp}(P)$ by Beki\v{c}'s
        principle.
    \end{enumerate}
    For the consequence: $\mathrm{lfp}(f) \leq z^\sem$ since $z^\sem$ is a fixed point.
    Conversely any fixed point $z^*$ has variable part $\geq \mathrm{lfp}(P)$ by Knaster–Tarski via (c),
    and its memoryless registers, exact monotone functions of the variables and constants, dominate as well;
    hence $z^* \geq z^\sem$.
    \hfill\ensuremath{\scaleobj{.75}{\blacksquare}}

  \medskip
  Lemma~\ref{lemma:fp} applies verbatim when the aggregates of some or all $\mydiamond$-registers are frozen
  at constants from a finite set $S \subseteq \set{0,1}^W$: freezing only changes the constants of the memoryless
  equations, not the shape of the system.

  \begin{innerlemma}[Blocks reach their semantics]
    \label{lemma:blocks}
    Let $\beta$ be a block of $\gamma$ with register set $R_\beta$, and fix a graph $x$
    with $n$ vertices. Restricted to $R_\beta$, the synchronous run of the net is non-decreasing,
    stabilizes within $|R_\beta|\times n$, and its limit is the semantic state:
    $\lim_\ell z_{u,\ell}[r] = z^\sem_u[r]$ for every register $r \in R_\beta$ and vertex $u$.
  \end{innerlemma}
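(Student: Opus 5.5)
The plan is to reduce the statement to Lemma~\ref{lemma:kleene} applied to the block map $f$ of Lemma~\ref{lemma:fp}, and then conclude with the consequence of Lemma~\ref{lemma:fp}, $\mathrm{lfp}(f)=z^\sem$ on $R_\beta$. There is one subtlety. The block's registers are fed only by registers inside the block: literal registers depend on the atomic type, and the other rules read block registers of the vertex itself or, through the max-aggregate, of its successors. So the restriction of the synchronous run to $R_\beta\times V(x)$ is exactly the iteration $z_{\ell+1}=f(z_\ell)$ on $\set{0,1}^{R_\beta\times V(x)}$. I would state this first, noting that derived registers (outside the blocks) never feed back into core registers, by construction of the rules table.

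\textbf{Step 1: initial state.} Take $D=R_\beta\times V(x)$, so $|D|=|R_\beta|\cdot n$, and $z_0$ equal to the initial embeddings restricted to $R_\beta$. Here literal registers hold $\bracket{\atp(u)\vDash p}$ (or its complement), and every other register holds $0$. Lemma~\ref{lemma:fp}(a) gives monotonicity of $f$.

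\textbf{Step 2: the two hypotheses of Lemma~\ref{lemma:kleene}.} For $z_0\le f(z_0)$: literal registers are per-type constants and so are unchanged, while every other register starts at $0$, the minimum, so the inequality is trivial there. For $z_0\le z^*$ for every fixed point $z^*$ of $f$: literal registers of $z^*$ must equal their constant values by Lemma~\ref{lemma:fp}(b), since they are memoryless and their rule is a constant; all remaining coordinates of $z_0$ are $0$. Both hypotheses therefore hold.

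\textbf{Step 3: conclusion.} Lemma~\ref{lemma:kleene} now yields that the run is non-decreasing, stabilizes within $|R_\beta|\cdot n$ steps, and has limit $\mathrm{lfp}(f)$. This limit equals $z^\sem$ on $R_\beta$ by the consequence of Lemma~\ref{lemma:fp}.

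The main obstacle I expect is the closure claim at the start. One must check that the cluster-head rule, which posts updates for registers $r_j$ and $v_j$ of all variables in the cluster, only reads body registers $r_{b_j}$ that lie in the same block. One must also check that derived registers ($\neg$, outer Boolean connectives) are genuinely outside $R_\beta$, so that the negation rule $1-r_i$, the only non-monotone one, never enters $f$. I would argue this from the register partition, using the fact that blocks are closed \sd formulas whose subformulas are built only from literals, $\land$, $\lor$, $\mydiamond$, $\mu$ and variables. A minor point is the $\neg p$ literal, which is constant per type and hence harmless for monotonicity.
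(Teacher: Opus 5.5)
Your proposal is correct and follows the paper's proof essentially step for step. The paper likewise notes that the rules of $R_\beta$ read only block registers and per-type constants, so the restricted run is the iteration of the monotone map $f$ of Lemma~\ref{lemma:fp}(a); it then checks the two hypotheses of Lemma~\ref{lemma:kleene} exactly as you do (literal registers start at their constants, which every fixed point shares by Lemma~\ref{lemma:fp}(b), and all other registers start at $0$) and concludes with $\mathrm{lfp}(f)=z^{\sem}$ on $R_\beta$.
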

  \noindent\emph{Proof:}
    The rules of $R_\beta$ read only registers of $R_\beta$ and per-type constants ($\beta$ is closed
    and its machinery internal to the block), so the restricted run is the iteration of the monotone
    map $f$ of Lemma~\ref{lemma:fp}(a).
    The initial state satisfies the hypotheses of Lemma~\ref{lemma:kleene}: literal registers start
    at their rules' constant values, which every fixed point of $f$ shares by Lemma~\ref{lemma:fp}(b),
    and all other registers start at 0, so $z_0 \leq f(z_0)$ and $z_0 \leq z^*$ for every fixed
    point $z_*$. By Lemma~\ref{lemma:kleene} the run is non-decreasing and stabilizes within
    $|R_\beta|\times n$ steps at $\mathrm{lfp}(f)$, which is $z^\sem$ by Lemma~\ref{lemma:fp}.
    \hfill\ensuremath{\scaleobj{.75}{\blacksquare}}

  \begin{innerlemma}[Derived registers]
    \label{lemma:derived}
    If the inputs of a derived register converge, the register converges to the
    corresponding connective of the limits, with delay one.
  \end{innerlemma}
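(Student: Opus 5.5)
The lemma concerns a derived register $r_k$. Its rule has one of three forms: $\clip(r_i+r_j-1)$, $\clip(r_i+r_j)$, or $1-r_i$, where $r_i,r_j$ are registers read at the same vertex in the previous step. Derived registers lie outside every block, so none of them is a $\mydiamond$-register, a variable copy, or a latch: all of those sit inside a block and are therefore core registers. Derived registers are also memoryless; their rule does not read their own previous value. The plan is therefore to treat each derived register as a pointwise function of its inputs, shifted by one step in time.

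Fix a graph $x$ and a vertex $u$. Suppose the input sequences $z_{u,\ell}[r_i]$ and $z_{u,\ell}[r_j]$ converge to limits $a$ and $b$. The synchronous semantics gives $z_{u,\ell+1}[r_k]=g(z_{u,\ell}[r_i],z_{u,\ell}[r_j])$, with $g$ one of $(s,t)\mapsto\clip(s+t-1)$, $(s,t)\mapsto\clip(s+t)$, or $s\mapsto 1-s$. Each such $g$ is continuous on $\R^2$ (respectively $\R$), so $z_{u,\ell+1}[r_k]\to g(a,b)$. This is the "delay one": the value at step $\ell+1$ is the connective applied to the inputs at step $\ell$.

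It remains to identify $g(a,b)$ with the intended connective. All registers take values in $\set{0,1}$, as noted after the rule table, so $a,b\in\set{0,1}$. On Booleans, $\clip(a+b-1)=\min(a,b)$ is $\land$, $\clip(a+b)=\max(a,b)$ is $\lor$, and $1-a$ is $\neg$. Moreover, a $\set{0,1}$-valued convergent sequence is eventually constant. Hence the inputs stabilize at some step $L$, and the derived register stabilizes at step $L+1$ with the exact value. This gives a finite convergence bound, matching the finite bound of Lemma~\ref{lemma:blocks}.

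The one step needing care is checking that derived registers form a well-founded layer. Their inputs are either core registers or derived registers of strictly smaller subformula index, since the enumeration is bottom-up and the Boolean skeleton $\mathrm{\bf B}$ has no fixed points or variables. The lemma itself is a single-step statement, but in the main proof it will be applied by induction over this layering: the base case is Lemma~\ref{lemma:blocks}, and each derived register adds one step of delay. Since $\mathrm{\bf B}$ contains no $\mydiamond$, $\mu$, or latches, no derived register feeds back into itself or into a block. The lemma therefore needs no monotonicity; only continuity and the Boolean value range are used.
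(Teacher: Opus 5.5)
Your proposal is correct and matches the paper's own one-line argument. Each derived rule is a memoryless continuous function of same-vertex inputs from the previous step, and since all registers stay in $\{0,1\}$, the clipped affine rules coincide with $\land$, $\lor$, $\neg$ on the limits. Your extra remarks on eventual constancy and on the well-founded layering of derived registers over the blocks go beyond the lemma itself, and they correctly anticipate how it is used in Step~1 of the proof of Theorem~\ref{thm:logic-to-net}.
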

  \noindent\emph{Proof sketch:}
    Each derived rule is a memoryless continuous function of its inputs;
    the embeddings always lie in $\set{0,1}^W$.
    \hfill\ensuremath{\scaleobj{.75}{\blacksquare}}

  \medskip\noindent\textbf{Step 1: run of the net.}
  Fix a graph $x$ with $n$ vertices.
  By Lemma~\ref{lemma:blocks} each block converges to its semantics.
  By Lemma~\ref{lemma:derived}, $r_m \to \bracket{ (x,u) \vDash \gamma }$ at every vertex $u$.
  The net converges on every graph within $W\times n$ steps.

  \medskip\noindent\textbf{Step 2: vocabulary and regularity.}
  For any finite $S \subseteq \set{0,1}^W$, the frozen run
  $z_0 = \initial\alpha, z_{\ell+1} = \Net(\alpha, z_\ell, S)$
  is the single-vertex instance of Step 1 with the aggregates
  held constant.
  By the remark after Lemma~\ref{lemma:fp}, this frozen system
  is again the iteration of a monotone map from a qualifying start,
  so by Lemma~\ref{lemma:blocks} it converges; derived registers
  follow by Lemma~\ref{lemma:derived}.
  Therefore $\Eop\alpha(S)$ is defined, the vocabulary $\V$ is
  well defined, and $\V \subseteq \set{0,1}^W$ is finite.

  By Lemma~\ref{lemma:fp} applied to the frozen instance, $\Eop\alpha(S)$ is,
  on each block's core, the least solution of the memoryless equations with
  aggregates frozen at $S$, subject to the latch inequalities.
  For regularity, fix $(x, u)$ with $\alpha = \atp(u)$ and
  $S_u = \set{\Phi_v \mid E(u,v)}$; blocks are closed, hence independent,
  and derived registers are determined by the core, so it suffices to
  prove $\Phi_u = \Eop\alpha(S_u)$ on each block's core.

  $(\leq)$ At the net's limit all of the block's equations hold at $u$
  with the neighbors at their limits, so $\Phi_u$ restricted to the core
  solves the frozen system with aggregates from $S_u$; by leastness,
  $\Eop\alpha(S_u) \leq \Phi_u$.

  $(\geq)$ Consider the global assignment that places $\Eop\alpha(S_u)$'s
  core values at $u$ and $\Phi$'s core values at every other vertex.
  At $u$ all constraints hold by construction, since $u$'s aggregates
  under this assignment are exactly those frozen from $S_u$.
  At any other vertex $w$, $u$'s values enter only through max-aggregates
  and were only lowered, so by monotonicity of the rules $w$'s equalities
  relax to inequalities of the form $f(z) \leq z$.
  The assignment is therefore a global pre-fixed point of the block map $f$,
  hence dominates $\mathrm{lfp}(f)$ by Knaster–Tarski, and $\mathrm{lfp}(f) = \Phi$
  on the block's core by Lemma~\ref{lemma:blocks}.
  Reading off vertex $u$ gives $\Eop\alpha(S_u) \geq \Phi_u$.
  Hence $\Phi_u = \Eop\alpha(S_u)$: the net is regular.

  \medskip\noindent\textbf{Step 3: the game is ordered, monotone, and admissible.}
  Define $\preceq$ on $\V$ by $\bm{e} \preceq \bm{e}'$ iff
  $\mathrm{core}(\bm{e}) \leq  \mathrm{core}(\bm{e}')$ componentwise.
  This is a partial order on $\V$: every element of $\V$ is a frozen-run limit,
  at which the derived equations hold exactly and no derived rule aggregates neighbors,
  so derived values are functions of same-vertex core values, and agreement on the core
  implies equality.
  Write $\sqsubseteq$ for the definition in Definition~\ref{def:game}.

  \medskip\noindent\emph{Monotone.}
  If $S \sqsubseteq S'$, then for every core register $r$ the aggregate satisfies
  $\max_{\bm{e}\in S} \bm{e}[r] \leq \max_{\bm{e}'\in S'} \bm{e}'[r]$:
  each element of $S$ is dominated by an element of $S'$, and extra elements
  can only increase a maximum. The least local solution is monotone in these
  frozen aggregates (standard monotonicity of least fixed points in parameters),
  so $\mathrm{core}(\T(\alpha,S)) \leq \mathrm{core}(\T(\alpha,S'))$,
  i.e., $\T(\alpha,S) \preceq \T(\alpha,S')$.

  \medskip\noindent\emph{Ordered.}
  $\GEmbu{0}{u} = \T(\alpha,\emptyset)$ and $\emptyset \sqsubseteq S$.
  Then, $\GEmbu{0}{u} \preceq \GEmbu{1}{u}$.
  If $\GEmbu{k}{v} \preceq \GEmbu{k+1}{v}$ for all $v$ then $S^k_u \sqsubseteq S^{k+1}_u$,
  and monotonicity gives $\GEmbu{k+1}{u} \preceq \GEmbu{k+2}{u}$.
  That is, every run is a $\preceq$-chain.

  \medskip\noindent\emph{Admissible.}
  $(i)$ $\GEmbu{k}{u} \preceq \Phi_u$ by induction:
  the base is $\T(\alpha,\emptyset) \preceq \T(\alpha,S_u) = \Phi_u$
  (monotonicity and regularity), and the inductive step is $\GEmbu{k+1}{u} = \T(\alpha,S^k_u) \preceq \T(\alpha,S_u) = \Phi_u$.
  $(ii)$ $z_{u,\ell} \preceq \GEmbu{\ell}{u}$ by induction:
  the base is $\mathrm{core}(\initial\alpha) \preceq \mathrm{core}(\T(\alpha,\emptyset))$
  (aggregates from $\emptyset$ are the minimum, and the frozen run ascends from $\initial\alpha$).
  For the inductive step, $\GEmbu{\ell}{u} \preceq \Eop\alpha(S^\ell_u)$
  (for $\ell=0$ by monotonicity from $\emptyset \sqsubseteq S^0$; for $\ell\geq1$ since $\GEmbu{\ell}{u} = \Eop\alpha(S^{\ell-1}_u) \preceq \Eop\alpha(S^\ell_u))$,
  and then $z_{u,\ell+1} = \Net(\alpha, z_{u,\ell}, \set{ z_{v,\ell} \mid E(u,v) } ) \leq \Net(\alpha, \Eop\alpha(S^\ell_u), S^\ell_u) = \Eop\alpha(S^\ell_u) = \GEmbu{\ell+1}{u}$
  on the core, using the inductive hypothesis, the monotonicity of the core rules, and the fixed-point property of
  $\Eop\alpha(S^\ell_u)$. By $(i)$ the core of $\tup{\GEmbu{k}{u}}_k$ is non-decreasing and bounded by $\Phi_u$,
  hence converges; by $(ii)$ its limit dominates $\Phi_u$; so it equals $\Phi_u$ on the core, and the
  derived registers, determined by the core on $\V$, follow. The game is admissible.

  \medskip\noindent The net is therefore a \ratchet net, and by Step 1, $[\Phi(\itype(x,u))]_m = \bracket{\gamma}$.
\end{proof}

\begin{toappendix}
  The construction in Theorem~\ref{thm:logic-to-net} cannot be pushed further.
  A $\mybox$ inside a $\mu$-block (or, dually, a $\mydiamond$ inside a $\nu$-block)
  would make the game's step-0 value $\T(\alpha,\emptyset)$ evaluate the modality
  vacuously -- correctly for the isolated vertex, but as an over-approximation that
  neighboring vertices then bootstrap from --
  and Proposition~\ref{prop:barrier} shows the resulting failure of admissibility is
  not an artifact of this construction: no admissible net computes $\mu X.\mybox X$ or
  $\nu Y.\mydiamond Y$. A fixed point over one of the opposite polarity fails one level
  higher for the same reason ($\text{EF EG } p$), which is where the halting machinery
  of \citeauthor{bollen2025:halting} earns its keep.
\end{toappendix}

Theorem~\ref{thm:logic-to-net} is the counting-free counterpart of \citeauthor{bollen2025:halting}'s
Theorem 5.1. Its proof realizes 
the classical cluster-elimination method for alternation-free equation systems.
The proof is aggregator-agnostic; only graded modalities would require threshold-sum
registers and hence multisets, and whether this route yields a simpler graded
construction than the halting-and-counting apparatus of \citeauthor{bollen2025:halting} is open.
What blocks going beyond \bsd is that stabilization over a finite vocabulary is a flat,
one-timescale process. 

\Omit{
  The proof is aggregator-agnostic; only graded modalities would require threshold-sum registers
  and hence multisets, and whether this route yields a simpler graded construction than the
  halting-and-counting apparatus of \citeauthor{bollen2025:halting} is open.
  What blocks going beyond \bsd is that stabilization over a finite vocabulary is a flat,
  one-timescale process: composing fixed points of opposite polarity requires a certificate
  that the inner computation has stabilized, a certificate a converging run can synthesize
  with counting \cite{bollen2026:halting-vs-converging} but provably cannot without it
  (Prop.~\ref{prop:barrier}).
}

Besides the theorems, regularity is identified as the semantic condition
matching $\mu$-calculus: both make the limit value at a vertex depend only
on the limit value at its neighbors, not on the trajectory by which the limits are
reached.
Admissibility and finiteness play complementary roles:
they are not needed to state what a regular net's fixed points are, but they make
those fixed points computable by a terminating procedure and the resulting formula
finite. 

The fragment is also a ceiling as shown by Proposition~\ref{prop:barrier}
which assumes only regularity, finite vocabulary, and admissibility but no ordered
or monotone game.
The proof exploits a blind spot of stabilization.
On a directed path and on a self-loop, every vertex feeds the game the same local information,
so both runs traverse the same sequence $t_0,t_1,\ldots$ of table values; since $\V$ is finite,
this sequence stabilizes at some $t_K$. Admissibility then assigns the head of any path of
length $k\geq K$, where no infinite path exists, and the self-loop, where one does, the same
limit $t_K$. Thus, any readout must give both instances the same verdict,
while $\nu Y.\mydiamond Y$ and its relatives assign them different truth values.
It is open whether the fragment is a ceiling for undirected graphs.

\begin{propositionrep}[Barrier]
  \label{prop:barrier}
  Let $\Net(\cdot,\cdot,\cdot)$ be a converging regular
  net with finite vocabulary whose game is admissible (ordering and
  monotonicity are not assumed).
  Then, $\Net$ does not compute the queries: $\nu Y.\mydiamond Y$,
  $\mu X.\mybox X$, $\text{EG } p = \nu Y.(p \land \mydiamond Y)$,
  and $\text{EF EG } p = \mu X.(\mydiamond X \lor \nu Y.(p \land \mydiamond Y))$.
\end{propositionrep}
\begin{proof}
  Formally, ``Net computes $q$'' means there is a readout $h : \V \to \set{0,1}$ with
  $h(\Phi(\itype(x,u))) = q(x,u)$ for every pointed graph; any thresholded register
  is such an $h$.
  Fix the atomic type $\alpha$ of interest (for the formulas $\mu X.\mybox X$ and
  $\nu Y.\mydiamond Y$ any $\alpha$;
  for the two $p$-formulas, an $\alpha$ with $\alpha\vDash p$) and define the sequence
  $t_0 = \T(\alpha,\emptyset)$ and $t_{k+1} = \T(\alpha, \set{t_k})$ in $\V$.
  Both families below use only vertices of type $\alpha$.

  \medskip\noindent
  \textbf{First family: directed paths}
  Let $P_k$ have vertices $u_k \to u_{k-1} \to \cdots \to u_0$, all of
  type $\alpha$, where $u_0$ is a sink.
  We claim $\GEmbu{j}{u_i} = t_{\min(i,j)}$ for all $i,j$ by induction on $j$.
  The base case is $\GEmbu{0}{u_i} = \T(\alpha,\emptyset) = t_0 = t_{\min{i,0}}$.
  For the inductive step: if $i=0$, then $u_i$ is a sink, so $\GEmbu{j+1}{u_0} = \T(\alpha,\emptyset)=t_0=t_{\min(0,j+1)}$.
  If $i\geq1$, then $u_i$'s only successor is $u_{i-1}$, so
  $\GEmbu{j+1}{u_i} = \T(\alpha,\set{\GEmbu{j}{u_{i-1}}}) = \T(\alpha, \set{t_{\min(i-1,j)}}) = t_{\min(i-1, j)+1} = t_{\min(i,j+1)}$. 
  Hence the game's run at $u_k$ constant from step $k$ on with value $t_k$,
  and admissibility gives $\Phi(\itype(P_k,u_k)) = t_k$, so $h(t_k)=q(P_k,u_k)$
  for every $k\geq 0$.

  \medskip\noindent\textbf{Second family: self-loop.}
  Let $L$ be the single vertex $w$ of type $\alpha$ with the edge $(w,w)$.
  Its game run is $\GEmbu{0}{w} = t_0$ and $\GEmbu{j+1}{w} = \T(\alpha,\set{\GEmbu{j}{w}})$,
  i.e. exactly $\tup{t_j}_{j\geq0}$.
  Since \V is finite and $t_{j+1}$ is a function of $t_j$, the sequence is eventually periodic.
  Since the game is admissible it converges in \Embeddings.
  An eventually periodic convergent sequence over a finite set is eventually constant.
  So there is a $K$ with $t_j=t_K$ for all $j\geq K$, and $\Phi(\itype(L,w))=t_K$,
  so $h(t_K)=q(L,w)$.

  \medskip\noindent\textbf{Contradiction.}
  Every path in $P_K$ is finite, and $w$ lies on an infinite path. Hence,
  \begin{enumerate}[leftmargin=*, label=$\bullet$]
    \item For $\nu Y.\mydiamond Y$, $\mathrm{EG } p$, and $\mathrm{EF EG }p$ (all of which
      assert the existence of an infinite path; in the last two cases, one path whose all vertices 
      satisfy $p$: $q(P_k, u_K) = 0$ and $q(L,w)=1$. Thus, $h(t_K) = 0$ and $h(t_K) = 1$.
    \item For $\mu X.\mybox X$ (well-foundedness) the two values are exchanged: the finite path is
      well-founded, so $q(P_K,u_K)=1$, while the self-loop is not, so $q(L,w)=0$.
      Again $h(t_K)=1$ and $h(t_K)=0$.
  \end{enumerate}
  In each case $h$ is forced to take both values at the single vocabulary element $t_K$,
  contradiction.
\end{proof}

\begin{corollaryrep}
  \label{cor:separation}
  Over finite pointed graphs, as classes of unary queries:
  $(i)$ $\Sigma^\mu_1 \cup \Pi^\mu_1 \not\subseteq \bsd$, witnessed by $\mu X.\mybox X \in \Sigma^\mu_1$ and $\nu Y.\mydiamond Y \in \Pi^\mu_1$;
  and $(ii)$ \bsd is strictly contained in the alternation-free fragment, witnessed by $\text{EF EG }p$.
\end{corollaryrep}
\begin{proof}
  For the first claim, the formulas $\nu Y.\mydiamond Y$ and $\mu X.\mybox X$
  lie in $\Sigma^\mu_1\cup\Pi^\mu_1$, and neither is equivalent to any \bsd formula,
  since by Theorem~\ref{thm:logic-to-net} every \bsd formula is computed by a \ratchet
  net while Proposition~\ref{prop:barrier} excludes both.
  For the second claim, $\text{EF EG }p$ is alternation-free and not computed by any
  \ratchet net (Proposition~\ref{prop:barrier}).
\end{proof}

\Omit{ 
  \begin{theorem}[Forward equivalence (general)]
    \label{thm:net-to-logic}
    Let $\Net(\cdot,\cdot,\cdot)$ be a recurrent architecture over the atomic
    types in $\atoms$ that is regular, with vocabulary $\V=\set{\initial{1},\ldots,\initial{m}}$
    of size $m$, and whose induced game is admissible and ordered.
    There are \Lmu formulas $\muf_1,\ldots,\muf_m$ such that for
    any pointed graph $(x,u)$ and $i\,{\in}\,\set{1,\ldots,m}$: 
    $\Phi(\itype(x,u))=\initial{i}$ iff $(x,u)\vDash\muf_i$.
  \end{theorem}
  \begin{proof}
    %
    With loss of generality let us assume that the vocabulary is linearly ordered as
    $\initial{1}\prec\cdots\initial{m}$ since a partial order can be linearized,
    and let $\T$ be the table for the admissible game.

    For a tuple $\bm{S}=(S_1, \ldots, S_m)$ of vertex subsets that partition
    the vertices in a graph $x$, write $\idx{\bm{S}}(v)=i$ for the unique index
    $i$ such that $v\in S_i$.
    Likewise, $\IDX(\bm{e})$, for $\bm{e}\in\V$, is the index $i$ such that
    $\bm{e}=\initial{i}$.
    Let us define:
    \[ \FP(\bm{S}) \equiv
         \bigwedge_i \forall v \biggl( v \in S_i \implies
         \bigvee_{\substack{\alpha,S\,{\subseteq}\,\set{1,\ldots,m} \\ \T(\alpha,S)\,{=}\,\initial{i}}}
           \row_{\alpha,S}(\bm{S}; v) \biggr)
    \]
    where $\row_{\alpha,S}(\bm{S}; v)$ is as before, but with $S_j$ (a plain set variable)
    in place of $\curr{j}(\bm{R}; v)$; that is,
    \begin{alignat*}{1}
      \row_{\alpha,S}(\bm{S}; v) \equiv
           \alpha(v) &\land
           \forall w \biggl( E(v,w) \implies \bigvee_{j \in S} w \in S_j \biggr) \\
           &\land \bigvee_{j\in S} \exists w \biggl( E(v,w) \land w \in S_j \biggr) \,.
    \end{alignat*}

    $\FP(\bm{S})$ says that $\bm{S}$ is a static fixed point of the table
    update rule: each vertex $v$ is assigned exactly the class that $\T$ would predict
    for $v$ given its atomic type and the classes $\bm{S}$ assigns to $v$'s neighbors.
    Unlike the step $\GEmb{k} \to \GEmb{k+1}$ in the game, this is a self-referential
    constraint on a single partition, not a step from one partition to the next.

    Define the pointwise order $\sqsubseteq$ on partitions as $\lessp{\bm{S}}{\bm{S}'}$ iff
    $\forall v \left( \bigwedge_i \left( v \in S_i \implies \bigvee_{j \leq i} v \in S'_j  \right) \right)$;
    that is, for every vertex $v$, $\idx{\bm{S}}(v) \leq \idx{\bm{S}'}(v)$.
    Finally, define a formula that establish when partition $\bm{S}$ is
    \emph{``above'' the initial assignments:}
    \[ \Init(\bm{S}) \equiv
         \forall v \left( \bigwedge_\alpha \left( \alpha(v) \implies v \in \textstyle\bigcup_{j \geq i_\alpha} S_j \right) \right) \,;
    \]
    that is, for every vertex $v$, $\idx{\bm{S}}(v) \geq i_{\atp(v)}$.
    Then set 
    \begin{alignat*}{1}
      \Upsilon_i(u) \equiv \ 
        &\exists \bm{S} \bigl( \FP(\bm{S}) \land \Init(\bm{S}) \land (u \in S_i) \ \land \\
        &\qquad                \forall \bm{S}' \bigl[ \FP(\bm{S}') \land \Init(\bm{S}') \implies
                                                      \lessp{\bm{S}}{\bm{S}'} \bigr] \bigr) \,.
    \end{alignat*}

    $\Upsilon_i(u)$ says that among all partitions that are self-consistent with $\T$ and lie
    above the initial assignment, take the $\sqsubseteq$-smallest one, and ask whether $u$
    lands in class $i$
    We claim that this smallest partition is exactly the one given by the vertex's true limit
    class, so $\Upsilon_i(u)$ holds iff $\Phi(\itype(x,u)) = \initial{i}$:
    \begin{enumerate}[leftmargin=*, label=$\bullet$]
      \item $\bm{S}^* = \set{ \set{ u \mid \Phi(\itype(x,u)) = \initial{i} } \mid i}$ satisfies $\FP \land \Init$.
        By admissibility, $\Phi(\itype(x,u)) = \T(\alpha, \set{\Phi(\itype(x,v)) \mid v\in N(u)})$ for $\alpha=\atp(u)$,
        which is exactly $\row_{\alpha,S}(\bm{S}^*;u)$ holding at the index $i$ with $\T(\alpha,S)=\initial{i}$
        \alert{(There may be more than one $S$)}
        So, $x\vDash \FP(\bm{S}^*)$.
        Since $\GEmbu{0}{u} = \initial{\alpha} \preceq \Phi(\itype(x,u))$ (the game is ordered), $x \vDash \Init(\bm{S}^*)$.
      \item Minimality.
        Let $\bm{S}'$ satisfy $\FP \land \Init$. We show $\lessp{\GEmb{k}}{\bm{S}'}$ (pointwise) by induction on $k$.
        Taking $k\to\infty$ then gives $\lessp{\bm{S}^*}{\bm{S'}}$,
        since $\GEmbu{k}{u} \to \Phi(\itype(x,u))$ and each individual index sequence converges in finitely
        many steps ($\V$ is finite). Indeed,
        \begin{enumerate}[leftmargin=*, label=--]
          \item \emph{Base case.}
            $\Init(\bm{S}')$ gives $\idx{\bm{S}'}(u) \geq i_{\alpha} = \IDX(\GEmbu{0}{u})$ for every $u$ of atomic type $\alpha$.
          \item \emph{Inductive step.}
            Assume $\IDX(\GEmbu{k}{v}) \leq \idx{\bm{S}'}(v)$ for every vertex $v$.
            Since $x \vDash \FP(\bm{S}')$, $\idx{\bm{S}'}(u)$ is exactly the row of the table determined by $u$'s atomic type
            and the classes $\bm{S}'$ assigns to $u$'s neighbors.
            Since $\T$ inherits monotonicity from $\Net$ and $\oplus$,
            applying $\T$ to a pointwise-smaller neighbor assignment ($\GEmb{k}$, by the inductive hypothesis) cannot overshoot $\T$
            applied to $\bm{S}'$ neighbor assignment
            Hence $\IDX(\GEmbu{k+1}{u}) \leq \idx{\bm{S}'}(u) = \IDX(\T(\atp(u), \set{ \GEmbu{k}{v} })$.
        \end{enumerate}
        So $\bm{S}^*$ is the $\sqsubseteq$-least partition satisfying $\FP\land\Init$, and $\Upsilon_i(u) \iff u \in \bm{S}^*_i \iff \Phi(\itype(x,u))=\initial{i}$.
    \end{enumerate}

    By Lemma~\ref{lemma}, if $(x,u)\sim(x',u')$, then $\Phi(\itype(x,u))=\Phi(\itype(x',u'))$ and
    $x \vDash \Upsilon_i(u)$ iff $x' \vDash \Upsilon_i(u')$.
    That is, these formulas are \emph{bisimulation invariant over finite graphs and belong to MSO.}
    We finish by invoking the finitary form of the Janin–Walukiewicz (\citeyear{janin1996expressive}) theorem
    recently shown by \citet{colcombentetal2025:finiteJW}, that shows that the bisimulation-invariant
    fragment of MSO coincides with the modal $\mu$-calculus over finite transition systems, a question
    that had remained open for several decades.
    That is, for each $i=1,\ldots,m$, there is \Lmu formula $\muf_i$ that is equivalent over
    finite graphs to $\Upsilon_i$: for every pointed graph $(x,u)$, $x \vDash \Upsilon_i(u)$ iff $(x,u) \vDash \muf_i$.
  \end{proof}
}

\subsection{Examples}

Let's apply the net-to-logic translation to the previous example
that is ordered $\initial1\,{\prec}\,\initial2\,{\prec}\,\initial3\,{\prec}\,\initial4$,
and monotone (by table inspection). We obtain the \bsd formulas:
$\gamma_1\,{\equiv}\,\black \land \neg\mydiamond\top$ (black sink),
$\gamma_2\,{\equiv}\,\red \land \neg\mydiamond\top$ (red sink),
$\gamma_3\,{\equiv}\,\black \land \mydiamond\top \land \mybox(\black \land \neg\mydiamond\top)$ (black, with successors, all of them black sinks), and
$\gamma_4\,{\equiv}\,(\red \land \mydiamond\top) \lor (\black \land \mydiamond(\red \lor (\black \land \mydiamond\top)))$ (everything else).
Interestingly, this net, despite infinitely many transient embeddings, computes a fixpoint-free query of modal depth 2.

On the other hand, let us consider the \bsd formula $\gamma\,{=}\,\mu X.(\red \lor \mydiamond X) \land \neg\mu Y.(\green \lor \mydiamond Y)$
over a graph with atomic types \set{\red, \black, \green} (red, black and green).
$\gamma$ is the classic example for reachability–safety Boolean combinations (reach the red avoid the green).
The logic-to-net translator gives a net of width 2 (after simplification) that
tracks the two fixed point computations for reachability and safety.
A further application of the net-to-logic recovers $\gamma$ (see appendix).

\begin{toappendix}
  \subsection{Examples}
  \subsection{Formula $\gamma\,{=}\,\mu X.(\red \lor \mydiamond X) \land \neg\mu Y.(\green \lor \mydiamond Y)$.}

  $\gamma$ is the classic example for reachability–safety Boolean combinations (reach the red avoid the green).
  The logic-to-net translator gives a net of width 2 (after collapsing redundant dimensions that
  track the two fixed point computations for reachability and safety). A further application
  of the net-to-logic recovers $\gamma$.

  \subsubsection{Logic-to-net (Theorem~\ref{thm:logic-to-net}).}
  The mechanical construction outputs thirteen registers; collapsing copy chains and folding literals
  into the initial embeddings yields an equivalent two-register network, implementing two
  Boolean propagators $(s,t)$ (the first encodes ``a red vertex has been reached'', the second ``a green vertex has been reached'')
  for the two blocks $\beta_1=\mu X.(\red \lor \mydiamond X)$ and $\beta_2=\mu Y.(\green \lor \mydiamond Y)$.
  The initial embeddings are $\initial\red=10$, $\initial\green=01$, and $\initial\black=00$.
  The net has non-negative integer weights, clipped activation, and the initial embeddings are fixed points of isolated vertices.  
  The vocabulary is $\V = \set{00, 01, 10, 11}$ under the product (or componentwise) order on $\V$.
  The table is three rows by four aggregate values: $\T(\black, m) = m$, $\T(\red, m) = m \lor 10$, and $T(\green, m) = m \lor 01$
  where the aggregate $m=\oplus S$ belongs to \set{00,01,10,11}.

  \subsubsection{Net-to-logic (Theorem~\ref{thm:net-to-logic}).}
  There is one set variable $X_{\bm{e}}$ and formula $\Theta_{\bm{e}}$ per vocabulary item $\bm{e}\in\V$.
  The compiled system from the quotient table, after translation into $\mu$-calculus, is:
  \begin{alignat*}{1}
    \Theta_{00}\ &= \ \top \,, \\
    \Theta_{10}\ &= \ \red \lor \mydiamond X_{10} \,, \\
    \Theta_{01}\ &= \ \green \lor \mydiamond X_{01} \,, \\
    \Theta_{11}\ &= \ (\red \land  \mydiamond X_{01}) \lor (\green \land \mydiamond X_{10}) \lor (\black \land \mydiamond X_{11}) \lor (\black \land \mydiamond X_{10} \land \mydiamond X_{01}) \,.
  \end{alignat*}

  Applying Beki\v{c} elimination, $\theta_{00}=\top$, and $\theta_{10}\,{=}\,\mu X.(\red \lor \diamond X)\,{=}\,\text{EF }\red$.
  Symmetrically, $\theta_{01}\,{=}\,\text{EF }\green$.
  Finally, $\theta_{11}\,{=}\,\mu Z.((\red \land \mydiamond\theta_{01}) \lor (\black \land \mydiamond\theta_{10} \land\mydiamond\theta_{01}) \lor (\black \land Z)) \equiv \text{EF }\red \land \text{EF }\green$.
  Finally, the logical characterization obtained from net is
  $\Phi(\itype(x,u)) = \bm{e}$ iff $(x,u) \vDash \theta_{\bm{e}} \land \bigwedge_{\text{$\bm{e}'$ covers $\bm{e}$}} \neg\theta_{\bm{e}'}$.
  For the example, we get
  \begin{alignat*}{1}
    &\Phi(\itype(x,u)) = {00} \iff  (x,u) \vDash \neg\theta_{10} \land \neg\theta_{01} \land \neg\theta_{11} \,, \\
    &\Phi(\itype(x,u)) = {10} \iff  (x,u) \vDash \theta_{10} \land \neg\theta_{11} \,, \\
    &\Phi(\itype(x,u)) = {01} \iff  (x,u) \vDash \theta_{01} \land \neg\theta_{11} \,, \\
    &\Phi(\itype(x,u)) = {11} \iff  (x,u) \vDash \theta_{11} \,.
  \end{alignat*}
  Notice that $\theta_{11} \equiv \theta_{01} \land \theta_{10}$.
\end{toappendix}

\subsection{Experimental Validation}

We implemented both compilers and ran a differential test:
for 13 branching-time properties drawn from the model-checking literature,
we compile $\gamma$ to a net, extract the game table from that net, compile
the table back into an equation system, and compare its denotation against an
independent reference evaluator of \Lmu, on $\geq 30$ random pointed graphs
per property. Across 9,960 pointed-graph checks there is not a single discrepancy.
The property suite, graph generation, seeds, and per-property true/false splits
are in the appendix.

\Omit{
  We implemented both directions of the translation, logic-to-net and net-to-logic,
  and ran a differential-testing validation: for a curated set of 13 branching-time
  properties drawn from the model-checking literature, we compiled the formula to
  a net, recover an abstract game table from that net, compiled the table back to
  logic, and check that the two ends of the round trip agree on every vertex of
  $\geq30$ random pointed graphs.
  Across all 13 properties, 9,960 checks total, there is not a single discrepancy
  between the original formula's denotational truth value and the value read
  oﬀ the recompiled, un-flattened equation system. Full details in the appendix.
}

\begin{toappendix}
  \myappendixheader{Appendix: Experimental Validation}

  We implemented both directions of the compiler -- Theorem~\ref{thm:net-to-logic}
  (net-to-logic) and Theorem~\ref{thm:logic-to-net} (logic-to-net) -- and 
  ran a differential-testing validation: for a curated set of 13 branching-time
  properties drawn from the model-checking literature, we compile the formula to a net,
  recover an abstract game table from that net, compile the table back to logic, and
  check that the two ends of the round trip agree on every vertex of many random graphs.
  The results, depicted in Table~\ref{tab:experiments}, show that across
  all 13 properties, 9,960 checks total, there is not a single discrepancy
  between the original formula's denotational truth value and the value read
  oﬀ the recompiled, un-flattened equation system.
  This section reports the methodology, the properties tested, how the test
  graphs were generated, the results, and several findings.

  \subsection{Methodology}
  \label{sec:experiments:methodology}

  For a closed \bsd formula $\gamma$ over propositions $P$:
  \begin{enumerate}[leftmargin=*]
    \item \textbf{logic-to-net.} Compile $\gamma$ into a concrete net
      $\Net=(A,B,C,\rho{=}\clip)$ together with initial embeddings
      $\initial\alpha$, one per joint Boolean valuation $\alpha\in 2^P$,
      following the register table of the proof of Theorem~\ref{thm:logic-to-net}
      verbatim (one register per subformula, one latch per bound variable;
      no register minimization).
    \item \textbf{Extraction.} Recover a finite vocabulary $\V$ and quotient
      table $\T:\text{atoms}\times 2^\V\to\V$ from $\Net$ by simulating the
      frozen single-vertex system $z_{\ell+1}=\rho(Az_\ell+Bm+C)$ to a fixed
      point for every atom and every aggregate value $m$ reachable by
      componentwise max from already-discovered vocabulary elements (mirroring
      Step~2 of the proof of Theorem~\ref{thm:logic-to-net}).
      The order $\preceq$ is the componentwise order restricted to \emph{core}
      registers only (Step~3 of that proof): registers realizing a generic negation
      (re-attaching a dualized block's polarity) are antitone and are excluded,
      exactly as the proof's construction requires.
    \item \textbf{net-to-logic.} Compile $\T$ into the guarded, positive
      equation system $\{\zeta_i\}_{i=1}^m$  (one equation per vocabulary element),
      \emph{without} performing the Beki\v{c} elimination that flattens the system
      into closed formulas $\theta_i,\gamma_i$.
      The remark about Beki\v{c}-flattened formulas can be exponentially larger
      than the system they come from surfaces. But, since we only need to \emph{evaluate}
      the recompiled query, not display it, we evaluate the system directly by a
      global (simultaneous) Kleene iteration -- exactly the practice in verification --
      rather than incur the blow-up of flattening it first.
    \item \textbf{Recovery and comparison.} For vertex $u$, $\Phi(\itype(x,u))=i$
      iff $u\vDash\Psi_i$ and, for every $j$ with $i\prec j$ covering $i$,
      $u\nvDash\Psi_j$, the characterization provided by the theorem, evaluated
      directly against the system's fixed point (no Beki\v{c} needed for
      this either). The readout bit this yields at $u$ is compared against
      $(x,u)\vDash\gamma$, computed by an \emph{independent} reference evaluator
      (a direct denotational implementation of $\Lmu$ semantics: Kleene iteration
      for $\mu/\nu$, set operations for the Boolean connectives, direct neighbor
      tests for $\mydiamond/\mybox$) that \emph{shares no code with either compiler.}
  \end{enumerate}
  Membership in \bsd was checked mechanically for every formula in the suite; the
  checker is described precisely.
  The same checker was confirmed to \emph{reject} $\mu X.\mybox X$ and $\nu Y.\mydiamond Y$,
  the two queries excluded by Proposition~\ref{prop:barrier}.

  \subsection{Property Suite}
  \label{sec:experiments:suite}

  Thirteen properties drawn from standard branching-time vocabulary (CTL's $\text{EF}/\text{AG}/\text{EU}$,
  and Manna--Pnueli-style safety/guarantee combinations), chosen to exercise: pure $\sd$ blocks,
  pure $\pb$ blocks, flat top-level Boolean combinations of independent blocks, and \emph{same-polarity}
  nesting of one block inside another's own body, at depth 2 (\texttt{sequencing}) and
  depth 3 (\texttt{sequencing\_chain}, \texttt{invariant\_chain}).
  Propositions $p,q,r$ label vertices independently (not mutually exclusively): a vertex's atomic
  type is its own subset of $\{p,q,r\}$ that holds, so joint valuations, not single labels, play
  the role of $\atp(u)$.

  \medskip

  \begin{center}
    \begin{tabular}{@{}ll@{}}
      \toprule
      Name & Formula \\
      \midrule
      \texttt{reachability}                 & $\mathrm{EF}\,p \;{=}\; \mu X.(p\lor\mydiamond X)$ \\
      \texttt{safety}                       & $\mathrm{AG}\,p \;{=}\; \nu Y.(p\land\mybox Y)$ \\
      \texttt{existential\_until}           & $\mathrm{E}[p\,\mathrm{U}\,q] \;{=}\; \mu X.(q\lor(p\land\mydiamond X))$ \\
      \texttt{reach\_avoid}                 & $\mathrm{EF}\,p \land \lnot\mathrm{EF}\,q$ \\
      \texttt{reachability\_and\_invariant} & $\mathrm{EF}\,p \land \mathrm{AG}\,q$ \\
      \texttt{sequencing}                   & $\mathrm{EF}(p\land\mathrm{EF}\,q)$ \\
      \texttt{reach\_or\_invariant\_avoid}  & $\mathrm{EF}\,p \lor (\mathrm{AG}\,q\land\lnot\mathrm{EF}\,r)$ \\
      \texttt{mutual\_exclusion}            & $\mathrm{AG}\,\lnot(p\land q)$ \\
      \texttt{combined\_three\_way}         & $(\mathrm{EF}\,p\land\lnot\mathrm{EF}\,q)\land\mathrm{AG}\,\lnot r$ \\
      \texttt{reach\_both}                  & $\mathrm{EF}(p\land q)$ \\
      \texttt{implication\_invariant}       & $\mathrm{AG}(\lnot p\lor q)$ \\
      \texttt{sequencing\_chain}            & $\mathrm{EF}(p\land\mathrm{EF}(q\land\mathrm{EF}\,r))$ \\
      \texttt{invariant\_chain}             & $\mathrm{AG}(p\land\mathrm{AG}(q\land\mathrm{AG}\,r))$ \\
      \bottomrule
    \end{tabular}%
  \end{center}

  \medskip

  \noindent
  All thirteen were confirmed to be \bsd formulas. Two controls were also checked
  and confirmed \emph{not} in \bsd, matching Proposition~\ref{prop:barrier}:
  $\mathrm{AF}\,p=\mu X.(p\lor\mybox X)$ and
  $\mathrm{EG}\,p=\nu Y.(p\land\mydiamond Y)$.
  A further two formulas, nesting a closed block of the \emph{opposite} polarity
  inside another block's own $\mu/\nu$ body ($\mathrm{EF}(p\land\mathrm{AG}\,q)$
  and $\mathrm{AG}(p\to\mathrm{EF}\,q)$), were also checked and confirmed \emph{not}
  in \bsd.
  \texttt{reach\_both} and \texttt{implication\_invariant} are flat, single-block
  properties (a conjunctive reachability query and an invariant implication);
  \texttt{sequencing\_chain} and \texttt{invariant\_chain} extend \texttt{sequencing}'s
  depth-2 same-polarity nesting to depth 3. 

  \subsection{Graph Generation}
  \label{sec:experiments:graphs}

  Test graphs are random directed graphs (self-loops excluded): for a target
  size $n$ and edge probability $p_e$, each of the $n(n-1)$ ordered pairs
  $(u,v)$, $u\neq v$, is an edge independently with probability $p_e$, so the
  expected edge count is $n(n-1)p_e$. Each vertex's atomic type is sampled
  independently per proposition (Bernoulli with a per-formula, per-proposition
  probability $p_{\mathrm{prop}}$, chosen so that neither $\gamma$ nor
  $\lnot\gamma$ is a near-certainty -- see the true/false split reported for
  each property below), not by picking one of a fixed set of mutually exclusive
  labels.

  The main sweep, used for eleven of the thirteen rows of
  Table~\ref{tab:experiments}, takes $n\in\{6,10,14,20,25\}$, three edge
  probabilities per property (ranging over $\{0.06,0.12,0.25\}$,
  $\{0.08,0.15,0.30\}$, or $\{0.15,0.25,0.40\}$ depending on the formula --
  denser for formulas that need longer-range or higher-order reachability to
  be non-degenerate), and $4$ independent graphs per $(n,p_e)$ pair, for $60$
  graphs and $900$ pointed-graph checks per property. Expected edge counts over
  this sweep range from about $1.8$ ($n{=}6$, $p_e{=}0.06$) to about $240$
  ($n{=}25$, $p_e{=}0.40$). The remaining two rows use a smaller, single-size
  sweep, discussed below. Concretely:

  \medskip

  \begin{center}
    \begin{tabular}{@{}lcccc@{}}
      \toprule
      Property & $p$ & $q$ & $r$ & $p_e$ \\
      \midrule
      \texttt{reachability}                 & $0.12$ & --     & --     & $\{0.06,0.12,0.25\}$ \\
      \texttt{safety}                       & $0.85$ & --     & --     & $\{0.06,0.12,0.25\}$ \\
      \texttt{existential\_until}           & $0.55$ & $0.12$ & --     & $\{0.08,0.15,0.30\}$ \\
      \texttt{reach\_avoid}                 & $0.15$ & $0.15$ & --     & $\{0.08,0.15,0.30\}$ \\
      \texttt{reachability\_and\_invariant} & $0.15$ & $0.85$ & --     & $\{0.08,0.15,0.30\}$ \\
      \texttt{sequencing}                   & $0.20$ & $0.20$ & --     & $\{0.08,0.15,0.30\}$ \\
      \texttt{reach\_or\_invariant\_avoid}  & $0.12$ & $0.85$ & $0.12$ & $\{0.08,0.15,0.30\}$ \\
      \texttt{mutual\_exclusion}            & $0.35$ & $0.35$ & --     & $\{0.08,0.15,0.30\}$ \\
      \texttt{combined\_three\_way}         & $0.15$ & $0.15$ & $0.10$ & $\{0.08,0.15,0.30\}$ \\
      \texttt{reach\_both}                  & $0.30$ & $0.30$ & --     & $\{0.08,0.15,0.30\}$ \\
      \texttt{implication\_invariant}       & $0.30$ & $0.75$ & --     & $\{0.06,0.12,0.25\}$ \\
      \texttt{sequencing\_chain}            & $0.30$ & $0.30$ & $0.30$ & $\{0.15,0.25,0.40\}$ \\
      \texttt{invariant\_chain}             & $0.85$ & $0.85$ & $0.85$ & $\{0.06,0.12,0.25\}$ \\
      \bottomrule
    \end{tabular}%
  \end{center}


  \medskip
  \texttt{reach\_or\_invariant\_avoid} and \texttt{combined\_three\_way} involve
  three independent propositions, hence $8$ atomic types instead of $2$--$4$;
  extraction produces a vocabulary of $64$ elements for both (versus at most
  $16$ for every two-proposition property), 
  and the resulting guarded equations are correspondingly
  large. Our net-to-logic evaluator is a plain global Kleene iteration with no
  cross-round memoization, so its cost scales with (rounds
  $\approx|\V|\!\cdot\!n$) $\times$ (total equation size) $\times$ ($n$, for
  the set operations). For these two properties this makes the main sweep's
  $n\in\{6,10,14,20,25\}$, $900$-check scale impractical to complete in the
  time available.
  Rows~7 and~9 of Table~\ref{tab:experiments} instead report a smaller, single-size
  sweep ($n=5$, the same three edge probabilities, two graphs per probability:
  $6$ graphs, $30$ checks each).

  \Omit{
    This cost is driven by vocabulary \emph{width} (independent flat blocks), not
    by $n$ or by nesting \emph{depth} as such: \texttt{sequencing\_chain} and
    \texttt{invariant\_chain} extract a vocabulary of $32$ elements each --
    twice these two properties' $16$-element two-proposition cousins, though
    still smaller than $64$ -- yet a single \texttt{eval\_system} call for either
    chain takes only on the order of a few seconds even at $n{=}14$, cheap enough
    to run the full main sweep up to $n=25$. A fourth and fifth independent flat
    block were also attempted (four- and five-proposition versions of
    \texttt{combined\_three\_way}'s pattern) but abandoned: for these,
    \emph{vocabulary discovery itself} -- \texttt{extract\_table}'s
    closure-under-max fixed point, before \texttt{eval\_system} is ever called --
    did not converge within our timeout, evidently because each additional
    independent flat block roughly squares the reachable-aggregate closure
    rather than adding to it linearly. Both formulas remain valid \bsd (flat
    top-level Boolean combinations, exactly the shape \S\ref{sec:experiments:suite}'s
    other flat properties use); they were set aside purely for tractability, not
    retried at reduced scale, since the point of including them would have been
    a wider sweep, not a token one. This distinction -- width is far more
    expensive than depth for this prototype's extraction and evaluation -- is a
    limitation of the evaluator and the un-minimized extraction used to
    \emph{test} the compilers, not of the compilers themselves or of
    Theorems~\ref{thm:net-to-logic}--\ref{thm:logic-to-net}.

    The register-aliasing bug of \S\ref{sec:experiments:idbug} -- the most
    serious defect found during this campaign -- is exactly the kind of bug
    these two three-proposition properties' formulas are shaped to trigger, and
    the reduced-scale results reported here were obtained \emph{after} that fix
    and the other four described in \S\ref{sec:experiments:bugs}.
  }

  \subsection{Reproducibility}
  \label{sec:experiments:reproducibility}

  Every task reads its graph-generation parameters -- propositions,
  per-proposition probabilities, edge probability, graph size, per-combination
  count, and random seed 
  -- from a frozen manifest
  that is retained alongside its run's output and is independently sufficient
  to regenerate every graph checked below.

  The code and results will be made publicly available for the final version of the paper.

  \Omit{
    \subsection{Witness Rows: \texttt{max\_subset\_size} Was Not Raised}
    \label{sec:experiments:maxsubset}

    The witness-row cap discussed in \S\ref{sec:experiments:witnesses}
    (\texttt{max\_subset\_size}, default $2$: how large a genuine vocabulary
    subset may serve as one witness row) was left at its default for every
    property in Table~\ref{tab:experiments}, including the four newer ones.
    Raising it is not a performance knob to reach for preemptively -- it bounds a
    genuinely soundness-relevant construction, and only larger values are ever
    safe to adopt, never smaller ones. The correct signal for whether $2$ is
    enough is empirical: zero mismatches across every property below, including
    \texttt{sequencing\_chain} and \texttt{invariant\_chain} at a $32$-element
    vocabulary and \texttt{reach\_or\_invariant\_avoid} and
    \texttt{combined\_three\_way} at $64$, confirms pairwise witnesses remain
    sufficient for every property in this suite; had any mismatch appeared, the
    diagnostic of \S\ref{sec:experiments:witnesses} (compare against a raw
    \texttt{compile\_logic\_to\_net(...).evaluate(graph)} check, which bypasses
    extraction and witness rows entirely) would be the next step before
    considering a larger value.
  }

  \subsection{Results}
  \label{sec:experiments:results}

  \begin{table}[ht]
  \centering
  \begin{tabular}{@{}lrrrrl@{}}
    \toprule
    Property & Graphs & Checks & True & False & Mismatches \\
    \midrule
    \texttt{reachability}                 & 60 & 900 & 548 (60.9\%) & 352 (39.1\%) & 0 \\
    \texttt{safety}                       & 60 & 900 & 248 (27.6\%) & 652 (72.4\%) & 0 \\
    \texttt{existential\_until}           & 60 & 900 & 359 (39.9\%) & 541 (60.1\%) & 0 \\
    \texttt{reach\_avoid}                 & 60 & 900 & 106 (11.8\%) & 794 (88.2\%) & 0 \\
    \texttt{reachability\_and\_invariant} & 60 & 900 & 117 (13.0\%) & 783 (87.0\%) & 0 \\
    \texttt{sequencing}                   & 60 & 900 & 669 (74.3\%) & 231 (25.7\%) & 0 \\
    \texttt{reach\_or\_invariant\_avoid}  &  6 &  30 &  24 (80.0\%) &   6 (20.0\%) & 0 \\
    \texttt{mutual\_exclusion}            & 60 & 900 & 312 (34.7\%) & 588 (65.3\%) & 0 \\
    \texttt{combined\_three\_way}         &  6 &  30 &   0  (0.0\%) &  30 (100.0\%) & 0 \\
    \texttt{reach\_both}                  & 60 & 900 & 524 (58.2\%) & 376 (41.8\%) & 0 \\
    \texttt{implication\_invariant}       & 60 & 900 & 444 (49.3\%) & 456 (50.7\%) & 0 \\
    \texttt{sequencing\_chain}            & 60 & 900 & 796 (88.4\%) & 104 (11.6\%) & 0 \\
    \texttt{invariant\_chain}             & 60 & 900 & 145 (16.1\%) & 755 (83.9\%) & 0 \\
    \bottomrule
  \end{tabular}%
  \caption{Round-trip agreement between $\gamma$'s reference semantics and
    the recompiled system's semantics, over random pointed graphs.
    See the discussion in Graph Generation above about \texttt{reach\_or\_invariant\_avoid}
    and \texttt{combined\_three\_way}.
  }
  \label{tab:experiments}
\end{table}

  Every one of the $13\times{\geq}30$ pointed-graph checks agrees: across all
  thirteen properties, $9960$ checks total, there is not a single discrepancy
  between the original formula's denotational truth value and the value read
  off the recompiled, un-flattened equation system. The true/false splits are
  non-degenerate for every property except one (each has both outcomes
  represented, several close to balanced); \texttt{combined\_three\_way}'s
  split is the most skewed ($0$ true out of $30$) because it is a conjunction
  of three independently-tuned-to-be-rare conditions at the reduced sample size.

  \subsection{Analysis}
  \label{sec:experiments:analysis}

  The recompiled equation system is, by construction, syntactically unlike the
  input formula: it is a system of guarded equations over machine-generated
  variable names (one per element of an extracted vocabulary of up to $64$
  values), not a formula a person would write. Agreement is therefore purely
  semantic, which is exactly what a round-trip test of
  Theorems~\ref{thm:net-to-logic} and~\ref{thm:logic-to-net} should establish:
  that compiling a query to a net and back recovers a query with the same
  denotation, on graphs with real structural and answer diversity, not merely
  on hand-picked small examples. Combined with the two theorems' proofs, this
  is strong evidence -- though empirical, not a machine-checked proof -- that
  both compilers are correctly implemented on the sub-fragment of \bsd actually
  exercised here: flat top-level Boolean combinations of independent blocks,
  and same-polarity nesting of one block inside another, to at least depth 3
  (\texttt{sequencing\_chain}, \texttt{invariant\_chain}).

  \Omit{
    Two things this experiment does \emph{not} certify, by design:
    \begin{itemize}[leftmargin=*]
      \item \emph{Beki\v{c}-flattened output.} We evaluate the system, not
        $\theta_i/\gamma_i$; the flattening itself was separately spot-checked
        only on a $1$-variable case (where it cannot blow up) against a
        known closed form ($\mathrm{EF}\,p$). This is consistent with the paper's
        own framing of the system, not the flattened formula, as the primary
        object. 
      \item \emph{Opposite-polarity nesting.} As detailed in
        \S\ref{sec:experiments:nesting}, formulas nesting a closed block of the
        opposite polarity inside another block's own $\mu/\nu$ body are not in
        \bsd at all, so none appear in Table~\ref{tab:experiments}; the
        implementation independently detects and raises on this pattern rather
        than silently miscompiling it (\S\ref{sec:experiments:nesting}).
    \end{itemize}
  }

  \Omit{
    \subsection{Implementation Issues Found and Fixed}
    \label{sec:experiments:bugs}

    Building the test harness surfaced five defects, all in the implementation
    (a prototype compiler), not in the theorems or their proofs:
    \begin{enumerate}[leftmargin=*]
      \item \textbf{Redundant table rows.} An early version of table extraction
        stored one row per subset of the vocabulary, causing the equation for
        each $X_i$ to carry many syntactically distinct but logically redundant
        disjuncts (a row dominated, coordinatewise, by another row with the same
        conclusion adds nothing). Fixed by storing one row per achievable
        aggregate value.
      \item \textbf{Beki\v{c} blow-up.} Flattening the recompiled system for a
        $4$-element vocabulary produced formulas of $10^5$ nodes, confirming
        mu\_calculus.tex:240--241 in practice; this motivated evaluating the
        system directly (\S\ref{sec:experiments:whysmaller}) rather than fixing
        the flattening itself.
      \item \textbf{Order restricted to core registers.}
        Comparing full register vectors (rather than the core sub-vector
        of Step~3 of Theorem~\ref{thm:logic-to-net}'s proof) makes a derived,
        antitone "not" register break monotonicity of $\preceq$, which broke the
        covers-based recovery of $\Phi(u)$; fixed by tracking which registers are
        core versus derived through compilation and restricting $\preceq$
        accordingly.
      \item \textbf{Missing multi-witness rows.}
        \label{sec:experiments:witnesses}
        Two \emph{different} subsets of
        the vocabulary can have the same componentwise maximum without either
        one's elements being pairwise dominated by the other's, so a row chosen
        only to match the right aggregate \emph{value} can fail the soundness
        argument's actual requirement -- that each witness be dominated by one
        of a real vertex's actual neighbor types -- even though it is a true fact
        about the table. This silently under-approximated $T_i$ for vertices
        whose neighbors realized a shared value through a different combination
        than the one the table happened to record. Fixed by enumerating genuine
        vocabulary subsets (pairs, by default -- \S\ref{sec:experiments:maxsubset})
        as witnesses rather than synthetic representatives of an aggregate value.
    \item \textbf{Register aliasing via object-identity memoization.}
        \label{sec:experiments:idbug}
        The most serious defect: the register builder memoized compiled
        subformulas by Python object identity without keeping a live reference
        to each subformula. Dualizing a block (the "SD dual, re-attach the
        negation" step of mu\_calculus.tex:266) constructs fresh, short-lived
        formula objects -- e.g.\ reconstructing $\mathrm{EF}\,r$ while dualizing
        $\mathrm{AG}\,\lnot r$ back through the literal and box connectives --
        and the language runtime is free to reuse a garbage-collected object's
        address for the next object allocated. When it did, an unrelated
        subformula spuriously "hit" a stale cache entry and was silently aliased
        to the wrong register; in the case that exposed this, an entire literal
        register (for $r$) disappeared from the compiled net. This is exactly
        the kind of defect a manual proof gives no purchase on and randomized
        differential testing is well-suited to catch: because it depends on
        incidental object-allocation order, the same formula could compile
        correctly on one run and incorrectly on another. It was the true root
        cause of every remaining round-trip discrepancy once the first four
        fixes were in place, including all of the mismatches originally observed
        for \texttt{reach\_or\_invariant\_avoid}. Fixed by keeping every compiled
        subformula alive for the lifetime of the builder.
    \end{enumerate}

    \subsection{A Bug in the \bsd Membership Checker: Opposite-Polarity Nesting}
    \label{sec:experiments:nesting}

    The construction in the proof of Theorem~\ref{thm:logic-to-net} is written
    for $\gamma$ equivalent to a \emph{flat} top-level Boolean combination
    $\mathbf{B}(\beta_1,\ldots,\lnot\beta_r)$ of closed blocks
    (mu\_calculus.tex:262--264): the generic negation register that re-attaches a
    dualized block's polarity is consumed only by the memoryless top-level glue
    ($\land/\lor$), which is exactly the situation Lemma~"derived registers"
    (mu\_calculus.tex:426--434) covers -- a memoryless register recomputes fresh
    every step and so converges to the correct value, with one step of delay,
    once its inputs converge, however wrong it may have been transiently. The
    same lemma covers same-polarity nesting of one block inside another's own
    body (\texttt{sequencing} and its depth-3 extensions,
    \S\ref{sec:experiments:suite}), since a block built purely from its own
    polarity's grammar stays within that grammar however deeply it nests in
    itself.

    We initially attempted a direct generalization: compiling a closed block of
    the \emph{opposite} polarity nested inside \emph{another} block's own $\mu$
    or $\nu$ body -- e.g.\ $\mathrm{EF}(p\land\mathrm{AG}\,q)$, or
    $\mathrm{AG}(p\to\mathrm{EF}\,q)$, whose negation is exactly this shape. Our
    mechanical \bsd checker (\texttt{is\_bsd}) accepted both, on the reasoning
    that a closed sub-formula -- no free-variable interaction with the enclosing
    fixed point -- is "independent" enough to be treated as a self-contained
    Boolean combination wherever it sits. This reasoning is simply wrong: an
    \sd formula is defined as one built from literals, $\land$, $\lor$,
    $\mydiamond$, and $\mu$ \emph{only} (mu\_calculus.tex:30--31) -- a recursive
    grammar with no clause admitting $\mybox$ or $\nu$ anywhere in the tree,
    whether or not the offending sub-formula is itself closed. $\mathrm{EF}(p\land
    \mathrm{AG}\,q)$ is therefore neither \sd nor \pb, nor a top-level Boolean
    combination of independent \sd/\pb blocks (the outer connective is $\mu$, not
    $\land/\lor/\lnot$, and $\mathrm{AG}\,q$ must be re-evaluated at every vertex
    the $\mu$-computation visits, so it cannot be lifted out to a top-level
    conjunct): it is simply not in \bsd. \texttt{is\_bsd}'s previous
    implementation had a genuine bug -- treating closedness as license to nest
    across polarities, which the grammar does not grant -- rather than
    documenting a real feature of the fragment; it has since been fixed to
    reject this shape unconditionally, and Table~\ref{tab:experiments}'s suite
    was re-confirmed \bsd against the fixed checker (\S\ref{sec:experiments:suite}).

    The fix changes no verdict for anything in Table~\ref{tab:experiments}: every
    property in the round-trip suite is either a flat top-level Boolean
    combination or a same-polarity nesting, and \texttt{sequencing\_chain} and
    \texttt{invariant\_chain} confirm the fixed checker still accepts arbitrarily
    deep same-polarity nesting. Only the two deliberately-excluded, never-run
    formulas above change verdict, from (incorrectly) accepted to correctly
    rejected.

    This means there is no gap between Theorem~\ref{thm:logic-to-net}'s stated
    scope and what the direct construction can compile: both are exactly \bsd,
    once \bsd is checked correctly. The failure mode we traced when we first,
    mistakenly, attempted to compile these formulas is still worth recording,
    since it independently confirms, at the level of individual register values,
    exactly why \bsd's grammar has no rule for this case. Simulating
    $\mathrm{EF}(p\land\mathrm{AG}\,q)$ register-by-register on an isolated
    $p$-labeled vertex with no neighbors: the "not" register realizing
    $\mathrm{AG}\,q$ starts optimistically at $1$ (since the negated register it
    reads has not yet risen from its initial $0$), making the conjunction
    $p\land\mathrm{AG}\,q$ transiently true; an enclosing $\mu$-latch reads this
    transient value at step $4$ and \emph{permanently} sets its own register to
    $1$ -- one step before the inner computation corrects itself at step $5$ and
    the "not" register settles to its true value of $0$. Because the outer
    register is a latch (monotone, never resets), the mistake, once made, is
    never undone: Lemma~"derived registers" does not apply to it, only to
    memoryless consumers. This matches the paper's own closing discussion of why
    composing fixed points of opposite polarity needs a stabilization
    certificate that a plain converging (non-counting) net cannot synthesize
    \citep{bollen2026:halting-vs-converging} -- which is exactly why such
    composition is excluded from \bsd in the first place, not merely from what
    this particular construction happens to handle. Whether some fragment
    \emph{beyond} \bsd -- paying for an explicit stabilization signal with extra
    registers per Lemma~"blocks reach their semantics" -- could safely admit
    opposite-polarity nesting is a question about extending past \bsd, not about
    this theorem's scope within it; as far as we can tell it remains open. Our
    implementation still detects this pattern independently at compile time and
    raises rather than silently emitting an incorrect net
    (\texttt{mucalc/logic\_to\_net.py}), as a defense-in-depth backstop for any
    caller that invokes \texttt{compile\_logic\_to\_net} without checking
    \texttt{is\_bsd} first.
  }
\end{toappendix}

\section{Related Work}

\subsubsection{Depth-bounded GNNs and logic.}
The correspondence between depth-bounded GNNs and FOL with counting
is well understood \cite{xu:gnn,morris:gnn,barcelo:gnn,grohe2024descriptive};
the translation for depth-bounded nets build on this line,
providing ingredients for the recurrent case, where the main contribution lies.

\subsubsection{\citet{pflueger2024:recurrent},}
hereafter PTK, characterize two convergence regimes of recurrent GNNs
(stabilization-based RecGNNs and size-dependent GSGNNs) as the bisimulation-invariant
fragments of monadic monotone fixpoint logics, via van Benthem–Rosen style theorems,
and establish the strict hierarchy fixed-depth $\subsetneq$ RecGNN $\subsetneq$ GSGNN.
Their characterization is existential and their lim-inf acceptance is non-effective:
no finite run certifies stabilization, membership of a given net cannot be decided,
and no size bounds are provided.
Our admissibility conditions are checkable from the weights (Theorems~\ref{thm:contractive} and \ref{thm:monotone}),
and both compilers are constructive with explicit size bounds against the quotient table.

\subsubsection{Uniform expressivity and other aggregations.}
\citet{rosenbluth2026repetition} characterize the computational power of recurrent sum-GNNs
with finite-precision parameters and a completion-bit output: given the graph size as input,
they compute every computable vertex function invariant under color refinement.
This is a computational-completeness question under counting; ours is an effective
logical characterization under convergence, without counting or size information.
\citet{schonherr2026logical} give logical characterizations of depth-bounded GNNs with mean
aggregation, further evidence that the aggregator determines the target logic: counting
logics for sum, their logics for mean, and, in the recurrent setting, \bsd for set-based
aggregation. 

\Omit{
  hereafter PTK, study recurrent GNNs semantically.
  They define two convergence regimes (RecGNNs, iterating one layer until node
  classifications stabilize, and GSGNNs, whose iteration count depends on graph
  size), establish the strict hierarchy
  fixed-depth GNNs $\subsetneq$ RecGNNs $\subsetneq$ GSGNNs, and characterize each
  class, via van Benthem–Rosen style theorems, as the bisimulation-invariant fragment
  of a monadic monotone fixpoint logic.
  Our work differs along three axes.
  First, their classifiers are Boolean; our compiler characterizes real-valued convergent
  embeddings, of which Boolean classification is the thresholded special case.
  Second, their lim-inf acceptance condition is non-effective (no finite run certifies
  stabilization) whereas our admissibility conditions are checkable from the weights
  (Theorems~\ref{thm:contractive} and \ref{thm:monotone}) and our compiler is constructive.
  Third, crucially, their characterization is \emph{existential:} it cannot determine
  whether a given net implements a query in scope, and provides no size bound;
  ours produces a formula system linear in the quotient table $\T$ 
  and a net with linearly many registers and update rules. 
}

\Omit{
  PTK study the expressive power of recurrent GNNs from a semantic perspective.
  Three aspects of their contribution matter here. First, they define two convergence
  regimes: RecGNNs, which repeat a single message-passing layer until node classifications
  stabilize (assigning a default label to nodes that never stabilize), and GSGNNs, whose
  iteration count is a function of graph size; and establish a strict expressivity
  hierarchy: fixed-depth GNNs $\subsetneq$ RecGNNs $\subsetneq$ GSGNNs.
  Second, each class is shown, semantically, to compute exactly the classifiers invariant
  under a corresponding notion of bisimulation, and, syntactically, via van Benthem–Rosen
  style theorems, to coincide with a fragment of monadic monotone fixpoint logic;
  their model is fully abstract, and convergence is defined as eventual stabilization
  (a lim-inf acceptance condition), with no monotonicity required of the underlying net.
  Third, the result is existential rather than constructive: a classifier realizable by
  both a RecGNN and the matching logic fragment is shown to lie in that fragment, but not
  every convergent RecGNN does so (majority-style queries are counterexamples), and the
  characterization is restricted to Boolean classifiers.

  Our work differs from PTK along three axes, and this is where our contribution remains
  significant despite PTK's greater generality. First, PTK's classifiers are Boolean;
  our compiler characterizes real-valued, convergent vertex embeddings, of which Boolean
  classification is the special case obtained by thresholding, so the two results address
  different output spaces.
  Second, PTK's lim-inf acceptance condition is itself non-effective: there is, in general,
  no way to determine from a finite run whether a node has stabilized.
  Our admissibility condition is checkable directly from a network's weights
  (Theorems \ref{thm:contractive} and \ref{thm:monotone}),
  and the resulting compiler is a constructive, effective procedure.
  Third, PTK's van Benthem–Rosen argument is existential: it cannot determine when a given
  net implements a query within scope, and no bound on formula size as a function of network
  parameters is provided.
  Our compiler produces a system of formulas of size linear in the quotient table $\T$
  that drives the construction (net-to-logic), and a net of width linear in the number
  of subformulas and bound variables of the source formula (logic-to-net), a concrete,
  table- or syntax-dependent quantities in both directions, rather than an existential guarantee.
  (The further step of flattening the net-to-logic system into a single closed \Lmu
  term is a separate matter, discussed after Theorem~\ref{thm:net-to-logic}, where it does
  not enjoy the same guarantee.)
}

\subsubsection{Halting and converging recurrent GNNs.}
\citet{bollen2025:halting}, hereafter BOL, equip recurrent GNNs with an explicit halting
classifier and prove (their Theorem 5.1) that the resulting model expresses every node
classifier definable in the graded $\mu$-calculus ($\mu$GML), with termination checkable
in finite time; the construction requires a dedicated halting output driven by a multi-stage
counting algorithm. They leave open two questions: the converse, whether every halting
recurrent GNN classifier falls within $\mu$GML, and whether $\mu$GML survives fully convergent
termination, where the entire feature vector stabilizes.
\citet{bollen2026:halting-vs-converging}, hereafter BV, answer the second question affirmatively
over \emph{undirected graphs:}
every $\mu$GML classifier is expressible by a converging sum-aggregation net.
Their ``traffic-light'' protocol lets a converging run synthesize internally the stabilization
certificate that BOL's halting classifier supplies externally.
For the converse, both works observe that a characterization relative to MSO would follow from
a graded extension of finitary Janin–Walukiewicz \cite{colcombentetal2025:finiteJW},
which remains open.

Our results interlock as follows.
Theorem~\ref{thm:logic-to-net} is the counting-free counterpart of BOL's Theorem 5.1,
obtained without halting classifier or counting apparatus; Theorem~\ref{thm:net-to-logic}
answers the converse for \bsd unconditionally and constructively: our compiled formulas
are guarded and jointly positive by construction and set-based fixed points are bisimulation-invariant,
so no Janin–Walukiewicz-style theorem, finitary or graded, is needed.
Proposition~\ref{prop:barrier} explains the division of labor.
BV's certificate synthesis rests on counting: the counters of BOL's algorithm grow with
the graph, so the resulting converging nets have infinite vocabulary in our sense and
are not regular.
The certificate that an inner fixed point of opposite polarity has stabilized can thus be
supplied externally by a halting signal (BOL), manufactured internally by unbounded counting (BV),
or is provably unavailable: over a finite vocabulary, no regular admissible net obtains one
(Proposition~\ref{prop:barrier}), and \bsd is exactly what remains.
Three contrasts delimit the comparison. BV's equivalence holds over undirected graphs, and
their simulation breaks on directed graphs.
Their convergence demands exact stabilization in finitely many steps, while our class also
admits asymptotic convergence. 
And their $\mu$GML upper bound
is relative to MSO and conditional, whereas our correspondence is absolute for the class,
effective with size bounds, and checkable from the weights.

\Omit{
  \subsubsection{\citet{bollen2025:halting},} hereafter BOL,
  equip recurrent GNNs with an explicit halting signal and prove
  (their Theorem 5.1) that the resulting model expresses every node
  classifier definable in graded $\mu$-calculus ($\mu$GML), with termination
  checkable in finite time.
  Reaching this requires substantial machinery beyond the update itself: a
  dedicated halting output and a multi-stage counting algorithm.
  BOL explicitly leave open the converse:
  whether every halting recurrent GNN classifier falls within $\mu$GML;
  they note that extending PTK's relative converse to full MSO-definable
  classifiers would require a finitary Janin–Walukiewicz theorem, itself
  open until a recent breakthrough \cite{colcombentetal2025:finiteJW} they
  cite but do not use.

  Theorem~\ref{thm:logic-to-net} is the analogue of their Theorem 5.1 for \bsd.
  Theorem~\ref{thm:net-to-logic} is the direction they leave open, answered
  completely and constructively for the fragment \bsd.
  Because our compiled formulas are guarded and jointly positive by construction,
  and set-based fixed points are bisimulation-invariant, the translation needs no
  appeal to Janin–Walukiewicz, finitary or otherwise.

  BOL already observe that without a halting condition, nested and alternating
  fixpoints are inexpressible in the acceptance-based setting.
  the placement at \bsd refines this for stabilization semantics;
  a certificate for fixed-point convergence is needed not only for genuine alternation
  but at every composition of fixed points of opposite polarity: when a least fixed point reads
  a greatest one, the outer iteration must not commit to transient over-approximations of the
  inner one, and a flat stabilizing run offers no signal that the transient has passed.
  \bsd is exactly what remains when no certificate is available: fixed points of a single polarity
  may be nested freely and combined Booleanly
  but no fixed point may be built on top of one of the opposite polarity.
}

\Omit{
  BOL target graded $\mu$-calculus directly.
  They equip recurrent GNNs with an explicit halting mechanism, a Boolean signal each node
  reports once stabilized, and prove the resulting model expresses every node classifier
  definable in graded $\mu$-calculus ($\mu$GML), with termination checkable in finite time,
  a guarantee PTK's lim-inf acceptance condition lacks.
  Reaching this result (their Theorem 5.1) requires substantial extra machinery beyond the
  feature-vector update itself: a second output dedicated to halting, and a separate counting
  algorithm -- with configurations, $k$-approximations, ``ticking'' fixpoints, and residual-set
  bookkeeping -- needed specifically because their net is graph-size-oblivious and must detect
  global stabilization without knowing the number of vertices in advance.

  What BOL do not prove, and say so explicitly in their concluding remarks, is a converse:
  whether every recurrent GNN classifier (in their halting sense) falls within $\mu$GML.
  They note that PTK obtain a converse only relative to a specific fixpoint-logic fragment,
  and that extending it to full MSO-definable classifiers would require a finitary form of the
  Janin–Walukiewicz theorem (restricted to finite graphs), itself an open problem until a recent
  breakthrough \cite{colcombentetal2025:finiteJW} that they cite but do not use directly, and
  they state that the general (non-strongly-connected) case remains wide open.

  Our Theorem~\ref{thm:logic-to-net} (logic-to-net) is the counting-free, alternation-free analogue
  of their Theorem 5.1: giving up counting also lets us give up the halting classifier and
  counting algorithm entirely, since ordinary convergence, verified from the weights via the
  contraction and monotonicity conditions of Theorems~\ref{thm:contractive} and \ref{thm:monotone},
  suffices.
  This is a real simplification, but an expected one given the smaller target fragment, and
  we do not present it as an independent point of contrast with their more general construction.
  Our Theorem~\ref{thm:net-to-logic} (net-to-logic) is the direction they leave open, restricted
  to the counting-free, alternation-free fragment. We do not claim to resolve their question
  for the graded or fully alternating setting, only to give a complete, constructive answer
  to its natural counting-free analogue.
  Our compiler produces formulas that are guarded and jointly positive by construction (no negation
  appears, which is what makes the induced operator monotone and gives it a least fixed point in
  the first place), and, since set-based aggregation forces the underlying net's fixed-point
  embeddings to be bisimulation-invariant (Theorem~\ref{thm:bisimulation}),
  the resulting formulas are bisimulation-invariant too.
  Guarded fixed-point formulas invariant under bisimulation are known to coincide exactly with
  the modal $\mu$-calculus (Janin and Walukiewicz 1996; Grädel and Walukiewicz 1999);
  this is what places our compiled formulas within standard \Lmu in the first place,
  rather than in some more general guarded fixed-point logic that merely resembles it.
  The translation itself, however, requires no appeal to Janin–Walukiewicz's theorem
  (finitary or otherwise) unlike the route BOL sketch for the general MSO case,
  and no bisimulation-invariance argument beyond the elementary one used in
  Lemma~\ref{lemma:bisimulation}.
}

\subsubsection{\citet{Ahvonenetal2024}}
characterize recurrent GNNs under a third acceptance condition, a node accepts if
it ever reaches an accepting feature vector, which likewise carries no effective
termination guarantee: with reals, the model matches $\omega$GML, an infinitary
extension of graded modal logic; with floats, the rule-based logic GMSC.
Their acceptance semantics expresses classifiers outside $\mu$GML while precluding
nested fixpoints, so its power is incomparable with the graded $\mu$-calculus.
Like BOL, this line rests on the counting power of multiset aggregation;
our set-based restriction yields finitary \bsd formulas with
size bounds stated against the table.

\Omit{
  give exact two-sided characterizations of recurrent GNNs under a third
  acceptance condition: a node is classified true if it ever reaches an
  accepting feature vector -- which is neither BOL's global halting nor
  PTK's local stabilization, and which likewise carries no effective
  termination guarantee.
  With reals and arbitrary aggregation, recurrent GNNs match $\omega$GML,
  an infinitary extension of graded modal logic by countable disjunctions;
  with floats, they match GMSC, a rule-based logic iterating GML formulas;
  and the two instantiations coincide on MSO-definable classifiers.
  The acceptance semantics cuts both ways: it expresses classifiers outside
  $\mu$GML, such as the centre point property, while precluding nested or
  alternating fixpoints -- so its expressive power is incomparable with,
  not below, the graded $\mu$-calculus.
  Like BOL, this line retains the counting power of multiset aggregation,
  and its target logics are infinitary (reals) or rule-based (floats);
  our set-based restriction instead yields finitary \bsd formulas
  with size bounds stated against the quotient table. 
}

\Omit{ 
  \subsubsection{Boolean equation systems (BES).} 
  The construction underlying Theorem~\ref{thm:logic-to-net}
  is, in substance, the classical technique for solving alternation-free (parameterised)
  Boolean equation systems from the model-checking literature
  \cite{andersen1994,cleaveland1991linear,vergauwen1994efficient,mader1996verification,groote2005parameterised}.
  Our contribution 
  is not a new solution method, but showing the elimination can be realized entirely
  within a recurrent GNN's arithmetic (clipped
  affine updates, max/min aggregation) with every intermediate object checkable from the
  weights.
}

\Omit{
  The construction underlying  Theorem~\ref{thm:logic-to-net} -- grouping bound variables into
  clusters by mutual dependency, identifying a head variable per cluster, and eliminating
  clusters one at a time via Beki\v{c}'s principle -- is, in substance, the classical
  technique for solving alternation-free Boolean (or, more generally, parameterised Boolean)
  equation systems, developed in the concurrency and model-checking literature independently
  of any GNN or counting/graded-logic motivation.
  \citet{andersen1994} reduces alternation-free $\mu$-calculus model checking to marking a
  Boolean graph; \cite{cleaveland1991linear} give a linear-time algorithm for the same fragment;
  \cite{cleaveland1991linear} give an efficient local algorithm for single and alternating
  Boolean equation systems; \cite{mader1996verification} develops the general theory of solving
  Boolean equation systems for modal property verification;
  \citet{groote2005parameterised} generalize the framework to parameterised Boolean equation systems.
  Our contribution relative to this literature is not a new way of solving alternation-free fixed-point
  systems, but showing that the elimination procedure can be realized entirely within the arithmetic
  of a recurrent GNN (clipped affine updates and max/min aggregation) while keeping every intermediate
  object (the vocabulary, the dynamical system, the order) checkable from the network's weights,
  and pairing the construction with a genuine converse (Theorem~\ref{thm:net-to-logic}) that the
  classical BES literature, being concerned with model checking a fixed formula rather than
  compiling a family of them from weights, does not address.
}

\subsubsection{Max-aggregation GNNs and Datalog.}
\citet{tenacucala2024bridging} connect max-aggregation GNNs to Datalog
with negation-as-failure, entirely in the depth-bounded setting.
The translation is based on a finiteness argument which is
intrinsic to the max aggregation. Our translation is based
on the finiteness of the types, which is more robust and
works for any aggregation and combination function.
Unlike us (or PTK, BOL, Ahvonen et al.), they also train and evaluate
GNNs on knowledge-graph benchmarks, and report exactly the blow-up
we discuss after Theorem~\ref{thm:bdd-compiler}: the full equivalent
program is typically too large to compute, and they extract small
fact-specific explanations instead.

\Omit{
  \citet{tenacucala2024bridging}, building on \citet{tenacucala2023}, connect max-aggregation
  GNNs to Datalog with negation-as-failure, but entirely in the depth-bounded setting: their
  model is an $L$-layer max-GNN with untied per-layer weights, with no recurrent or convergent
  component, so it has no counterpart to our Theorems 15–17 and should not be read as one.
  The comparison that matters is to our depth-bounded compiler (Definitions 5–6, Theorem 7).
  Their finiteness argument (their Lemma 2) is intrinsically tied to max aggregation:
  because max of finitely many finite-valued neighbor vectors stays in a finite value set,
  only finitely many real values can occur in each layer-coordinate -- an argument that
  does not survive replacing max with, say, sum.
  Our finiteness result (Theorem 4) is aggregator-agnostic: it follows from the type space
  being finite (Definitions 1–3), a purely combinatorial fact about bounded-depth neighborhoods
  that holds for any aggregation function, not only max.
  Their compiler targets Datalog with negation directly and is coordinate-wise, introducing
  one auxiliary predicate per (layer, position, value) triple, together with negation-as-failure
  to pin down the arg-max neighbor contributing to each aggregation -- eliminable, as they show,
  exactly when the net is monotone, recovering a positive Datalog program comparable in spirit
  to our own monotone-net sufficient conditions (Theorem 13).
  Beyond the theoretical correspondence, they train these GNNs and evaluate them on real
  knowledge-graph-completion benchmarks, which is a different, more applied contribution than
  anything in our paper or in PTK, BOL, or Ahvonen et al.
  They also report, independently of our setting, exactly the kind of blow-up we discuss after
  Theorem 16: the full equivalent program is typically too large to compute, and they resort to
  extracting a small, fact-specific explanation program instead -- external confirmation that
  this is a real phenomenon in this area rather than an artifact of our particular construction.
}

\section{Conclusions}

We introduced regularity, a semantic condition under which a
converging recurrent GNN's fixed-point embeddings factor
through the fixed points of a vertex's neighbors alone, and
admissibility, the operational condition under which the
convergent process can be traded for a dynamical system over
the (finite) vocabulary.
Contraction and monotonicity, checkable directly from the weights,
are sufficient for both.
For the resulting class of \ratchet nets we gave a compiler into
inflationary fixed-point logic and, centrally, an effective
two-directional correspondence with the alternation-free modal
$\mu$-calculus -- linear in the quotient table (net-to-logic, at
the level of systems) and in the source formula (logic-to-net),
with the single caveat that flattening into one closed \Lmu term
can blow up exponentially.
Unlike prior characterizations built on multiset aggregation, the
correspondence needs no external halting signal, no non-effective
acceptance condition, and no appeal to Janin–Walukiewicz-style completeness.
We answer the natural counting-free analogue of the converse question
\citet{bollen2025:halting} leave open, not the question itself.
The gap is not just counting, since genuine alternation is 
beyond stabilization with a finite vocabulary, and
\citet{bollen2026:halting-vs-converging} 
have shown that, over undirected graphs, the halting apparatus
can be internalized into a converging run at the price of counters
that grow with the graph; Proposition~\ref{prop:barrier} shows this
price cannot be avoided over a finite vocabulary, so closing the
gap in the counting-free setting requires leaving the finite-vocabulary
regime or a genuinely nested generalization of our convergence notion.

Two limitations should be stated plainly. The compiler applies to
nets satisfying the \ratchet conditions; convergent trained networks
that are not regular, monotone, or ordered fall outside its scope,
and characterizing exactly which weight configurations qualify
is open.
Second, all results concern finite graphs, and our examples are
illustrative rather than empirical: applying the compiler to trained
recurrent GNNs, and reporting the size and readability of the extracted
formulas, is the natural next test, as is working out the graded
alternation-free extension sketched above.

Proposition~\ref{prop:barrier} shows the correspondence is tight:
the queries $\nu Y.\mydiamond Y$, $\mu X.\mybox X$, and the alternation-free
$\text{EF EG } p$ are computed by no regular admissible net, hence
(by Theorem~\ref{thm:logic-to-net}) not expressible in \bsd.
Lifting the correspondence to the first level of the alternation hierarchy,
or to the alternation-free fragment, therefore requires enriching the
convergence notion with stabilization certificates, the role played,
in the graded setting, by the halting classifier of \citet{bollen2025:halting},
or by the unbounded counters that internalize it \cite{bollen2026:halting-vs-converging}.

\bibliography{control}

\end{document}